\documentclass[poms]{poms1}
\usepackage{amsmath}
\usepackage{algorithm}
\usepackage{algpseudocode}
\usepackage{graphicx}
\usepackage{caption}
\usepackage{subcaption}
\usepackage{hhline}
\usepackage{epsfig}%
\usepackage{amsfonts}%
\usepackage{amssymb}
\usepackage{multirow}
\usepackage{enumerate}
\usepackage{siunitx}
\usepackage{flafter}
\usepackage{bbold}
\usepackage{amssymb}
\usepackage{bbm}
\usepackage{natbib}
 \bibpunct[, ]{(}{)}{,}{a}{}{,}%
\usepackage{mathtools}
\usepackage[toc,page]{appendix}
\usepackage[color=yellow]{todonotes}
\usepackage{hyperref}
\usepackage{xcolor}
\hypersetup{colorlinks=true,pdfnewwindow=true,pdfstartview=FitH,%
pdftitle="",pdfauthor="",%
urlcolor=blue!90!red!45!black,citecolor=blue!90!red!45!black,linkcolor=red!90!black}
\urlstyle{sf}% sans for a change

\usepackage{color}
\newcommand{\proj}{\text{Proj}}
 \renewcommand{\tilde}{\widetilde}
 \setlength{\tabcolsep}{0.5em} % for the horizontal padding
% for the vertical padding

\definecolor{mygray}{RGB}{195,195,195}

\OneAndAHalfSpacedXI % current default line spacing

\usepackage{natbib}
 \bibpunct[, ]{(}{)}{,}{a}{}{,}%
 %
 %
 %
 %
 %

%% Setup of theorem styles. Outcomment only one.
%% Preferred default is the first option.
\TheoremsNumberedThrough     % Preferred (Theorem 1, Lemma 1, Theorem 2)
%\TheoremsNumberedByChapter  % (Theorem 1.1, Lema 1.1, Theorem 1.2)
\ECRepeatTheorems

%% Setup of the equation numbering system. Outcomment only one.
%% Preferred default is the first option.
\EquationsNumberedThrough    % Default: (1), (2), ...
%\EquationsNumberedBySection % (1.1), (1.2), ...

% For new submissions, leave this number blank.
% For revisions, input the manuscript number assigned by the on-line
% system along with a suffix ".Rx" where x is the revision number.
% \MANUSCRIPTNO{MS-0001-1922.65}

%%%%%%%%%%%%%%%%
\begin{document}
\RUNTITLE{Pricing with Clustering}

\TITLE{\Large Context-Based Dynamic Pricing 
with Online Clustering}

% \ARTICLEAUTHORS{%
% \AUTHOR{Sentao Miao}
% \AFF{Department of Industrial and Operations Engineering, University of Michigan, Ann Arbor, MI 48109, \EMAIL{semiao@umich.edu} 
% \AUTHOR{Xi Chen}
% \AFF{Leonard N. Stern School of Business, New York University, New York City, NY 10012, \EMAIL{xchen3@stern.nyu.edu}}
% \AUTHOR{Xiuli Chao}
% \AFF{Department of Industrial and Operations Engineering, University of Michigan, Ann Arbor, MI 48109, \EMAIL{xchao@umich.edu}}
% \AUTHOR{Jiaxi Liu,Yidong Zhang}
% \AFF{Alibaba Supply Chain Platform, Hangzhou, China 311121,  \EMAIL{galiliu.ljx@alibaba-inc.com}, \EMAIL{tanfu.zyd@alibaba-inc.com}}
% }
% % Enter all authors

% \vspace{.15in}

% \vspace{.2in}
% } 

\ARTICLEAUTHORS{%
	\AUTHOR{Sentao Miao\thanks{Corresponding Author}}
	\AFF{Desautels Faculty of Management, McGill University, Montreal, QC H3A 1G5, \EMAIL{sentao.miao@mcgill.ca}} %, \URL{}}
	\AUTHOR{Xi Chen}
	\AFF{Leonard N. Stern School of Business, New York University, New York City, NY 10012, \EMAIL{xc13@stern.nyu.edu}}
	\AUTHOR{Xiuli Chao}
	\AFF{Department of Industrial and Operations Engineering, University of Michigan, Ann Arbor, MI, and Supply Chain Optimization Technology (SCOT), Amazon, Seattle, WA,  \EMAIL{xchao@umich.edu}}
	\AUTHOR{Jiaxi Liu,Yidong Zhang}
	\AFF{Alibaba Group, Hangzhou, China 311121,  \EMAIL{galiliu.ljx@alibaba-inc.com}, \EMAIL{tanfu.zyd@alibaba-inc.com}}
	% Enter all authors
} % end of the block

\ABSTRACT{%
% \textbf{Problem definition:} 
We consider a context-based dynamic pricing problem of online products, which have low sales. 
% \textbf{Academic/Practical relevance:} 
Sales data  from Alibaba, a major global online retailer, illustrate the prevalence of low-sale products.  For these products, existing single-product dynamic pricing algorithms do not work well due to insufficient data samples. 
% \textbf{Methodology:} 
To address this challenge, we propose pricing policies
   that concurrently perform clustering over {\it product demand} and set individual pricing 
   decisions on the fly. By clustering data and identifying products that have similar demand patterns, we utilize sales data from  products within the same cluster to improve demand estimation 
   for better pricing decisions.
%   \textbf{Results:} 
  We evaluate the algorithms using regret,
  and the result shows that when product demand functions come from multiple clusters, our algorithms significantly outperform traditional single-product pricing policies. Numerical experiments using a real dataset from Alibaba demonstrate that the proposed policies, compared with several benchmark policies, increase the revenue.  
% \textbf{Managerial implications:} 
The results show that 
   online clustering is an effective approach to tackling dynamic pricing problems associated with low-sale products.
%   Our algorithms were further implemented in a field study at Alibaba with 40 products for 30 consecutive days, and compared to the products which use business-as-usual pricing policy of Alibaba. The results from the field experiment show that  the overall revenue increased by 10.14\%. 
}
\KEYWORDS{dynamic pricing, online clustering, regret analysis, low-sale product} 
 \HISTORY{Submitted July 2021; revised March 2022; accepted May 2022.}
\maketitle

\baselineskip =18.5pt

\section{Introduction}\label{sec:intro}
Over the past several decades, dynamic pricing has been widely adopted by %a number of
industries, such as retail, airlines, and hotels, with great success (see, e.g., \citealt{smith1992yield,cross1995introduction}). 
Dynamic pricing has been recognized as an important lever  %has two benefits. It
not only for balancing supply and demand, but also for  increasing revenue and profit. 
%In a typical dynamic pricing problem, a firm  sets the prices of a product over time in response to (unknown) stochastic demand. 
Recent advances in online retailing and increased 
 availability of online sales data have created opportunities for firms to better use
 customer information to make pricing decisions,
 %dynamic pricing 
 see e.g., the survey paper by \cite{den2015dynamic}. Indeed, the advances in information technology have made the sales data easily accessible, facilitating the estimation of demand and the adjustment of price in real time. 
 Increasing availability of demand data allows for
 more knowledge to be gained about the market and customers, as well as the use of advanced analytics tools to make better pricing decisions.
 
 However,  in practice, there are often products with low sales amount or user views. For these products, %For products with extremely low-sale or user views,
 few available data points exist. For example, 
 \textit{Tmall Supermarket}, a business division of Alibaba, 
 is a {large-scale online %grocery 
 store}. 
  In contrast to a typical consumer-to-consumer (C2C) platform (e.g., Taobao under Alibaba) that has millions of products available, Tmall Supermarket is designed to provide carefully selected high-quality products to customers.   We reviewed the sales data from May to July of 2018 on Tmall Supermarket with  nearly 75,000 products offered during this period of time, and it shows that more than 16,000 products ($21.6\%$ of all products) have a daily average number of unique visitors\footnote{A terminology used within Alibaba to represent a unique user login identification.} less than 10, and more than 10,000 products ($14.3\%$ of all products) have a daily average number of unique visitors at most 1. Although each low-sale product alone may have little impact on the company's revenue, the combined sales of all low-sale products are significant.

Pricing low-sale products is often challenging due to the limited sales records available for demand estimation. In fast-evolving markets (e.g., fashion or online advertising), demand data from the distant past may not be useful for predicting customers' purchasing behavior in the near future.  
%On the other hand, if we treat each customer's view or purchase within a certain period of time as one data sample, a low-sale product has a very limited amount of data (e.g., two or three purchase records). 
Classical statistical estimation theory has shown that data insufficiency leads to large estimation error of the underlying demand, which results in sub-optimal pricing decisions. { To overcome the issue of data insufficiency, some literature in other applications such as customer segmentation cluster the objects (i.e., customers) using their feature data (see e.g., \citealt{su2015method}). However, clustering low-sale products by their features may not always work well. 
{ For the following two reasons, in this paper we choose to define clusters based on demand patterns rather than features for low-sale products: i) Some products with very similar features may have very different demand, e.g., two bags with same/similar appearance may have totally different demand because one belongs to a famous brand and the other is a copycat (only difference is the feature of product brand). ii) Some (seemingly) completely unrelated products may exhibit same or similar demand pattern.  }
% {(For instance, two bags with similar appearance may have totally different demand simply because one belongs to a famous brand and the other is a copycat. Although we can include brand as a feature, we do not know its weight when we apply feature-based clustering so that these two bags may still be clustered together (e.g., if we do a simple $K$-means clustering, differing only in one feature such as brand typically still assigns the two into one cluster). Furthermore, in real application each of the feature may not even have clear meaning (such as after applying feature extraction like principal component analysis) which further complicates the task of feature-based clustering. 
% As a result, in this work we overcome this challenge by clustering via the demand patterns of the low-sale products.)}
}
In fact,  the research on dynamic pricing of products with little sales data remains relatively unexplored. To the best of our knowledge, 
 there exists no dynamic pricing policy  in the literature for low-sale products that admits theoretical performance guarantee. This paper fills the gap by developing adaptive context-based dynamic pricing learning  algorithms for low-sale products, and our results show that the algorithms
 perform well both theoretically and numerically.

\subsection{Contributions of this paper}\label{subsec:contribution}
%{\bf Main contribution.}

Although each low-sale product only has a few sales records, the total number of low-sale products is usually quite large.
In this paper, we address the challenge of pricing low-sale products   using an important idea from machine learning ---  clustering. 
%we note that since there are a large number of low-sale products, it would be helpful to aggregate the information shared by these products to increase the ``effective sample size''. This is the key idea behind an important machine learning technique 
%, to help improve estimation and pricing decision.
 Our starting point is that there are some set of 
 products out there, though we do not know which ones, 
 that share similar underlying demand patterns. For these products, information
 can be extracted from their collective sales data to improve the estimation of their demand function. The problem is formulated as developing adaptive learning algorithms that identify the products exhibiting similar demand patterns, and extract 
 the hidden information from sales data of seemingly unrelated products to  
 improve the pricing decisions of low-sale products and increase revenue. { As we mentioned earlier in the introduction, our method of clustering is based on similar demand patterns instead of similar product features. The reason is that products with similar features may have different demand. }

We consider a generalized linear demand model with arbitrary contextual covariate information about products and develop a learning algorithm
%so-called {\it Clustered Semi-Myopic Pricing} (CSMP) policy 
that integrates product clustering with pricing decisions.
%inspired by semi-myopic policy  proposed by \citet{keskin2014dynamic}. 
{  Our policy consists of two phases. The first phase constructs confidence bounds on the distance between clusters, which enables dynamic clustering without any prior knowledge of the cluster structure. The second phase carefully controls the price variation based on the estimated clusters,  striking a proper balance between price exploration and revenue maximization by exploiting the cluster structure.}
Since the pricing part of the algorithm is
inspired by semi-myopic policy  proposed by \citet{keskin2014dynamic}, we refer to
our algorithm as  the {\it Clustered Semi-Myopic Pricing} (CSMP) policy.
We first establish the theoretical regret bound of the proposed policy. Specifically, when the demand functions of the products belong to $m$ clusters, where $m$ is smaller than the total number of products (denoted by $n$), the performance of our algorithm is better than that of  existing dynamic pricing policies that treat each product separately.  
%More specifically, 
{Let $T$ denote the length of the selling season;}  we show in Theorem \ref{thm:main} that our algorithm achieves the regret of $\tilde O(\sqrt{mT})$, where $\tilde O(\cdot)$ hides the logarithmic terms. This result, when $m$ is much smaller than $n$, is a significant improvement over the regret when applying a single-product pricing policy to individual products, which is typically %leads to a regret 
$\tilde O(\sqrt{nT})$.

% When the demand function is linear in terms of covariates of products and price,   %stochasticity assumption of covariates demand model can be general.
% we extend our result to the setting where the covariates are non-stochastic and even adversarial. 
% In this case, we develop a variant of the CSMP policy (called CSMP-L, where L stands for ``linear''), which  handles a more general class of demand covariates. %satisfying some mild variation conditions. 
% The parameter estimation for the linear demand function is based on a scheme developed 
% %The main idea for this algorithm is based on a parameter estimation scheme for linear demand function proposed
% by \citet{nambiar2016dynamic}, which is used to build separate
% confidence bounds for the parameters of demand covariates and price sensitivity.
% %, leading to a desirable performance bound.
% Similar to the CSMP algorithm, our theoretical analysis in Theorem \ref{thm:LM_main} shows that the CSMP-L algorithm achieves the regret $\tilde O(\sqrt{mT})$.

%We show that when the demand function is linear instead of a generalized linear model, the stochasticity assumption of demand covariates can be relaxed. More specifically, when the demand is linear in terms of demand covariates and price, we are able to propose an efficient algorithm for arbitrary sequence of demand covariates which satisfy some mild variation conditions

We carry out a thorough numerical experiment using both synthetic data and a real dataset from Alibaba consisting of a large number of low-sale products. Several benchmarks, one treats each product separately, one puts all products into a single cluster, and the other one applies a classical clustering method ($K$-means method for illustration), are compared with our algorithms under various scenarios. The numerical results show that our algorithms are effective and their performances are consistent in different scenarios (e.g., with almost static covariates, model misspecification).

% our policies outperform the benchmarks when the number of clusters is smaller than the number of products. Furthermore, even without an explicit clustering structure (i.e., the demands of different products vary), our policies continue to perform better overall, implying their effectiveness in more generalized situations.
%{\red Our policies are further tested in an online field experiment carried out by Alibaba Tmall Supermarket team, which show a significant boost in revenue as compared to Alibaba's original pricing polices.} xc -- I would try to avoid use such word as "supermarket" or "grocery store"

% Our algorithm was tested in a field experiment conducted 
%  %Our algorithms were further implemented in a field study 
% at Alibaba by a Tmall Supermarket team. The algorithm was tested 
% on 40 products for 30 consecutive days. The results from the field experiment show that  the overall revenue was boosted by 10.14\%. 

It is well-known that providing a performance guarantee for a clustering method is challenging 
due to the non-convexity of the loss function (e.g., in $K$-means), which is why there exists no clustering and pricing policy with theoretical guarantees in the existing literature.
This is the first paper to establish the regret bound for a dynamic clustering and pricing policy.
 Instead of adopting an existing clustering algorithm from the machine learning literature (e.g., $K$-means), which usually requires the number of clusters as an input, our algorithms dynamically update the clusters based on the gathered information about customers' purchase behavior. 
In addition to significantly improving the theoretical 
performance as compared to 
classical dynamic pricing algorithms without clustering, our 
algorithms demonstrate excellent 
performance in our simulation study.

%Our result is achieved through a careful construction of confidence bounds which we define for the distances between clusters.
%We
%show that when the demands of products belong to several clusters, our algorithms lead to a significantly improved performance as compared to classical dynamic pricing algorithms without clustering.
% \xnote{reviewer may question that true underlying demand function is inaccurate}
% (this simulation approach is used in other pricing literature such as \citealt{qiang2016dynamic}), and results show that under all scenarios we test, our algorithm outperforms the benchmarks, which either treat each product separately or put all products into a single cluster.  

\subsection{Literature review}\label{subsec:lit_review}
In this subsection, we review some related research from both the revenue management and machine learning literature.

\textbf{Related literature in dynamic pricing. }
% \xnote{I feel the first paragraph of the dynamic pricing needs to be rewritten. Many important literature are missing (e.g., Chen, Jasin, Duenyas, MS 16； Wang, Deng, Ye; OR, closing the gap) and some earlier famous paper Gallego and van Ryzin (97), Bitran and Caldentey, 03, Elmaghraby and Keskinocak, 03. Instead of providing details, we need a better organization into several categories (1) single product pricing vs network revenue management; (2) parametric demand function over non-parametric demand function. }
Due to increasing popularity of online retailing, dynamic pricing has become an active research area in revenue management in the past decade.  We only briefly review a few of the most related works and refer the interested readers to \citet{den2015dynamic,kumar2018research} for comprehensive literature surveys. Earlier work and review of dynamic pricing include \citet{gallego1994optimal,gallego1997multiproduct,bitran2003overview,elmaghraby2003dynamic}.
%in which the retailer dynamically sets prices for single product or multiple products with initial inventory/resource constraint. 
These papers assume that demand information is known to the retailer \textit{{\it a priori}} and either characterize or compute the optimal pricing decisions. In some retailing industries, such as fast fashion, this assumption may not hold due to the quickly changing market environment. As a result, with the recent development of information technology, combining dynamic pricing with demand learning has attracted much interest in research. Depending on the structure of the underlying demand functions, these works can be roughly divided into two categories: 
 parametric demand models (see, e.g., \citealt{carvalho2005learning,bertsimas2006dynamic,besbes2009dynamic,farias2010dynamic,broder2012dynamic,harrison2012bayesian,den2013simultaneously,keskin2014dynamic,wang2021uncertainty,Chen:22:privacy}) and nonparametric demand models (see, e.g., \citealt{araman2009dynamic,wang2014close,lei2014near,chen2015real,besbes2015surprising,cheung2017dynamic,cohen2018dynamic,chen2019network,chen2021differential, Chen:22:Robust}).  The aforementioned papers assume that the price is continuous. Other works consider a discrete set of prices, see, e.g., \cite{ferreira2018online}, and recent studies examine pricing problems with strategic customers (e.g., \citet{Chen:22:Strategic}) or in dynamically changing  environments (e.g.,  \citet{besbes2015non} and \cite{keskin2016chasing})

Dynamic pricing and learning with demand covariates (or contextual information) has received increasing attention in recent years 
 because of its flexibility and clarity in modeling customers and market environment. Research involving this information include, among others, \citet{chen2015statistical,qiang2016dynamic,nambiar2016dynamic,ban2017personalized,lobel2018multidimensional,chen2018nonparametric,javanmard2016dynamic}.
In many online-retailing applications, sellers have access to rich covariate information reflecting the current market situation. Moreover, the covariate information is not static but usually evolves over time. Our paper  incorporates time-evolving covariate information into the demand model. In particular, given the observable covariate information of a product, we assume that the customer decision depends on both the selling price and covariates. 
Although covariates provide richer information for accurate  demand estimation, a demand model that incorporates covariate information involves more parameters to be estimated. {%\red 
Therefore, it requires more data for estimation with the presence of covariates, which poses an additional challenge for low-sale products.}
	
{\bf Related literature in clustering for pricing.} In the literature, there are some interesting two-stage methods that first use historical data to determine the cluster structure of demand functions in an offline manner, and then dynamically make
pricing decisions for another product by learning which cluster its demand  belongs to. \citet{ferreira2015analytics} study a pricing problem with flash sales on the Rue La La platform.
Using historical information and offline optimization, the authors classify the demand of all products into multiple groups, and use demand information for products that did not experience lost sales to estimate demand for 
products that had lost sales. They construct ``demand curves'' on the percentage of total sales with respect to the number of hours after the sales event starts, then classify these curves into four  clusters. For a sold-out product, they check which one of the four curves is the closest to its sales behavior and use that to estimate the lost sales. 
%Note that \citet{ferreira2015analytics} does not use clustering to predict the real demand function (of price and other covariates); instead, theyuse random forest method for demand prediction. Thus, they did not use clustering to help the pricing decision directly.
\citet{cheung2017dynamic} consider the single-product pricing problem, where the demand of the product is assumed to be from one of the $K$ demand functions (called \textit{demand hypothesis} in that paper). Those $K$ demand functions are assumed to be known, and the decision is to choose which  of those functions is the true demand curve of the product. In their field experiment with Groupon, they applied $K$-means clustering to historical demand data to generate those $K$ demand functions offline. That is, clustering is conducted offline first using historical data, then dynamic pricing decisions are made in an online fashion for a new product, assuming that  
its demand is one of the $K$ demand functions.  { Very recently, \citet{keskin2020data} studied personalized pricing in retail electricity market, where they applied a spectral clustering approach to decide customer types based on customers' features. In this paper, as we discussed earlier, instead of using observable features, we cluster products based on estimated demand patterns.}

%{To the best of our knowledge, \citet{keskin2020data} is the only paper in dynamic pricing which  dynamically learns about the clustering structure on the fly. In particular, the authors study personalized pricing in retail electricity market, and they apply certain spectral clustering approach to decide customer types based on customers' features. In this paper, as we discussed earlier, instead of using observable features, we cluster products based on estimated demand patterns. }

{\label{mark:lit_other_OM}
\textbf{Related literature in other operations management problems.}
The method of clustering is quite popular for many operations management problems such as demand forecast for new products and customer segmentation. In the following, we give a brief review of some recent papers on these two topics that are based on data clustering approach. 

Demand forecasting for new products is a prevalent yet challenging problem.
%in the reality. 
Since new products at launch have no historical sales data, a commonly used approach is to borrow data from ``similar old products'' for demand forecasting. To connect the new product with old products, current literature typically use product features. % in various ways. 
For instance, \citet{baardman2017leveraging} assume a demand function which is a weighted sum of unknown functions (each representing a cluster) of product features. While in \citet{ban2018dynamic}, similar products are predefined such that common demand parameters are estimated using sales data of old products. \citet{hu2018forecasting} investigate the effectiveness of clustering based on product category, features, or time series of demand respectively.

% For instance,  \citet{baardman2017leveraging} assume that there are several clusters (i.e., each cluster represents a demand function with product feature as the argument) such that the expected demand for each product is a weighted sum of these functions. By fitting the sales data of old products, the authors estimate the weight for each (estimated) demand function as a mapping from feature vector to real number. Therefore, using the observable feature of the new product, its demand can be forecasted. \citet{ban2018dynamic} consider a procurement problem for new products. By assuming a predefined set of new products, the demand parameters are estimated from historical sales data of the old products. Then the procurement decisions are made using a residual tree method. \citet{hu2018forecasting} predict the life cycle of new products using various clustering methods with old products. In particular, the authors compare the performance of clustering using product category, observable features, and time series of demand respectively. Results show that clustering according to the similarity of time series of demand gives the most accurate demand forecast. 

Customer segmentation is another application of clustering. \citet{jagabathula2018model} assume a general parametric model for customers' features with unknown parameters, and use $K$-means clustering to segment customers. \citet{bernstein2018dynamic} consider the dynamic personalized assortment optimization using clustering of customers. They develop a hierarchical Bayesian model for mapping from customer profiles to segments.

% By embedding the observed features to the parameter space, the authors use classical clustering methods (e.g., $K$-means clustering) to segment customers. \citet{bernstein2018dynamic} consider the dynamic personalized assortment optimization using clustering of customers. Assuming finite number of customer types/profiles and finite number of customer clusters/segments (i.e., customers within the same cluster have the same preferences toward products), the authors estimate a mapping from customer profiles to segments. To this end, they assume a hierarchical Bayesian model for this mapping and use some sampling method to update the posterior distribution for cluster assignment. 
}

{
Compared with these literature, besides a totally different problem setting, our paper is also different in the approach. First, we consider an online clustering approach with provable performance instead of an offline setting as in \citet{baardman2017leveraging,ban2018dynamic,hu2018forecasting,jagabathula2018model}. Second, we know neither the number of clusters (in contrast to \citealt{baardman2017leveraging,bernstein2018dynamic} that assume known number of clusters), nor the set of products in each cluster (as compared with \citealt{ban2018dynamic} who assume known products in each cluster). 
%Third, our clustering approach is based on comparing the difference between estimated parameters as opposed to applying classical clustering method (e.g., $K$-means clustering in \citealt{jagabathula2018model}), and 
Finally, we do not assume any specific probabilistic structure on the demand model and clusters (in contrast with \citealt{bernstein2018dynamic} who 
assign and update the probability for a product to belong to some cluster),
but define clusters using product neighborhood based on their estimated demand  parameters. 
}

\textbf{Related literature in multi-arm bandit problem. } A successful dynamic pricing algorithm requires a careful balancing  between exploration (i.e., learning the underlying demand function) and exploitation (i.e., making the optimal pricing strategy based on the learned information so far). The exploration-exploitation trade-off has been extensively investigated in the multi-armed bandit (MAB) literature; see 
% earlier works by \citet{lai1985asymptotically,auer2002finite,auer2002using} and 
\citet{bubeck2012regret} for a comprehensive literature review. Among the vast MAB literature, there is a line of research on bandit clustering that addresses a different but related problem (see, e.g., \citealt{cesa2013gang,gentile2014online,nguyen2014dynamic,gentile2016context}). The setting is that there is a finite number of arms which belong to several unknown clusters, where  unknown reward functions of arms in each cluster are the same. Under this assumption, the MAB algorithms aim to cluster different arms and learn the reward function for each cluster.% A typical application of bandit clustering is the personalized recommendation, which clusters users into different segments. 
The setting of the bandit-clustering problem is quite different from ours. In the bandit clustering problem, the arms belong to different
clusters and the decision for each period is which arm to play. In our setting, the 
products belong to different clusters and the decision for each period is what prices to charge for all products, and  
we have a \emph{continuum set} of prices to choose from for each product. In addition, 
in contrast to the linear reward in bandit-clustering problem, 
the demand functions in our setting follow a generalized linear model.
As will be seen in Section \ref{sec:policy_and_results}, we design a price perturbation strategy based on the estimated cluster, which is very different from the  algorithms in  bandit-clustering literature. 

{\label{mark:lit_clustering}
\textbf{Related literature in clustering.} 
%In previous literature, we mentioned several papers which apply clustering method.
We end this section by giving a brief overview of clustering methods in the machine learning literature.  To save space, 
%Due to space limitations, 
we only discuss several popular clustering methods, and refer 
the interested reader to \citet{saxena2017review} for 
a recent literature review on the topic.
The first one is called hierarchical clustering \citep{murtagh1983survey}, which iteratively clusters objects (either bottom-up, from a single object to several big clusters; or top-down, from a big cluster to single product). Comparable with hierarchical clustering, another class of clustering method is partitional clustering, in which the objects do not have any hierarchical structure, but rather are grouped into different clusters horizontally. Among these clustering methods, $K$-means clustering is probably the most well-known and most widely applied method (see e.g., \citealt{macqueen1967some}).
% Briefly, every pair of objects is measured by a distance, and by dividing these objects into $K$ mutually exclusive clusters, $K$-means clustering minimizes the  total distance from each object to the center of its cluster. 
Several extensions and modifications of $K$-means clustering method have been
proposed in the literature, e.g., 
 $K$-means++ \citep{arthur2007k} and  fuzzy c-means clustering \citep{dunn1973fuzzy}. Another important class of clustering method is based on graph theory. For instance, the spectral clustering uses graph Laplacian to help determine clusters \citep{shi2000normalized,von2007tutorial}. 
Beside these general methods for clustering, there are many clustering methods for specific problems such as decision tree, neural network, etc. 
%We refer the interested readers to \citet{saxena2017review} for more details. 
It should be noted that nearly all the clustering methods in the literature are based on offline data. This paper, however, integrates clustering into online learning and decision-making process.

% Based on $K$-means clustering, various modified versions of clustering. For instance, the fuzzy c-means clustering \citep{dunn1973fuzzy,bezdek2013pattern} is similar to $K$-means clustering but it allows different clusters to overlap.  Another modified version is the so-called spectral clustering \citep{shi2000normalized,von2007tutorial}. Briefly, objects form a graph (with the weight of each edge represents the distance between two objects); then it generates a matrix with columns of eigenvectors of the Laplacian of this graph. Spectral clustering then applies $K$-means clustering to rows of this matrix. Of course, except these several general methods of clustering, there are many clustering methods for some specific models such as decision tree, neural network, etc. We refer the interested readers to \citet{saxena2017review} for more details.
}

\subsection{Organization of the paper}
The remainder of this paper is organized as follows. In Section \ref{sec:model}, we present the problem formulation. Our main algorithm is presented in Section \ref{sec:policy_and_results} together with the theoretical results for the algorithm performance. 
% We develop another algorithm for the linear demand model in Section \ref{sec:linear_model} 
% when the contextual covariates are non-stochastic or adversarial. 
%with its theoretical performance. 
In Section \ref{sec:simulation}, we report the results of several numerical experiments based on both synthetic data and a real dataset.
% in addition to the findings from a field experiment carried out at Alibaba's Tmall Supermarket. 
We conclude the paper with a discussion about future research in Section \ref{sec:conclusion}. Finally, all the technical proofs are presented in the  supplement. 
%Throughout the paper, we used $``:="$ to denote ``defined as". 

\section{Problem Formulation}\label{sec:model}
We consider a retailer that sells $n$ products, labeled by $i=1,2,\ldots, n$,
with unlimited inventory (e.g., there is an inventory replenishment scheme 
such that products typically do not run out of stock). Following the literature, we denote the set of these products by $[n]$. 
{
We mainly focus on online retailing of low-sale products. These products are typically not offered to customers as a display; 
hence we do not consider substitutability/complementarity of products in our model. Furthermore, these products are usually not recommended by the retailer on the
platform, and instead, customers search to view them online. We let $q_i>0$ denote the percentage of potential customers who are interested in, or view/search for, product $i\in [n]$. 
In this paper, we will treat $q_i$ as the probability an arriving customer views product $i$; in another word, $q_i$ can be considered as the arrival rate of customers viewing product $i$ which are independent from each other.
}

{ Customers arrive sequentially at time $t=1,2,\ldots, T$, and we denote the set of all time indices by $[T]$. For simplicity, we assume without loss of generality that there is exactly one arrival during each period. In each time period $t$, the firm first observes some covariates for each product $i$,
%(pertaining to the product characteristics),
such as product rating, prices of competitors, average sales in past few weeks,
and promotion-related information (e.g., whether the product is currently on sale).
We denote the covariates of product $i$ by $z_{i,t}\in\mathbb{R}^d$, 
where $d$ is the dimension of the covariates that is usually small (as compared to $n$ or $T$). 
The covariates $z_{i,t}$ change over time and satisfy $||z_{i,t}||_2\leq 1$ after normalization. Then, the retailer sets the price $p_{i,t}\in [\underline p,\overline p]$ for each product $i$, where $0\leq\underline p<\overline p<\infty$ (the assumption of the same price range for all products is without loss of generality).   Let $i_t$ denote the product that the customer searches in period $t$ (or customer $t$). 
{% Let $q_i>0$ denote the percentage of 
%potential customers who are interested in product $i\in [n]$, %. Since in this paper we consider the scenario that when the customer searches for/quotes that product (e.g., searches with the keyword ``laptop'' on Amazon.com), its price is not observed yet, this probability $q_i>0$ is independent on the price $p_{i,t}$. 
After observing the price and other details of product $i_t$, 
customer $t$ then decides whether or not to purchase it. 
The sequence of events in period $t$ is summarized as follows: 
\smallskip

\begin{enumerate}
    \item [i)] In time $t$, the retailer observes the covariates $z_{i,t}$ for each product $i\in [n]$, then sets the price $p_{i,t}$ for each $i\in [n]$.
    
    \item [ii)] Customer searches for product $i_t \in [n]$ in period $t$ with probability $q_{i_t}$ independent of others and then observes its price $p_{i_t,t}$.
    
    \item [iii)] The customer decides whether or not to purchase product $i_t$.
\end{enumerate}
}

\medskip

% Suppose that the demand in period $t$ is for product $i_t$, or the customer  clicks/views product $i_t$, with some unknown probability $q_{i_t}>0$.
%, that are i.i.d. for all customers. 
}

The customer's purchasing decision follows a %demand model in the form of
\textit{generalized linear model} (GLM, see e.g., \citealt{mccullagh1989generalized}). 
That is, given price $p_{i_t,t}$ of product $i_t$ at time $t$, the customer's purchase decision is represented  by a Bernoulli random variable $d_{i_t, t}(p_{i_t,t};z_{i_t,t})\in \{0,1\}$, where $d_{i_t, t}(p_{i_t,t};z_{i_t,t})=1$ if the customer purchases product $i_t$ and 0 otherwise. The purchase probability, which is the expectation of $d_{i_t,t}(p_{i_t,t};z_{i_t,t})$, takes the form
\begin{equation}\label{eq:demand_model}
\mathbb{E}[d_{i_t,t}(p_{i_t,t};z_{i_t,t})]=\mu(\alpha_{i_t}'x_{i_t,t}+\beta_{i_t}p_{i_t,t}),
\end{equation}
where $\mu(\cdot)$ is the link function,  $x_{i_t,t}'=(1,z_{i_t,t}')$ is the corresponding extended demand covariate with the 1 in  the first entry used to model the bias term in a GLM model, and the expectation is taken with respect to customer purchasing decision. 
Let $\theta_{i_t}'=(\alpha_{i_t}',\beta_{i_t})$ be the unknown parameter of product $i_t$,
which is assumed to be bounded. That is, $||\theta_i||_2\leq L$ for some constant $L$ for all $i\in [n]$. 

\medskip
\begin{remark}
The commonly used linear and logistic models are special cases of GLM with link function $\mu(x)=x$ and $\mu(x)=\exp{(x)}/(1+\exp(x))$, respectively. The parametric demand model \eqref{eq:demand_model} has been used in a number of papers on pricing with contextual information, see, e.g., \cite{qiang2016dynamic} 
(for a special case of linear demand with $\mu(x)=x$)
and \cite{ban2017personalized}.
%\cite{javanmard2016dynamic}.
\end{remark}

%{
%\begin{remark}\label{remark:sub_comp}
%We emphasize that in this paper, products are not offered to customers as a display; hence we do not consider the substitutability/complementarity of products. Moreover, since instead of picking one product from an assortment recommended to the customers,  they search for each specific product by themselves (which is appropriate as for low-sales/unpopular products, they are rarely recommended), the searching/quoting probability $q_{i_t}$ does not depend on the price and covariates of the product. 
%\end{remark}
%}
\medskip

For convenience and with a slight abuse of notation, we write  
$$p_t:=p_{i_t,t}, \;\; z_t:=z_{i_t,t}, \;\; x_t:=x_{i_t,t},\;\;d_{t}:=d_{i_t,t}, %\mu(p_t)=\mu(\alpha_{i_t}'x_{i_t,t}+\beta_{i_t}p_{i_t,t}), 
$$
where $``:="$ stands for ``defined as". 
Let the feasible sets of $x_t$ and $\theta_i$ be denoted as $\mathcal{X}$ and $\Theta$,
respectively. We further define
\begin{equation}\label{eq:def_T}
\mathcal{T}_{i,t}:=\{s\leq t\;: i_s=i \}
\end{equation}
as the set of time periods before $t$ in which product $i$ is viewed, and $T_{i,t}:=|\mathcal{T}_{i,t}|$ its cardinality. With this demand model, the expected revenue $r_t(p_t)$ of each round $t$ is \begin{eqnarray}
r_t(p_t):=p_t\mu(\alpha_{i_t}'x_{t}+\beta_{i_t}p_{t}).
\label{revenue}
\end{eqnarray}
Note that we have made the dependency of $r_t(p_t)$ on $x_t$ implicit. 

\textbf{The firm's optimization problem and regret.}  The firm's goal is to decide the price $p_t\in [\underline{p},\overline{p}]$ at each time $t$ for each product to maximize the cumulative expected revenue $\sum_{t=1}^{T}\mathbb{E}[r_t(p_t)]$, where the expectation is taken with respect to the randomness of the pricing policy as well as the stream of $i_t$ for $t\in [T]$, and for the
next section, also the stochasticity in contextual covariates $z_t$, $t\in [T]$.  The goal of  maximizing the expected cumulative revenue is equivalent to minimizing the so-called regret, which is defined as the revenue gap as compared with 
the \textit{clairvoyant decision maker} who knew the underlying parameters in the demand model {\it a priori}.  With the known demand model, the optimal price can be computed as
\[
p_t^*=\argmax_{p\in [\underline{p},\overline{p}]} r_t(p),
\]
and the corresponding revenue gap at time $t$ is  $\mathbb{E}[r_t(p_t^*)-r_t(p_t)]$ (the dependency of
$p_t^*$ on $x_t$ is again made implicit). The cumulative regret of a policy 
$\pi$ with prices $\{p_t\}_{t=1}^T\;\;$ is defined by the summation of revenue gaps over the entire time horizon, i.e.,
\begin{equation}\label{eq:def_regret}
R^{\pi}(T):=\sum_{t=1}^{T}\mathbb{E}[r_t(p_t^*)-r_t(p_t)].
\end{equation}

\begin{remark}
For consistency with the online pricing literature, see 
e.g., \cite{chen2015statistical,qiang2016dynamic,ban2017personalized,javanmard2016dynamic}, in this paper we use expected revenue as the objective to maximize. However, we point out that all our analyses and results carry over to the objective of profit maximization. That is, if $c_{t}$ is the cost of the product in round $t$, then the expected profit in (\ref{revenue}) can be replaced by 
$$r_t(p_t)=(p_{t}-c_{t})\mu(\alpha_{i_t}'x_{t}+\beta_{i_t}p_{t}).$$

\end{remark}

\textbf{Cluster of products.}  
Two products $i_1$ and $i_2$ are said to be ``similar'' 
if they have  similar underlying demand functions, i.e., $\theta_{i_1}$ and $\theta_{i_2}$ are close.   In this paper we assume that the $n$ products can be partitioned into $m$ clusters,
$\mathcal{N}_j$ for $j=1,2,\ldots,m$, 
such that for arbitrary two products $i_1$ and $i_2$, we have $\theta_{i_1}=\theta_{i_2}$ if $i_1$ and $i_2$ belong to the same cluster; otherwise, $||\theta_{i_1}-\theta_{i_2}||_2\geq \gamma>0$ for some constant $\gamma$. We refer to this cluster structure as the $\gamma$-gap assumption, which will be relaxed in Remark \ref{cluster} of Section \ref{subsec:theory}.
For convenience, we denote the set of clusters by $[m]$, and by a bit abuse of notation, let $\mathcal{N}_i$ be the cluster to which
product $i$ belongs. 

%after Theorem \ref{thm:main}, we will discuss how to relax this to a weaker assumption.} %that for  $\theta_i$ in the same cluster, they can be slightly different from each other.}.
It is important to note that the number of clusters $m$ and each cluster $\mathcal{N}_j$ are {\it unknown} to the decision maker {\it a priori}. Indeed, in some applications 
such structure may not exist at all. 
If such structure does exist, then our policy can identify such a cluster structure and make use of it to improve the practical performance and the regret bound. 
However, we point out that the cluster structure is not a requirement for the 
 pricing policy to be discussed.
%, which learns the cluster
%structure among the products on the fly, does not require the $n$ products to have a cluster structure.
In other words, our policy reduces to a
standard dynamic pricing algorithm when demand functions of the products are all different (i.e., when $m=n$).

{\label{mark:clustering}
It is also worthwhile to note that our clustering  is based on demand parameters/patterns and {\it not} on product categories or features, since
it is the demand of the products that we want to learn. The clustering 
approach based on demand is prevalent in the literature (besides \citealt{ferreira2015analytics,cheung2017dynamic} and the references therein, we also refer to \citealt{van2012sku} for a comprehensive review). Clustering based on category/feature similarity is useful in some problems (see e.g., \citealt{su2015method} investigate customer segmentation using features of clicking data), but it does 
not apply to our setting because, for instance, products with similar feature for different brands may have very different demand (see our earlier discussion in the introduction). {Moreover, in our model, motivated by Alibaba's business the product feature $x_{i,t}$ is non-stationary, so feature-based clustering can lead to different clusters in different time.
%which may not be reasonable in general. 
}}

\begin{remark}
For its application to the online pricing problem, 
the contextual information in our model is about the product. 
That is, at the beginning of each period, the firm observes the
contextual information about each product, then determines 
the pricing decision for the product, and then the arriving customer makes a purchasing decisions. We point out that our algorithm and result apply equally to
personalized pricing in which the contextual information is about the customer. That is, a customer arrives (e.g., logging on the website) and reveals his/her contextual information, and then the firm makes a pricing decision based on that information. The objective 
is to make personalized pricing decisions to
 maximize total revenue (see e.g., \citealt{ban2017personalized}). 
\end{remark}

\section{Pricing Policy and Main Results}\label{sec:policy_and_results}
In this section we discuss the specifics of the learning algorithm, its theoretical performance, and a sketch of its proof. Specifically, we describe the policy procedure and discuss its intuitions in Section \ref{subsec:description} before presenting its regret and outlining the proof in Section \ref{subsec:theory}. 
% In this section, we assume that the
% covariates $z_1, \ldots, z_t$ are independent, but not necessarily 
% identically distributed, random variables.
%follow are random and follow certain

\subsection{Description of the pricing policy}\label{subsec:description}
 Our policy consists of two phases for each period $t\in [T]$:
 the first phase constructs a  \textit{neighborhood} for each product $i\in [n]$, and the second phase determines its selling price. % of the product $i_t$. 
In the first step, our policy uses \textit{individual data} of each product $i\in[n]$ %until time $t$ 
to estimate parameters $\hat{\theta}_{i,t-1}$. This estimation is used only for construction of the neighborhood $\hat{\mathcal{N}}_{i,t}$ for product $i$. Once the neighborhood is defined, we consider all the products in this neighborhood as in the same cluster and use \textit{clustered data} to estimate the parameter vector $\widetilde{\theta}_{\hat{\mathcal{N}}_{i,t},t-1}$. The latter is used in computing the selling price of product $i$. We refer to Figure \ref{fig:flow} for a flowchart of our policy, and present the detailed procedure in Algorithm \ref{alg:CSMP}.

In the following, we discuss the parameter estimation of GLM demand functions and the construction of a  neighborhood in detail.

\begin{figure}[!h]
\vspace{0.15in}
\centering
\includegraphics[width=0.8\textwidth]{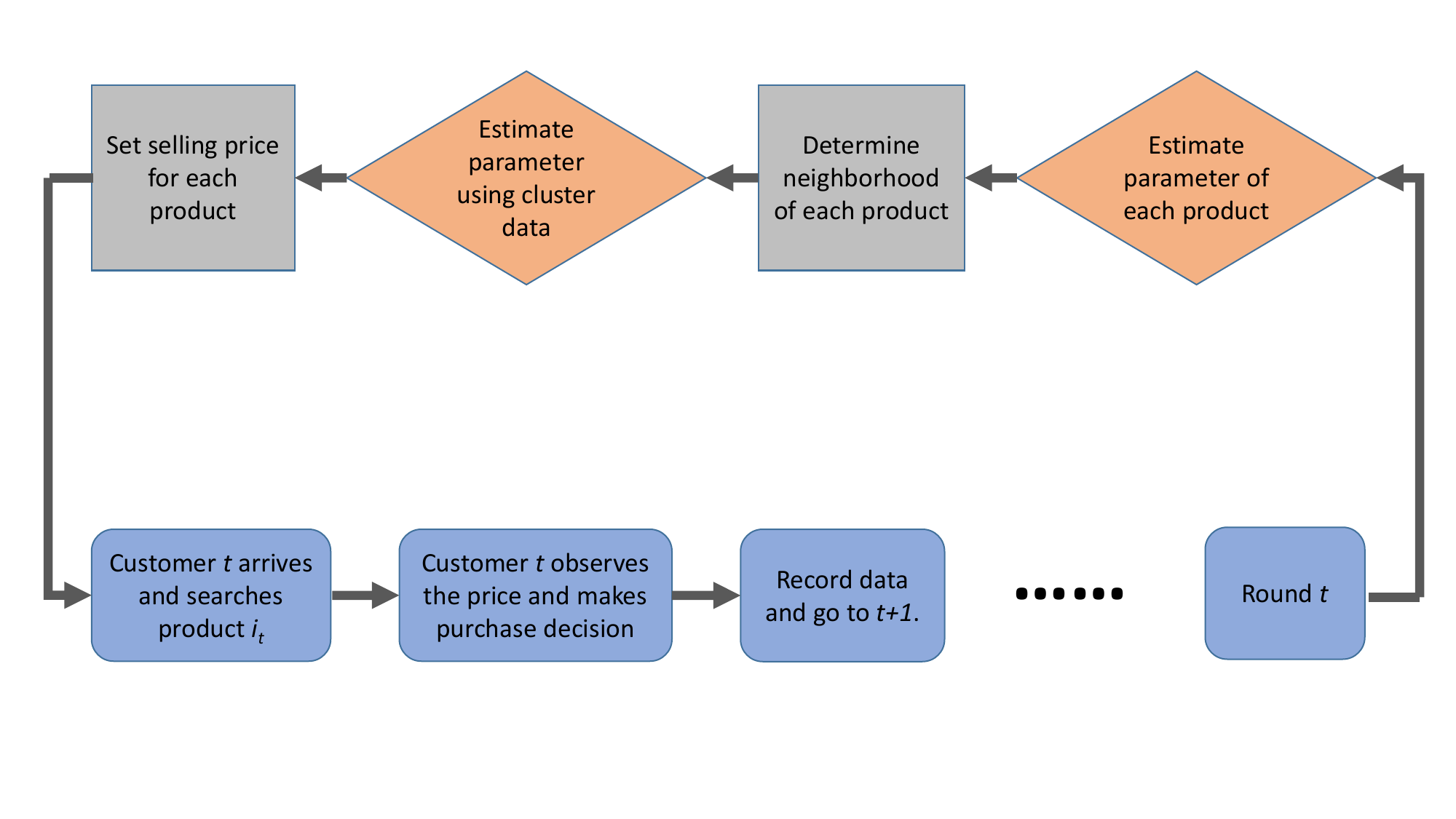}
\caption{Flow chart of the algorithm.}\label{fig:flow}
\vspace{.1in}
\end{figure}

\textbf{Parameter estimation of GLM.} As shown in Figure \ref{fig:flow}, the parameter estimation %given demand realizations 
is an important part of our policy construction. We adopt the classical maximum likelihood estimation (MLE) method for parameter estimation (see \citealt{mccullagh1989generalized}). 
%Although the MLE of a GLM model has been well-studied in the literature , 
For completeness, we briefly describe the MLE method here. Let $u_t:=(x_t',p_t)' \in \mathbb{R}^{d+2}$. The conditional distribution of the demand realization $d_t$, given $u_t$, belongs to the exponential family and can be written as
\begin{equation}\label{eq:exp_family}
\mathbb{P}(d_t|u_t)=\exp\left(\frac{d_t u_t'\theta-m(u_t'\theta) }{g(\eta)}+h(d_t,\eta) \right).
\end{equation}
Here $m(\cdot),\;g(\cdot)$, and $h(\cdot)$ are some specific functions, where $\dot{m}(u_t'\theta)=\mathbb{E}[d_t]=\mu(u_t'\theta)$ depends on $\mu(\cdot)$ and  $h(d_t, \eta)$ is the normalization part, and $\eta$ is some known scale parameter. Suppose that we have $t$ samples $(d_s,p_s)$ for $s=1,2,\ldots,t$, the negative log-likelihood function of $\theta$ under model (\ref{eq:exp_family}) is
\begin{equation}\label{eq:likelihood}
\sum_{s=1}^{t}\left(\frac{m(u_s'\theta)-d_s u_s'\theta }{g(\eta)}+h(d_s,\eta)\right).
\end{equation}
By extracting the terms in \eqref{eq:likelihood} that involves $\theta$,  the maximum likelihood estimator $\hat{\theta}$ is
\begin{equation}\label{eq:pure_MLE}
\hat{\theta}=\argmin_{\theta\in \Theta}\sum_{s=1}^{t}l_s(\theta),  \qquad  l_s(\theta):=m(u_s'\theta)-d_s u_s'\theta. 
\end{equation} 
%where $\Theta$ denotes the domain of the parameter $\theta$. 
Since $\nabla^2 l_s(\theta)=\dot{\mu}(u_s'\theta)u_su_s'$ is positive semi-definite in a standard GLM model (by Assumption A-\ref{assumption:mu} in the next subsection), the optimization problem in (\ref{eq:pure_MLE}) is convex and can be easily solved.

\textbf{Determining the neighborhood of each product.} %{\color{red}
% Assuming that the arriving customer views the product $i_t$ at time $t$, 
The first phase of our policy determines which products to include in the neighborhood of each product $i\in [n]$. We use the term ``neighborhood'' instead of cluster, though closely related, because clusters are usually assumed to be disjoint in the machine learning literature. In contrast, by our definition of neighborhood, some products can belong to different neighborhoods depending on the estimated parameters. To define the neighborhood of $i$, which is denoted by $\hat{\mathcal{N}}_{i,t}$, we first estimate parameter $\hat{\theta}_{i,t-1}$ of each product $i\in[n]$ using their own data, i.e.,  $\hat{\theta}_{i,t-1}$ is the maximum likelihood estimator using data in $\mathcal{T}_{i,t-1}$ defined in \eqref{eq:def_T}. Then, we include a product $i'\in[n]$ in the neighborhood $\hat{\mathcal{N}}_{i,t}$ of $i$ if their estimated parameters are \textit{sufficiently close}, which is defined as $$||\hat{\theta}_{i',t-1}-\hat{\theta}_{i,t-1}||_2\leq B_{i',t-1}+B_{i,t-1},$$ where $B_{i,t-1}$ is a \textit{confidence bound} for product $i$ given by
\begin{equation}\label{eq:confidence_bound}
B_{i,t}:=\frac{\sqrt{c(d+2)\log(1+t)}}{\sqrt{\lambda_{\min}({V}_{i,t})}}.
\end{equation}
Here, ${V}_{i,t}:=I+\sum_{s\in\mathcal{T}_{i,t} }u_su_s'$  is the empirical Fisher's information matrix of product $i\in[n]$ at time $t$ and $c$ is some positive constant, which will be specified in our theory development. Note that, 
by the $\gamma$-gap assumption discussed at the end of Section \ref{sec:model}, 
%we do not need too many samples to achieve a relatively accurate estimation of the true neighborhood of $i\in[n]$. In other words, 
the method will work even when $T_{i,t-1}$ only contains a limited number of sales records. % Here we have a bit abuse of notation. We define $N_j$ as cluster $j\in[m]$, but we also define $N_i$ as $N_j$ such that $i\in N_j$.
%Therefore, even $T_{i,t-1}$ only contains a limited number of sales records, it is still quite possible that $\hat{\mathcal{N}}_{t}=\mathcal{N}_{i_t}$, which will be used in our next step.

\textbf{Setting the price of each product.} Once we define the (estimated) neighborhood $\hat{\mathcal{N}}_{i,t}$ of $i\in[n]$, we can pool the demand data of all products in $\hat{\mathcal{N}}_{i,t}$ to learn the parameter vector. That is,  we let 
%we obtain products which might have the same parameters with $i_t$. Therefore, if we pool the demand data of all products in $\hat{\mathcal{N}}_t$, we can use the clustered data to learn the clustered parameter. To do this, we define
\[
\widetilde{\mathcal{T}}_{\hat{\mathcal{N}}_{i,t},t-1}:=\bigcup_{i'\in\hat{\mathcal{N}}_{i,t}}\mathcal{T}_{i',t-1}  \quad \text{and} \quad \tilde T_{\hat{\mathcal{N}}_{i,t},t-1}:=|\widetilde{\mathcal{T}}_{\hat{\mathcal{N}}_{i,t},t-1}|.
\]
The clustered parameter vector $\tilde{\theta}_{\hat{\mathcal{N}}_{i,t},t-1}$ is the maximum likelihood estimator using data in $\tilde{\mathcal{T}}_{\hat{\mathcal{N}}_{i,t},t-1}$.

%The reason to do the clustered MLE is because pricing requires accurate estimation of parameters, so increasing the number of samples for parameter estimation leads to more accurate pricing decision.

To decide on the price, we first compute $p_{i,t}'$, which is the ``optimal price'' based on the estimated clustered parameters $\tilde{\theta}_{\hat{\mathcal{N}}_{i,t},t-1}$.
Then we restrict $p_{i,t}'$ to the interval $[\underline{p}+|\Delta_{i,t}|,\overline{p}-|\Delta_{i,t}|]$ by the \emph{projection operator}. That is, we  compute 
$$\tilde{p}_{i,t}=\proj_{[\underline{p}+|\Delta_{i,t}|,\overline{p}-|\Delta_{i,t}|]}(p_{i,t}'), \;\;\hbox { where } \;\; \proj_{[a,b]}(x):=\min\{\max\{x,a\},b\}.$$ 
The reasoning for this restriction is that our final price $p_{i,t}$ will be $p_{i,t}=\tilde{p}_{i,t}+\Delta_{i,t}$, and the projection operator forces the final price $p_{i,t}$ to the range $[\underline p,\overline p]$. Here, the price perturbation $\Delta_{i,t}=\pm\Delta_0 \tilde{T}_{\hat{\mathcal{N}}_{i,t},t}^{-1/4}$ takes a positive or a negative value with equal probability, where $\Delta_0$ is a positive constant. We add this price perturbation for the purpose of price exploration. Intuitively, the more price variation we have, the more accurate the parameter estimation will be. However, too much price variation leads to loss of revenue because we deliberately charged a ``wrong'' price. Therefore, it is crucial to find a balance between these two targets by defining an appropriate $\Delta_{i,t}$. 

We note that this pricing scheme belongs to the class of  semi-myopic pricing policies defined in \cite{keskin2014dynamic}. Since our policy combines clustering with semi-myopic pricing, we refer to it as the \textit{Clustered Semi-Myopic Pricing} (CSMP) algorithm.

\medskip

\begin{algorithm}[h]
\caption{The CSMP Algorithm}
\label{alg:CSMP}
\begin{algorithmic}[1]
\Require
$c$, the confidence bound parameter;
$\Delta_0$, price perturbation parameter;

\State \textbf{Step 0. Initialization. } Initialize $\mathcal{T}_{i,0}=\emptyset$ and $V_{i,0}=I$ for all $i\in[n]$. 
Let $t=1$ and go to Step 1.

\For{$t=1,2,\ldots,T$}

\State \textbf{Step 1. Individual Parametric Estimation.} Compute the MLE using individual data
\[
\hat\theta_{i,t-1}=\argmin_{\theta\in \Theta}\sum_{s\in\mathcal{T}_{i,t-1}}l_s(\theta)
\]
for all $i\in[n]$.  Go to Step 2. 

\State \textbf{Step 2. Neighborhood Construction.} Compute the neighborhood of each product $i$ as
\[
\hat{\mathcal{N}}_{i,t}=\{i'\in[n]:\; ||\hat{\theta}_{i',t-1}-\hat{\theta}_{i,t-1}||_2\leq {B}_{i',t-1}+{B}_{i,t-1} \}
\]
where ${B}_{i,t-1}$ is defined in (\ref{eq:confidence_bound}) for each $i\in[n]$. Go to Step 3. 

\State \textbf{Step 3. Clustered Parametric Estimation.} Compute the MLE using clustered data
\[
(\tilde{\alpha}_{\hat{\mathcal{N}}_{i,t},t-1}',\tilde{\beta}_{\hat{\mathcal{N}}_{i,t},t-1})'=\tilde{\theta}_{\hat{\mathcal{N}}_{i,t},t-1}=\argmin_{\theta\in \Theta}\sum_{s\in\tilde{\mathcal{T}}_{\hat{\mathcal{N}}_{i,t},t-1}}l_s(\theta)
\]
for each $i\in [n]$. Go to Step 4.

\State \textbf{Step 4. Pricing.} Compute price for each $i\in[n]$ as 
\[
p_{i,t}'=\argmax_{p\in [\underline{p},\overline{p}]}\;\mu(\alpha_{\hat{\mathcal{N}}_{i,t},t-1}'x_{i,t}+\beta_{\hat{\mathcal{N}}_{i,t},t-1}p)p,
\]
then project to $\tilde{p}_{i,t}=\proj_{[\underline{p}+|\Delta_{i,t}|,\overline{p}-|\Delta_{i,t}|]}(p_{i,t}')$
and offer to the customer price $p_{i,t}=\tilde p_{i,t}+\Delta_{i,t}$ where $\Delta_{i,t}=\pm\Delta_0 \tilde T_{\hat{\mathcal{N}}_{i,t},t}^{-1/4}$ which takes two signs with equal probability. 

\State Then, customer $t$ arrives, searches for product $i_t$, and makes purchasing
decision $d_{i_t,t}(p_{i_t,t}; z_{i_t,t})$.
Update $\mathcal{T}_{i_t,t}=\mathcal{T}_{i_t,t-1}\cup\{t\}$ and $V_{i_t,t}=V_{i_t,t-1}+u_{t}u_t'$.

\EndFor
\end{algorithmic}
\end{algorithm}

%{\blue \textbf{Intuition behind Step 1 and 2.} 
%{\color{red}
We briefly discuss each step of the algorithm and the intuition behind the theoretical performance. For Steps 1 and 2, the main purpose is to identify the correct neighborhood of the product searched in period $t$; i.e.,
$\hat{\mathcal{N}}_{i_t,t}=\mathcal{N}_{i_t}$ with high probability (for brevity of notation, we let $\hat{\mathcal{N}}_{t}:=\hat{\mathcal{N}}_{i_t,t}$). To achieve that, two conditions are necessary. First, the estimator $\hat\theta_{i,t}$ should converge to $\theta_i$ as $t$ grows for all $i\in [n]$. Second, the confidence bound $B_{i,t}$ should converge to $0$ as $t$ grows, such that in Step 2, we are able to identify different neighborhood by the $\gamma$-gap assumption among clusters. To satisfy these conditions, classical statistical learning theory (see e.g., Lemma \ref{lemma:single_est_error} in the supplement) requires the minimum eigenvalue of the empirical Fisher's information matrix $V_{i,t}$ to be sufficiently above zero, or more specifically, $\lambda_{\min}(V_{i,t})\geq \Omega(q_i\sqrt{t})$ (see Lemma \ref{lemma:emp_fisher_info} in the supplement). This requirement is guaranteed by the variation assumption on demand covariates $x_{i,t}$, which will be imposed in Assumption A-\ref{assumption:stochastic} in %Section \ref{subsec:theory}
the next subsection, plus our choice of price perturbation in Step 4. 
%}

%{\blue \textbf{Intuition behind Step 3 and 4.} 

Following the discussion above, when $\hat{\mathcal{N}}_t=\mathcal{N}_{i_t}$ with high probability, we can cluster the data within $\mathcal{N}_{i_t}$ to increase the number of samples for $i_t$. Because of the increased data samples, it is expected that the estimator $\tilde\theta_{{\mathcal{N}}_{i_t},t-1}$ for $\theta_{i_t}$ in Step 3 is more accurate than $\hat\theta_{i,t-1}$. Of course, the estimation accuracy again requires the minimum eigenvalue of the empirical Fisher's information matrix over the clustered set $\tilde{\mathcal T}_{\mathcal{N}_{i_t},t-1}$, i.e., $\lambda_{\min}(I+\sum_{s\in\tilde{\mathcal{T}}_{\mathcal{N}_{i_t},t-1}}u_su_s')$,  to be sufficiently large, which is again guaranteed by stochastic assumption of $z_{i,t}$ 
%(i.e., Assumption A-\ref{assumption:stochastic} in Section \ref{subsec:theory}) 
and the price perturbation in Step 4.
%}

%{\blue Above all,
The design of the CSMP algorithm
 depends critically on two things. First, by taking an appropriate price perturbation in Step 4, we balance the exploration and exploitation. If the perturbation is too much, even though it helps to achieve good parameter estimation, it may lead to loss of revenue (due to purposely charging the wrong price). Second, the sequence of demand covariates $z_{i,t}$ has to satisfy an important variation assumption (Assumption A-\ref{assumption:stochastic}). Later we will see that this variation assumption is weaker than the typical stochastic assumption on $z_{i,t}$ which is commonly seen in the pricing literature with demand covariates (see e.g., \citealt{chen2015statistical,qiang2016dynamic,ban2017personalized,javanmard2016dynamic}). 

\subsection{Theoretical performance of the CSMP algorithm}\label{subsec:theory}

This section presents the regret of the CSMP pricing policy. 
%The main assumption we make in this section is that the covariates 
%are random variables following certain distribution. 
Before  proceeding to the main result, we first make some technical assumptions that will be needed for the theorem.

\medskip
\noindent\textbf{Assumption A:}\label{assumptions}
\begin{enumerate}
\item\label{assumption:uniq_max} The expected revenue function $p\mu(\alpha'x+\beta p)$ has a unique maximizer $p^*(\alpha'x,\beta)\in [\underline{p},\overline{p}]$, which is Lipschitz in $(\alpha'x,\beta)$ with parameter $L_0$ for all $x\in \mathcal{X}$ and $\theta\in\Theta$. Moreover, the unique maximizer is in the interior $(\underline{p},\overline{p})$ for the true $\theta_i$ for all $i\in[n]$ and $x\in\mathcal{X}$. 

\item\label{assumption:mu} $\mu(\cdot)$ is monotonically increasing and twice continuously differentiable in its feasible region. Moreover, for all $x\in\mathcal{X}$, $\theta\in\Theta$ and $p\in [\underline{p},\overline{p}]$, we have that $\dot{\mu}(\alpha'x+\beta p)\in [l_1,L_1]$, and $|\ddot{\mu}(\alpha'x+\beta p)|\leq L_2$ for some positive constants $l_1,L_1,L_2$.  

{
\item\label{assumption:stochastic}{% 
There exist some constants $c_0>0 $ and $t_0>0$, such that for 
any $i\in[n]$ and $t\in [T]$, $\lambda_{\min}(\sum_{s\in\mathcal{T}_{i,t}}x_{i,s}x_{i,s}')\geq c_0 T_{i,t}$ when $T_{i,t}\geq t_0$.}}

% \item\label{assumption:stochastic} For all $i\in[n]$ and $t\in \mathcal{T}_{i,T}$, $z_{i,t}$ is drawn i.i.d. from some unknown distribution $\mathcal{F}_i$ such that $\mathbb{E}[z_{i,t}]=0$ and $\lambda_{\min}(\mathbb{E}[z_{i,t}z_{i,t}'])\geq \lambda_0>0$.
\end{enumerate}
\medskip

{
The first assumption A-1 is a standard regularity condition on expected revenue, which is
prevalent in the pricing literature (see e.g., \citealt{broder2012dynamic}). The second assumption A-2 states that the purchasing probability will increase if and only if the utility $\alpha'x+\beta p$ increases, which is plausible.  One can easily verify that the commonly used demand models, such as linear and logistic demand, satisfy these two assumptions with appropriate choice of $\mathcal{X}$ and $\Theta$. 
{%
The last assumption A-3 is a variation assumption on demand covariates. That is, we require the covariates of each product have sufficient variation. Such variation condition is required
for learning in many pricing papers (see e.g.,  \citealt{qiang2016dynamic,ban2017personalized,nambiar2016dynamic,javanmard2016dynamic}). We emphasize that in the literature, to guarantee this variation, $x_{i,t}$ is often assumed to be stochastic (e.g., independent and identically distributed) and $\lambda_{\min}(\mathbb{E}[x_{i,t}x_{i,t}'])$ is strictly positive. With such stochastic assumption, A-3 is satisfied (with high probability and it is sufficient for our result to hold); hence our assumption A-3 is a weaker assumption than the common stochastic assumption in the literature. In our setting, $x_{i,t}$ can be arbitrary and even adversarial as long as sufficient variation is satisfied, and we manage to prove similar theoretical performance of our algorithm under this relaxed assumption (see Section \ref{ap_sec:proof_main} in the online supplement). We note that this relaxed variation assumption is practically more favorable because in reality, the stochastic assumption may be difficult to justify. For instance, {there can be nearly static and nonstochastic features in $z_{i,t}$ (e.g., indicator of weekend/holiday) such that $\lambda_{\min}(\mathbb{E}[z_{i,t}z_{i,t}'])>0$ is violated. We test our algorithm numerically against these cases in Section \ref{subsec:synthetic_simulation}, and the results show
that our algorithm performs well. One might argue that assumption A-3 may still be violated if some features are completely static (such as color, size, and brand). 
However, such static features can be removed from $z_{i,t}$ since the utility corresponding to these static features can be accounted in the constant term, i.e., 
 the intercept in $\alpha_{i_t}'(1, z_{i,t})$. In other words, if we only include static features of the products, the context-based pricing problem reduces to the one without any context.} }}

Under Assumption A, we have the following theoretical result on the regret
of the CSMP algorithm.

\begin{theorem}\label{thm:main}
Let input parameter $c\geq 20/l_1^2$; the expected
regret of algorithm CSMP is 
\begin{equation}\label{eq:detailed_regret}
R(T)=O\left(\frac{d^2 \log^2 (dT)}{\min_{i\in[n]}q_i^2 }+d\sqrt{mT}
\log{T}\right).
\end{equation}
In particular, if $q_i=\Theta(1/n)$ for all $i\in[n]$
and we hide the logarithmic terms, %let $\min_{i\in[n]}q_i=\Theta(1/n)$ (e.g.
 then when $T\gg n$, the expected regret is at most
%\begin{equation}\label{eq:succ_regret}
$\tilde{O}(d\sqrt{mT})$.
%\end{equation}
\end{theorem}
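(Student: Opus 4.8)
The plan is to follow the standard template for the regret analysis of semi-myopic pricing, adapted to the clustering mechanism, organized around a single regret decomposition and one high-probability ``good event''. First I would reduce the per-period regret to a squared price error: by Assumptions A-1 and A-2 the revenue $r_t$ is twice differentiable with a unique interior maximizer $p_t^*$ and bounded second derivative, so a Taylor expansion gives $r_t(p_t^*)-r_t(p_t)=O\big((p_t-p_t^*)^2\big)$. Writing $p_t=\tilde p_{i_t,t}+\Delta_{i_t,t}$, using the Lipschitz property of $p^*(\cdot,\cdot)$ from A-1 (together with $\|x_t\|_2\le\sqrt2$ to pass from $\|\tilde\theta-\theta\|_2$ to the argument of $p^*$), the non-expansiveness of the projection operator, and the fact that $p_t^*$ lies in the interior so it is unaffected by the projection once $|\Delta_{i_t,t}|$ is small, I would obtain $(p_t-p_t^*)^2\lesssim \Delta_{i_t,t}^2+\|\tilde\theta_{\hat{\mathcal N}_t,t-1}-\theta_{i_t}\|_2^2$. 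Hence the regret is controlled by the cumulative perturbation energy plus the cumulative squared estimation error of the clustered MLE.

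Second, I would set up the good event on which (i) every individual MLE concentrates, $\|\hat\theta_{i,t-1}-\theta_i\|_2\le B_{i,t-1}$ — the classical GLM concentration bound, whose validity for the stated confidence radius requires the choice $c\ge 20/l_1^2$ and whose proof uses A-2 to control the curvature of the negative log-likelihood — and (ii) the empirical Fisher information grows, $\lambda_{\min}(V_{i,t})=\Omega(q_i\sqrt t)$ for each $i$. Step (ii) is where the two exploration sources enter: Assumption A-3 forces the covariate block of $V_{i,t}$ to have minimum eigenvalue $\Omega(q_i t)$ via a matrix martingale concentration, while the perturbation $\Delta_{i,t}=\pm\Delta_0\tilde T^{-1/4}$ injects conditional variance $\Omega(\Delta_0^2\tilde T^{-1/2})$ in the price direction orthogonal to the covariates, and summing these contributions over the visits of product $i$ yields $\Omega(q_i\sqrt t)$. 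Combining (i) and (ii) with the $\gamma$-gap assumption, once $t$ exceeds a threshold $\tau_i=\Theta\big(d^2\log^2(dT)/(q_i^2\gamma^4)\big)$ all confidence radii satisfy $B_{i,t-1}<\gamma/4$, so by the triangle inequality the neighborhood of the searched product is identified exactly, $\hat{\mathcal N}_t=\mathcal N_{i_t}$. Applying the same eigenvalue machinery to the pooled design matrix over $\mathcal N_{i_t}$ then gives $\|\tilde\theta_{\mathcal N_{i_t},t-1}-\theta_{i_t}\|_2^2=O\big(d\log t/\sqrt{\tilde T_{\mathcal N_{i_t},t-1}}\big)$ on the good event.

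Third, I would assemble the regret. The complement of the good event — incorrect clustering before warm-up, or the rare failure of the concentration/eigenvalue bounds — contributes $O\big(\max_i\tau_i\big)=O\big(d^2\log^2(dT)/\min_i q_i^2\big)$, since per-period regret is trivially $O(\bar p)$ and, with $c\ge 20/l_1^2$, the failure probabilities are summable over $t$ and over the $n$ products by a union bound. On the good event and for $t$ beyond warm-up I would sum $\sum_t\mathbb E\big[\Delta_{i_t,t}^2+\|\tilde\theta_{\mathcal N_{i_t},t-1}-\theta_{i_t}\|_2^2\big]$ by grouping periods according to the cluster of $i_t$: the $k$-th observation pooled into cluster $j$ has $\tilde T\approx k$, and there are $\approx Q_jT$ of them with $Q_j:=\sum_{i\in\mathcal N_j}q_i$, so cluster $j$ contributes $O\big(d\log T\sum_{k\le Q_jT}k^{-1/2}\big)=O\big(d\log T\sqrt{Q_jT}\big)$. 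Summing over the $m$ clusters and applying Cauchy--Schwarz with $\sum_{j=1}^m Q_j=\sum_i q_i\le 1$ gives $\sum_j\sqrt{Q_jT}\le\sqrt{mT}$, yielding the $O(d\sqrt{mT}\log T)$ term; adding the warm-up term gives \eqref{eq:detailed_regret}, and substituting $q_i=\Theta(1/n)$ and $T\gg n$ collapses it to $\tilde O(d\sqrt{mT})$.

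I expect the main obstacle to be the second step, namely proving that clustering is correct with high probability, and in particular the eigenvalue lower bound $\lambda_{\min}(V_{i,t})=\Omega(q_i\sqrt t)$ and its analogue for the pooled matrices. The delicate points are that the perturbation magnitude itself depends on the random, data-dependent count $\tilde T_{\hat{\mathcal N}_{i,t},t}$, that the prices are correlated with the covariates through the myopic optimization so the exploration variance in the price direction must be isolated by a conditional second-moment argument combined with a matrix Freedman-type inequality, and that during warm-up the identity of $\hat{\mathcal N}_{i,t}$ is itself still random. Breaking the apparent circularity between ``clustering is eventually correct'' and ``estimates are accurate because clustering is correct'' — by peeling off the warm-up horizon and inducting on $t$, or by conditioning on the first time all confidence radii drop below $\gamma/4$ — is the crux of the argument; everything after that reduces to the eigenvalue bounds, the GLM concentration lemma, and the Cauchy--Schwarz bookkeeping above.
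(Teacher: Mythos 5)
Your proposal follows essentially the same route as the paper's proof: the Taylor-expansion reduction to $\Delta_{i_t,t}^2+\|\tilde\theta_{\hat{\mathcal N}_t,t-1}-\theta_{i_t}\|_2^2$, a good event combining GLM concentration, the matrix-Chernoff lower bound $\lambda_{\min}(V_{i,t})=\Omega(q_i\Delta_0^2\sqrt t)$ driven by A-3 plus the price perturbation, the $\gamma/4$ triangle-inequality argument giving $\hat{\mathcal N}_t=\mathcal N_{i_t}$ after a warm-up of order $d^2\log^2(dT)/\min_i q_i^2$, and the cluster-wise summation with Cauchy--Schwarz yielding $O(d\sqrt{mT}\log T)$. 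The steps you flag as delicate (data-dependent perturbation magnitude, pooled eigenvalue bound, breaking the circularity by peeling off the warm-up horizon) are exactly the ones the paper handles in its supplementary lemmas, so the plan is correct and matches the published argument.
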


{ Here we briefly discuss the very high-level ideas of proving Theorem \ref{thm:main}, with the technical details deferred to the online supplement. One key step of proving the main part of the regret $\tilde O(d\sqrt{mT})$, as opposed to the typical regret $\tilde O(d\sqrt{nT})$ for single-product pricing without clustering, is that we are able to identify each neighborhood correctly. This is achieved by our technique of price perturbation which guarantees that our estimated parameter $\hat\theta_{i,t}$ converges to the true $\theta_i$ in a sufficiently fast rate. As a result, when $t\geq \bar t=\Theta({d^2 \log^2 (dT)}/({\min_{i\in[n]}q_i^2 }))$ (this is why we have regret $O({d^2 \log^2 (dT)}/({\min_{i\in[n]}q_i^2 }))$, which is incurred before $\bar t$), all neighborhoods are identified correctly with high probability. Conditioned on this, our problem is basically reduced to the pricing of $m$ ``products'' which gives us the regret $\tilde O(d\sqrt{mT})$. 

Although the key ideas are quite simple, the proofs are technical and involved which differ from the existing literature. For instance, compared with the bandit with clustering literature (see e.g., \citealt{cesa2013gang,gentile2014online,nguyen2014dynamic,gentile2016context}), our action set (prices) is continuous instead of finite and we have to exploit unique structure of revenue function (e.g., Assumption A) by using price perturbation techniques. Moreover, we do not assume stochasticity of context $x_{i,t}$ (e.g., contexts are i.i.d., which is also assumed in contextual pricing literature such as \citealt{chen2015statistical,qiang2016dynamic,nambiar2016dynamic,ban2017personalized} besides the bandit literature mentioned earlier) but only a weaker variation assumption (see Assumption A-3). This relaxation requires us to use a different matrix analysis technique instead of directly applying matrix concentration inequalities (see Lemma \ref{lemma:emp_fisher_info} in the online supplement). 

}

We have a number of remarks about the CSMP algorithm and the result on regret, following in order.

\medskip
\begin{remark}
\textbf{(Comparison with single-product pricing)} Our pricing policy achieves the regret $\tilde O(d\sqrt{mT})$. A question arises as to how it compares with the baseline single-product pricing algorithm that treats each product separately. \citet{ban2017personalized} consider a single-product pricing problem with demand covariates. 
% \xnote{I do not quite follow that why their special price sensitivity model still applies to our setting? So I remove this sentence `` the price sensitivity in their paper is given by $\beta_i'x_t$; since they allow the sparsity of parameter $\beta_i$, their results apply to our setting as well.''.}
According to Theorem 2 in \citet{ban2017personalized}, their algorithm, when applied to each product $i$ in our setting separately, achieves the regret $\tilde O(d\sqrt{T_{i,T}})$.  Therefore, adding together all products $i\in[n]$, the upper bound of the total regret is $\tilde O(d\sqrt{nT})$. When the number of clusters $m$ is much smaller than $n$, the regret $\tilde O(d\sqrt{mT})$ of CSMP significantly improves the total regret obtained by treating each product separately. 
\end{remark}

\medskip
\begin{remark}
\textbf{(Lower bound of regret)} 
To obtain a lower bound for the regret of our problem, we consider a special case of our model in which the decision maker knows the underlying true clusters $\mathcal{N}_j$. Since this is a special case of our problem (which is equivalent to single-product pricing for each cluster $\mathcal{N}_j$), the regret lower bound of this problem applies to ours as well. 
Theorem 1 in \citet{ban2017personalized} shows that the regret lower bound for each cluster $j$ has to be at least $\Omega\left(d\sqrt{\tilde{T}_{j,t}}\right)$. In the case that $\tilde q_j=1/m$ for all $j\in[m]$, it can be derived that the regret lower bound for all clusters has to be at least $\Omega(d\sqrt{mT})$. {This implies that the regret of the proposed CSMP policy is optimal up to a logarithmic factor.}
\end{remark}

\medskip
\begin{remark}
\textbf{(Improving the regret for large $n$)} When $n$ is large, the first term in our regret bound $O(d^2 \log^2 (dT)/\min_{i\in[n]}q_i^2 )$ will also become large. For instance, if $q_i=O(1/n)$ for all $i\in[n]$, then this term becomes $O(d^2n^2\log^2(dT))$. One way to improve the regret, although it requires prior knowledge of $\gamma$, is to conduct more price exploration during the early stages. Specifically, if the confidence bound $B_{i,t-1}$ of product $i$ is larger than $\gamma/4$, in Step 4, we let the price perturbation $\Delta_{i,t}$ be $\pm \Delta_0$ to introduce sufficient price variation (otherwise let $\Delta_t$ be the same as in the original algorithm CSMP). Following a similar argument as in Lemma \ref{lemma:emp_fisher_info} in the supplement, it roughly takes $O(d \log (dT)/\min_{i\in[n]}q_i )$  time periods before all $B_{i,t-1}<\gamma/4$, so the same proof used in
Theorem \ref{thm:main} appplies. Therefore, when $q_i=O(1/n)$ for all $i\in[n]$, the final regret upper bound is $O(dn\log (dT)+d\log T\sqrt{mT})$. 
\end{remark}

\medskip
{
\begin{remark}
\label{cluster}
\textbf{(Relaxing the cluster assumption)} Our theoretical development assumes that products within the same cluster have exactly the same parameters $\theta_i$. 
%We adopt this clustering assumption mainly for the ease of presentation and t
This assumption can be relaxed as follows. Without loss of generality, let us assume all products have different $\theta_i$. Define two products $i_1,i_2$ as in the same cluster if they satisfy $||\theta_{i_1}-\theta_{i_2}||_2\leq \gamma_0$ for some positive constant $\gamma_0$ with $\gamma_0<\gamma/3$. 
Our policy in Algorithm \ref{alg:CSMP} can adapt to this case by modifying Step 2 to 
\[
\hat{\mathcal{N}}_{i,t}=\{i'\in[n]:\; ||\hat{\theta}_{i',t-1}-\hat{\theta}_{i,t-1}||_2\leq 2{B}_{i',t-1}+2{B}_{i,t-1} \}.
\]
The reason we make this modification is that when $t$ is within certain range, $||\hat{\theta}_{i',t-1}-\hat{\theta}_{i,t-1}||_2> 2{B}_{i',t-1}+2{B}_{i,t-1}$ implies that, with high probability, $||{\theta}_{i',t-1}-{\theta}_{i,t-1}||_2> {B}_{i',t-1}+{B}_{i,t-1}>\gamma_0$. This shows we can correctly differentiate the products which are not in the same cluster.
As a result, under this modification, our algorithm CSMP has the following performance. 
If { $T\leq O(1/(\max_i q_i^2\gamma_0^4))$, the regret is at most $\tilde O(d\sqrt{mT}+\min\{\gamma_0^{2}\sum_j\tilde q_j^2 T^2,T\})$.} Thus when $T$ is small, the 
main difference with Theorem \ref{thm:main} is the extra term $O(\min\{\gamma_0^{2}\sum_j\tilde q_j^2 T^2,T\})$ due to relaxation of clusters, and if $\gamma_0$ is small, we still have the overall regret better than $\tilde O(d\sqrt{nT})$ without any clustering. On the other hand, for large $T$ and in particular, when $T\to\infty$, we show that the regret will approach $\tilde O(d\sqrt{nT})$. Intuitively, this is because when the data is no longer scarce, our clustering actually identifies each product $i$ as its own cluster, reducing to the single-product pricing algorithm without clustering. For detailed analysis of this relaxation, we refer the interested readers to Section \ref{ap_sec:relax_cluster} in the online supplement. 

% If {\color{red} $T\leq O(1/(\max_i q_i^2\gamma_0^4))$, the regret is at most $\tilde O(d\sqrt{mT}+\min\{\gamma_0^{2}\sum_j\tilde q_j^2 T^2,T\})$.} Compared with Theorem \ref{thm:main}, the main difference is the extra regret $O(\min\{\gamma_0^{2}\sum_j\tilde q_j^2 T^2,T\})$ due to difference within each cluster. Obviously, if $\gamma_0$ is small, we still have the overall regret better than $\tilde O(d\sqrt{nT})$ without any clustering.  On the other hand, one may ask what will happen if $T\rightarrow\infty$, as this regret is linear in $T$.{\color{red}
% (xc -- which regret is is linear in $T$? note that the regret I colored red above is for the case when $T\leq O(1/(\max_i q_i^2\gamma_0^4))$???? Note that the formula for large $T$ may take completely differetn formula than small $T$!!)
% }
% For this, we can actually show that as $T\rightarrow\infty$, the regret becomes $\tilde O(d\sqrt{nT})$. Intuitively, this is because when the data is no longer scarce, our clustering actually identifies each product $i$ as its own cluster, reducing to the single-product pricing algorithm without clustering. For detailed analysis of this relaxation, we refer the interested readers to Section \ref{ap_sec:relax_cluster} in the online supplement. 

% {\color{red} xc -- check if the following reflects what you want to say (starting frm If $T\le...$)}

One relevant stream of literature to this setting is the so-called bandit with model mis-specification, which assumes that the reward function is mis-specified with error $\varepsilon$ (see e.g., \citealt{crammer2013multiclass,ghosh2017misspecified,lattimore2020learning,foster2020beyond,foster2020adapting}), and they show that the part of regret related to $\varepsilon$ has to be $\Omega(\varepsilon T)$. Our method in this setting is different in that we only take advantage of the $\gamma_0$-different parameters ($\gamma_0$ is typically very small) in the same cluster when data is scarce (i.e., $T$ is small). As more data is gathered, the algorithm naturally converges to single-product pricing, making our regret in the long-run still being sublinear in $T$.

\end{remark}
}

\section{Simulation Results with Synthetic and Real Data}\label{sec:simulation}
{ This section provides the simulation experiment results for algorithm CSMP. First, we conduct a simulation study using synthetic data in Section \ref{subsec:synthetic_simulation} to illustrate the effectiveness and robustness of our algorithms against several benchmark approaches. Second, the simulation results using a real dataset from Alibaba are provided in Section \ref{subsec:real_simulation}. 
% Third, Section \ref{subsec:field} reports the results from a field experiment at Alibaba. 
Finally, we summarize all numerical experiment results in Section \ref{subsec:data_insight}.}

{
\subsection{Simulation using synthetic data}\label{subsec:synthetic_simulation}
{In this section, we demonstrate the effectiveness of our algorithms using some synthetic data simulation. We first show the performance of CSMP against several benchmark algorithms. Then, several robustness tests are conducted for CSMP. 
The first test is for the case when clustering assumption is violated (i.e., parameters within the same cluster are slightly different).
The second test is when the
demand covariates $z_{i,t}$ contain some features that change slowly in a deterministic manner.
Finally, we test CSMP with a misspecified demand model.
 }

We shall compare the performance of our algorithms with the following benchmarks:
\begin{itemize}
\item The Semi-Myopic Pricing (SMP) algorithm, which treats each product independently (IND), and we refer to it as SMP-IND.

\item The Semi-Myopic Pricing (SMP) algorithm, which treats all products as one (ONE) single cluster, and we refer to the algorithm as SMP-ONE.

\item The Clustered Semi-Myopic Pricing with $K$-means Clustering (CSMP-KMeans), which uses $K$-means clustering for product clustering in Step 2 of CSMP.
\end{itemize}
The first two benchmarks are natural special cases of our algorithm. Algorithm SMP-IND skips the clustering step in our algorithm and always sets the neighborhood as $\hat{\mathcal{N}}_t=\{i_t\}$; while SMP-ONE keeps $\hat{\mathcal{N}}_t=\mathcal{N}$ for all $t\in [T]$. The last benchmark is to test the effectiveness of other classical clustering approach for our setting, in which we choose $K$-means clustering as an illustrative example because of its popularity.

\subsubsection{Logistic demand with clusters } For illustration of a GLM demand, we simulate using a logistic function. We set the time horizon $T=30,000$, the searching probability $q_i=1/n$ for all $i\in [n]$ where $n=100$, and the price range $\underline p =0$ and $\overline p=10$. In this study, it is assumed that all $n=100$ products have $m=10$ clusters (with products randomly assigned to clusters). Within a cluster $j$, each entry in $\alpha_j$ is generated uniformly from $[-L/\sqrt{d+2},L/\sqrt{d+2}]$ with $L=10$, and $\beta_j$ is generated uniformly from $[-L/\sqrt{d+2},0)$ (to guarantee that $||\theta_i||_2\leq L$). For demand covariates,
each feature in $z_{i,t}$, with dimension $d=5$, is generated independently and uniformly from $[-1/\sqrt{d},1/\sqrt{d}]$ (to guarantee that  $||z_{i,t}||_2\leq 1$). For the parameters in the algorithms, we let $\Delta_0 = 1$; and for the confidence bound $B_{i,t}=\sqrt{c(d+2)\log(1+t)/\lambda_{\min}(V_{i,t})}$, we first let $c=0.8$ and then test other values of $c$ for sensitivity analysis. For the benchmark CSMP-KMeans, we need to specify the number of clusters $K$; since the true number of clusters $m$ is not known \textit{a priori}, we test different values of $K$ in $\{5,10,20,30\}$. Note that when $K=10$, the performance of CSMP-KMeans can be considered as an oracle since it correctly specifies the true number of product clusters.

To evaluate the performance of algorithms,  we adopt both the cumulative regret in \eqref{eq:def_regret} and the percentage revenue loss defined by
\begin{equation}\label{eq:def_perc_rev_loss}
    L^{\pi}(T)=\frac{R^{\pi}(T)}{\sum_{t=1}^{T}\mathbb{E}[r_t(p_t^*)]},
\end{equation}
which  measures the percentage of revenue loss with respect to the optimal revenue. Obviously, the percentage revenue loss and cumulative regret are equivalent, and a better policy leads to a smaller regret and a smaller percentage revenue loss.

For each experiment, we conduct 30 independent runs and take their average as the output. 
We also output the standard deviation of percentage revenue loss for all policies in Table \ref{tab:std}. It can be seen that our policy CSMP has quite small standard deviation, so we will neglect standard deviation results in other experiments.

{%\blue 
We recognize that a more appropriate measure for evaluating an algorithm is the regret (and percentage of loss) of expected total profit (instead of expected total revenue). We choose the latter for the following reasons. First, it is consistent with the objective of this paper, which is the choice of the existing literature. Second, it is revenue, not profit, that is being evaluated at our industry partner, Alibaba. Third, even if we wish to measure it using profit, the cost data of products are not available to us, since the true costs depend on such critical things as terms of contracts with suppliers, that are confidential information.
}

\begin{figure}[!h]
\begin{subfigure}[t]{0.5\textwidth}
\centering
\includegraphics[width=1.1\textwidth]{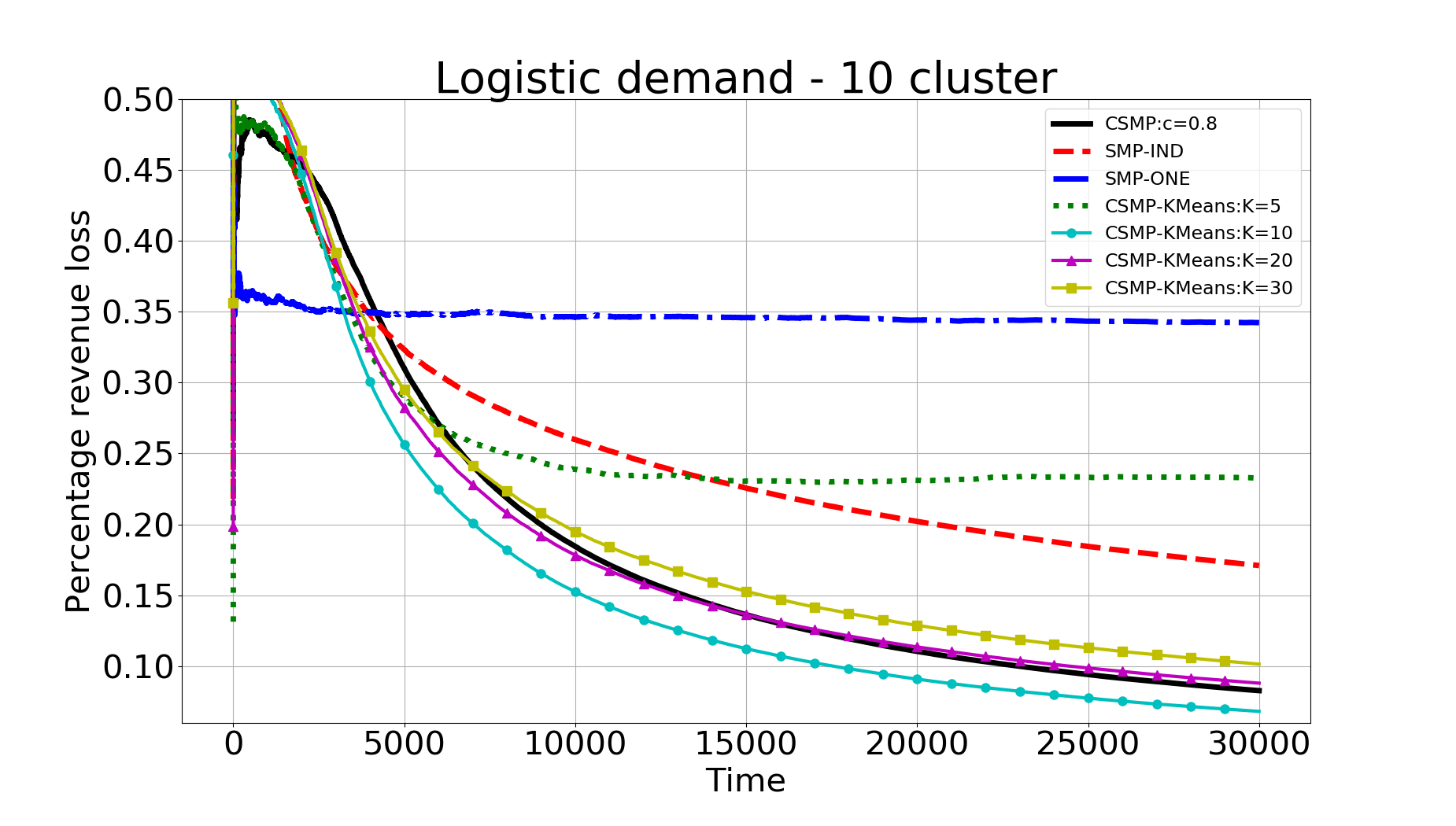}
\caption{Plot of percentage revenue loss}
\end{subfigure}
~
\begin{subfigure}[t]{0.5\textwidth}
\centering
\includegraphics[width=1.1\textwidth]{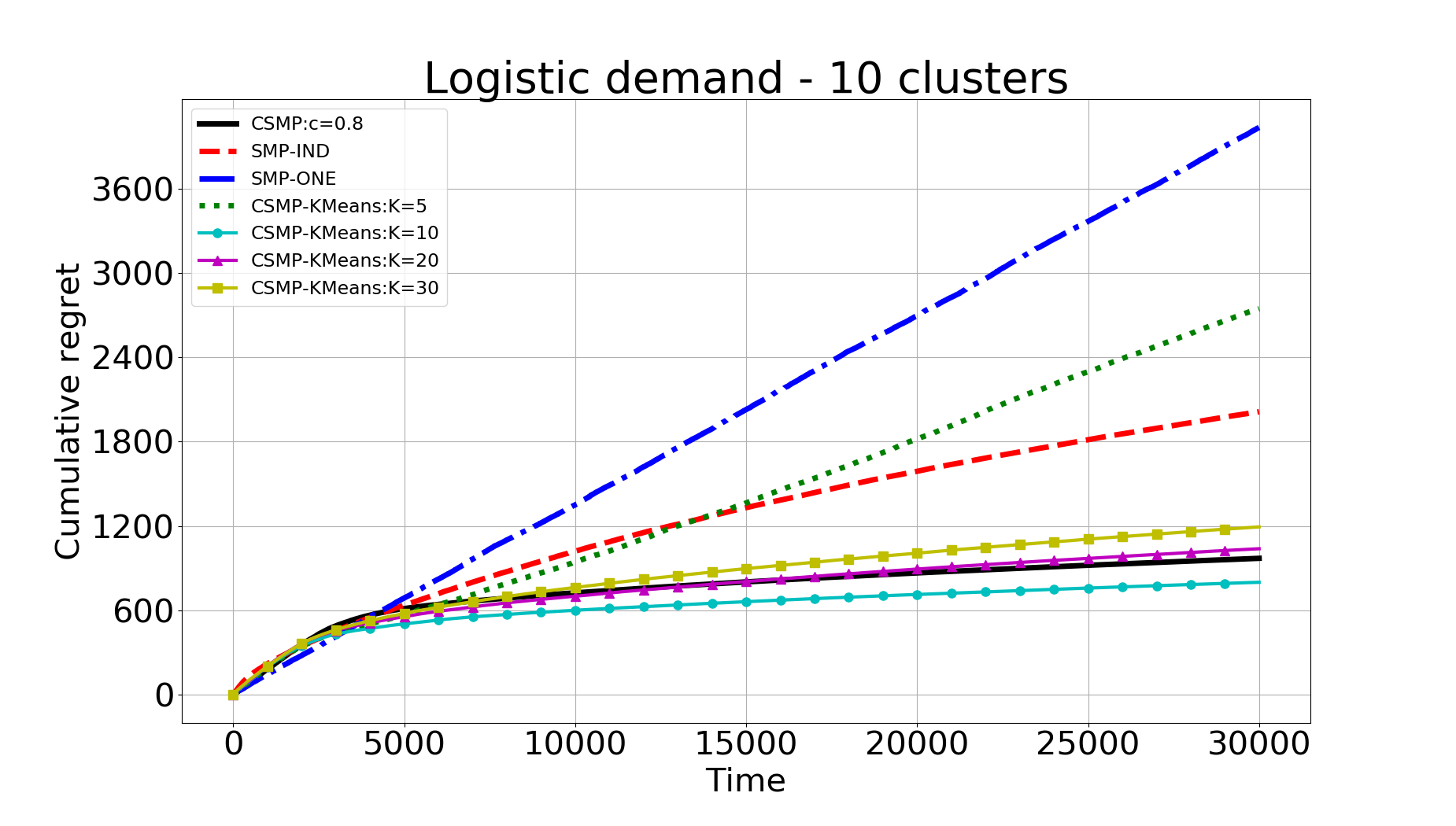}
\caption{Plot of cumulative regret}
\end{subfigure}
\vspace{.1in}
\caption{Performance of different policies for logistic demand with 10 clusters.   The graph on the left-hand side shows the percentage revenue loss of all algorithms, and the graph on the right-hand side shows the cumulative regrets for each algorithm. 
% The black solid line represents CSMP,
% the red dashed line represents SMP-IND, the blue dash-dotted line represents SMP-ONE, the green dotted line represents CSMP-KMeans with $K=5$, the cyan solid line with round marks represents CSMP-KMeans with $K=10$, the purple solid line with triangle marks represents CSMP-KMeans with $K=20$, and the  yellow solid line with square marks represents CSMP-KMeans with $K=30$. 
}
\label{fig:logit_10_cluster}
\end{figure}

\begin{table}[!t]
\centering
\begin{tabular}{l|llllll}
\hline
\hline
        & $t=5,000$ & $t=10,000$ & $t=15,000$ & $t=20,000$ & $t=25,000$ & $t=30,000$ \\ \hline
CSMP    & 1.83    & 0.97      & 0.70      & 0.57      & 0.47      & 0.40  \\ 
SMP-IND & 1.32     & 0.88      & 0.92      & 0.81      & 0.78      & 0.73  \\
SMP-ONE & 2.34     & 2.15      & 1.75      & 1.44      & 1.46      & 1.44  \\
CSMP-KMeans:$K=5$ & 2.08     & 1.97      & 1.95     & 2.26      & 2.22      & 2.19  \\
CSMP-KMeans:$K=10$ & 2.06     & 1.53      & 1.09      & 0.87      & 0.74      & 0.66  \\
CSMP-KMeans:$K=20$ & 2.12     & 1.36      & 1.15      & 1.02      & 0.91      & 0.82  \\
CSMP-KMeans:$K=30$ & 1.41     & 0.88      & 0.77      & 0.67      & 0.59      & 0.49  \\

\hline
\end{tabular}
\vspace{.1in}

\caption{Standard deviation ($\%$) of percentage revenue loss corresponding to different time periods for logistic demand with 10 clusters.}
\label{tab:std}
\end{table}

\begin{table}[!t]
\centering
\begin{tabular}{l|llllll}
\hline
\hline
                   & $c=0.5$ & $c=0.6$ & $c=0.7$ & $c=0.8$ & $c=0.9$ & $c=1.0$ \\
\hline
Mean               & 8.56    & 8.28    & 8.52    & 8.27    & 8.56    & 8.72    \\
Standard deviation & 0.73    & 0.51    & 0.73    & 0.40    & 0.66    & 0.35   \\
\hline
\end{tabular}
\vspace{.1in}

\caption{Mean and standard deviation ($\%$) of percentage revenue loss of CSMP (logistic demand with 10 clusters) with different parameters $c$.}
\label{tab:sensitivity}
\end{table}

The results are shown in Figure \ref{fig:logit_10_cluster}. According to this figure, our algorithm CSMP outperforms all the benchmarks except for CSMP-KMeans when $K=m=10$.
CSMP-KMeans with $K=10$ has the best performance, which is not surprising because it uses the exact and correct number of clusters. However, in reality the true cluster number $m$ is not known. We also test CSMP-KMeans with $K=5,20,30$. We find that when $K=20$, its performance is similar to (slightly worse than) our algorithm CSMP. When $K=5,30$, the performance of CSMP-KMeans becomes much worse (especially when $K=5$). For the other two benchmarks SMP-ONE and SMP-IND, their performances are not satisfactory either, with SMP-ONE has the worst performance because clustering all products together leads to significant error. Sensivitiy results of CSMP with different parameters $c$ are presented in Table \ref{tab:sensitivity}, and it can be seen that CSMP is quite robust with different values of $c$.
}

{\subsubsection{Logistic demand with relaxed clusters}\label{subsubsec:relax_cluster}
As we discussed in Section \ref{subsec:theory}, strict clustering assumption might not hold and sometimes products within the same cluster are slightly different. This experiment tests the robustness of CSMP when parameters of products in the same cluster are slightly different. To this end, after we generate the $m=10$ centers of parameters (with each center represented by $\theta_j$), for each product $i$ in the cluster $j$, we let $\theta_i=\theta_j+\Delta\theta_i$ where $\Delta\theta_i$ is a random vector such that each entry is uniformly drawn from $[-L/(10\sqrt{d+2}),L/(10\sqrt{d+2})]$. All the other parameters are the same as in the case with $10$ clusters. 
Results are summarized in Figure \ref{fig:logit_no_cluster}, and it can be seen that the performances of all algorithms are quite similar as in Figure \ref{fig:logit_10_cluster}. }

\begin{figure}[!h]
\begin{subfigure}[t]{0.5\textwidth}
\centering
\includegraphics[width=1.1\textwidth]{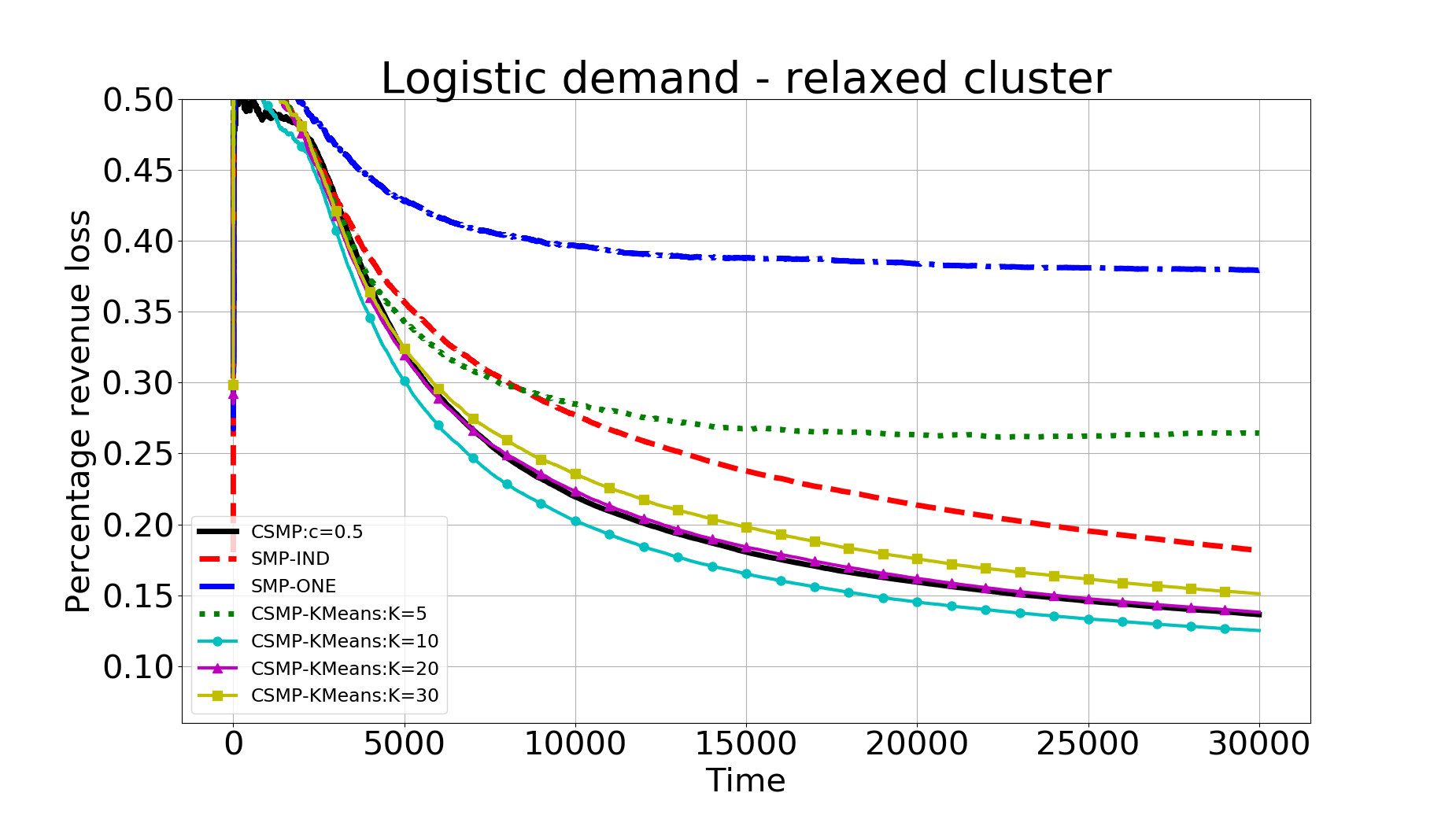}
\caption{Plot of percentage revenue loss}
\end{subfigure}
~
\begin{subfigure}[t]{0.5\textwidth}
\centering
\includegraphics[width=1.1\textwidth]{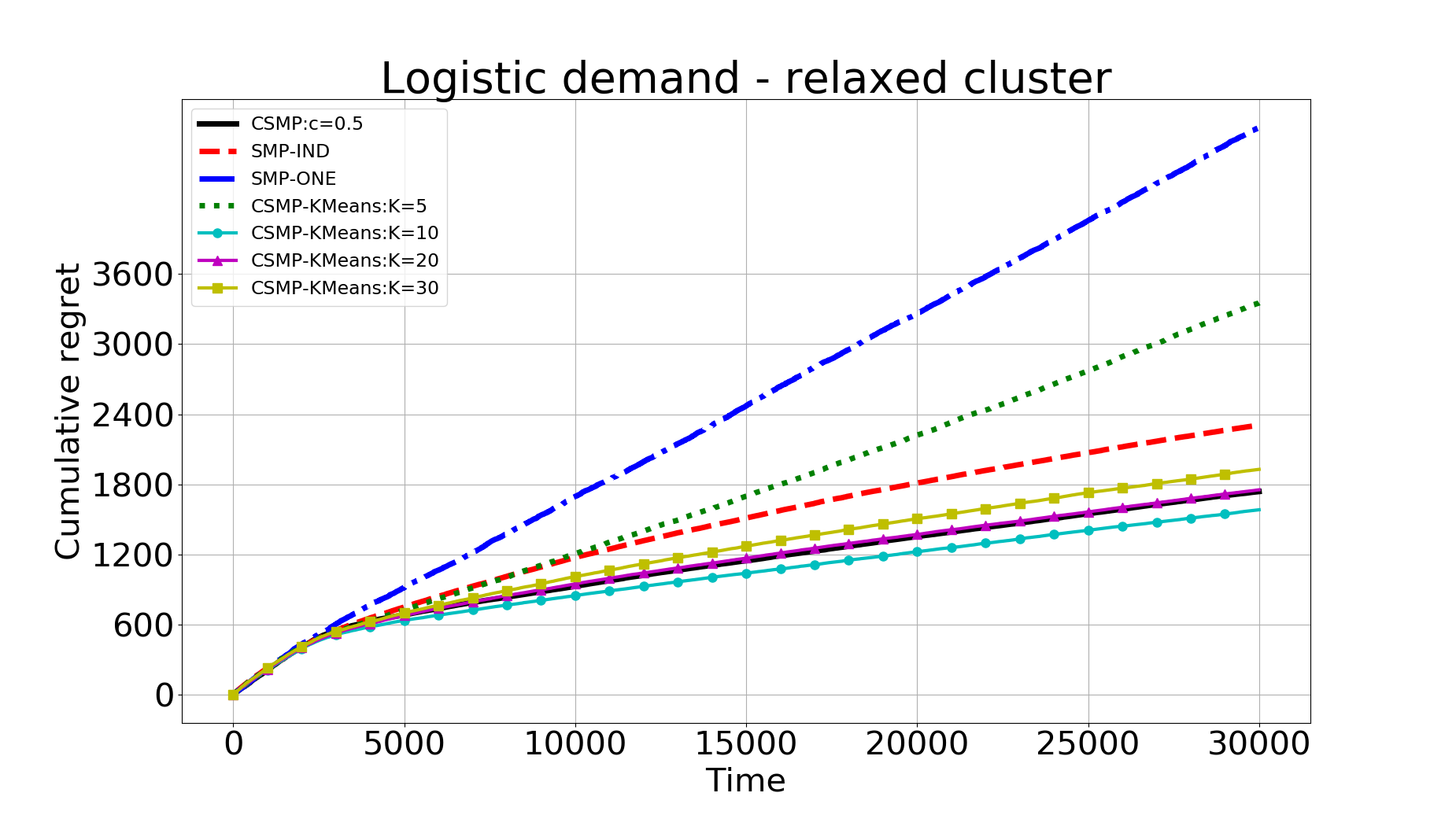}
\caption{Plot of cumulative regret}
\end{subfigure}

\vspace{.1in}

\caption{Performance of different policies for logistic demand with relaxed clusters. }
\label{fig:logit_no_cluster}
\end{figure}

{
\subsubsection{Logistic demand with almost static features}
As we discussed after Assumption A-3, in some applications there might be features that have little variations (nearly static). We next test the robustness of our algorithm CSMP when the feature variations are small. 
 To this end, we assume that one feature in $z_{i,t}\in\mathbb{R}^d$ for each $i\in [n]$ is almost static. More specifically, we let this feature be constantly $1/\sqrt{d}$ %(or $-1/\sqrt{d}$) 
 for 100 periods, then change to $-1/\sqrt{d}$ %(or $1/\sqrt{d}$) 
 for another 100 periods, then switch back to $1/\sqrt{d}$ %(or $-1/\sqrt{d}$) 
 after 100 periods, and this process continues. 
 %By this construction, we clearly see that Assumption A-3 is violated. {\color{red} why? this is only one feature for each $i$???}
 The numerical results against benchmarks are summarized in Figure \ref{fig:logit_10_cluster_as}. It can be seen that with such an almost static feature, the performances of algorithms with clustering become worse, but they still outperform the benchmark algorithms. % as in Figure \ref{fig:logit_10_cluster}. 
In particular, CSMP (with parameter $c=0.1$ after a few trials of tuning) still has promising performance, showing its robustness with small feature variations of some products. 
%under the scenario with almost static features.

\begin{figure}[!h]
\begin{subfigure}[t]{0.5\textwidth}
\centering
\includegraphics[width=1.1\textwidth]{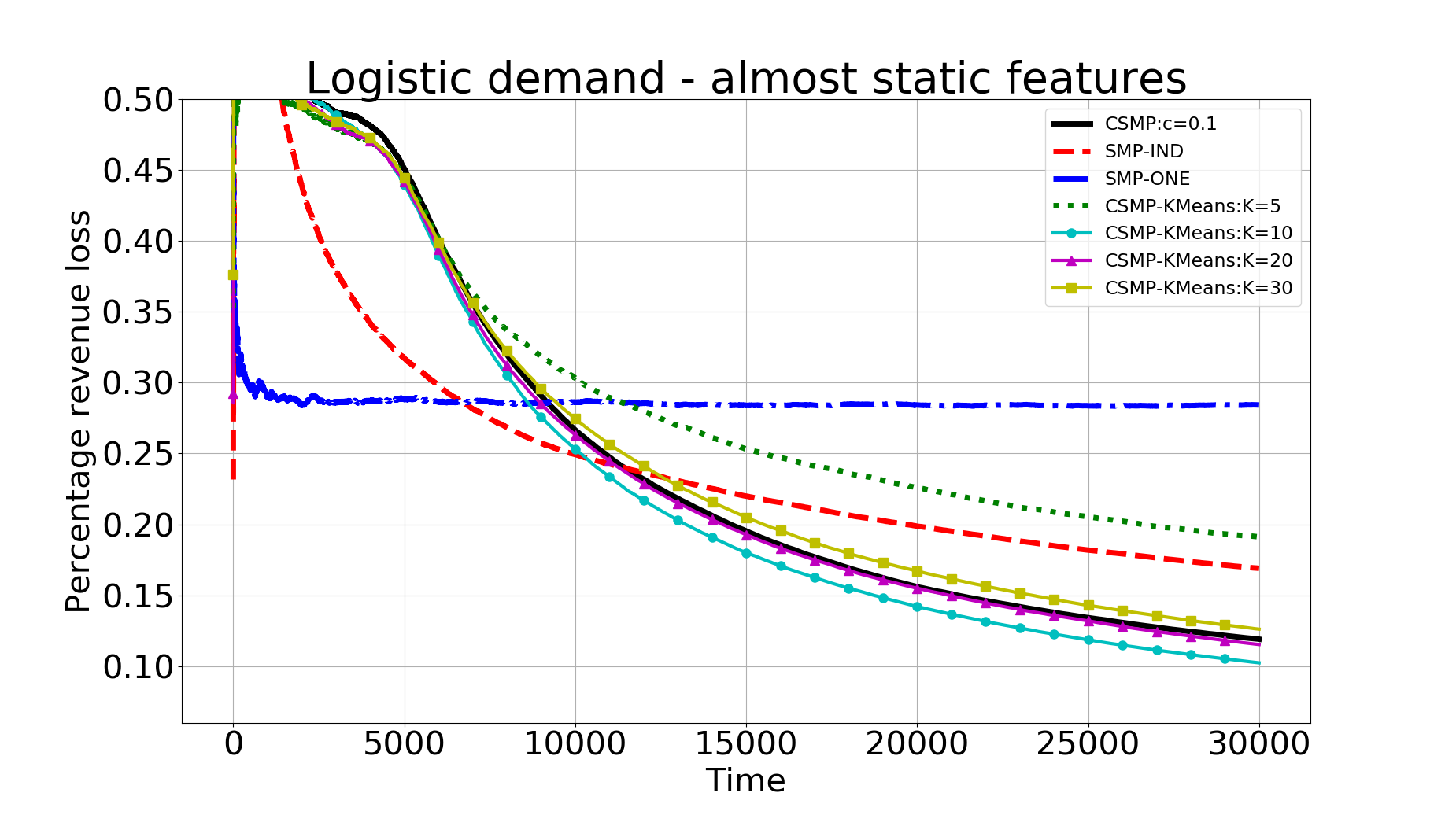}
\caption{Plot of percentage revenue loss}
\end{subfigure}
~
\begin{subfigure}[t]{0.5\textwidth}
\centering
\includegraphics[width=1.1\textwidth]{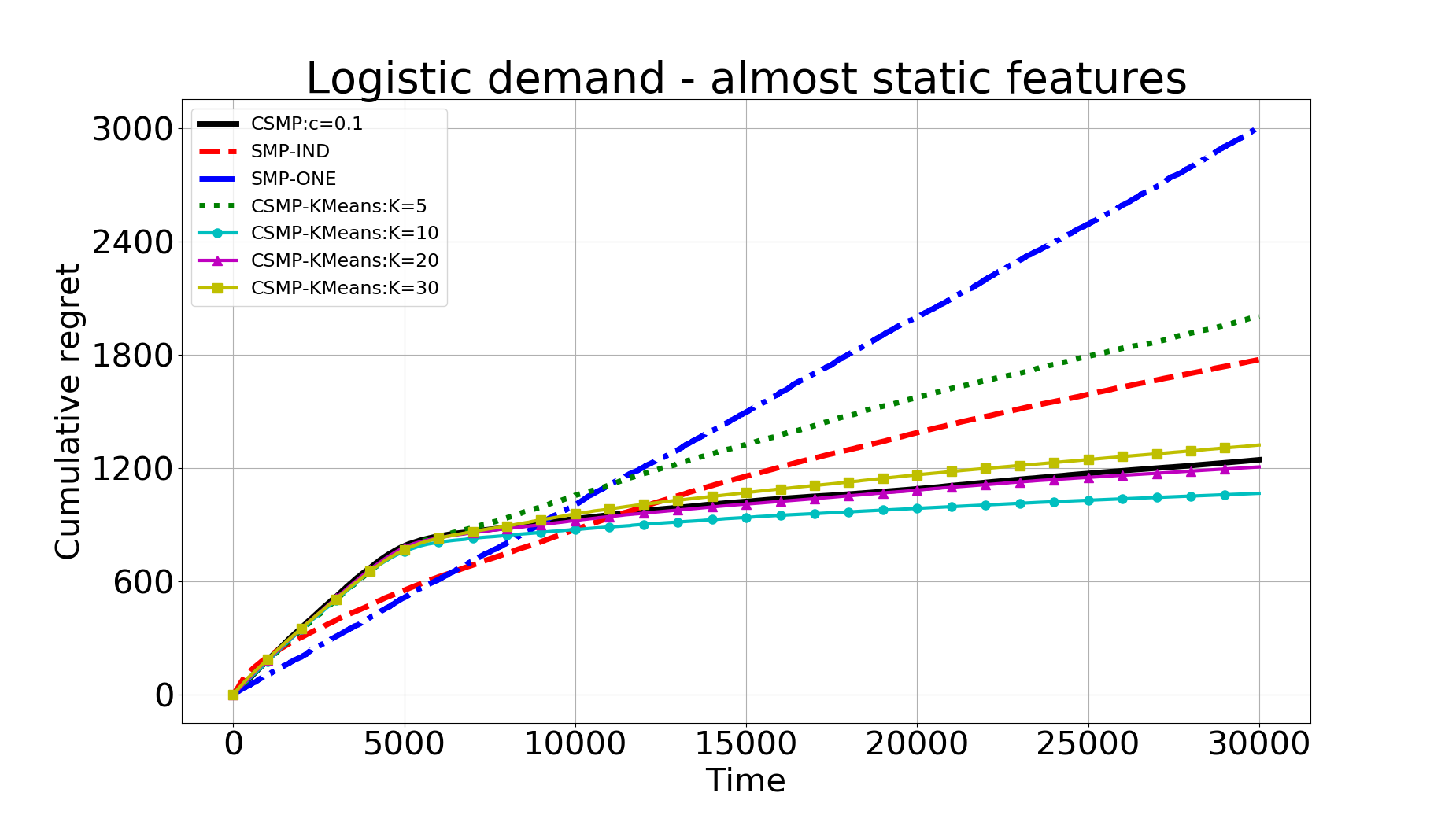}
\caption{Plot of cumulative regret}
\end{subfigure}
\caption{Performance of different policies for logistic demand with 10 clusters and almost static features. }
\label{fig:logit_10_cluster_as}
\end{figure}
}

{
\subsubsection{Logistic demand with model misspecification}
In real applications, it may happen that the demand model is misspecified. In this experiment, we consider a misspecified logistic demand model.
Specifically, we let the expected demand of product $i$ be
$
{1}/({1+\exp(f_i(z_{t},p_t))}), 
$
where the utility function 
\[
f_i(z_t,p_t):=c_{i,0}+\sum_{k=1}^{d}c_{1,i,k}z_{t,k}+\sum_{k=1}^{d}c_{2,i,k}z_{t,k}^2+\sum_{k=1}^{d}c_{3,i,k}z_{t,k}^3+\beta_{1,i}p_t+\beta_{2,i}^2p_t^2+\beta_{3,i}p_t^3
\]
is a third degree polynomial of $z_t,p_t$, where $c_{i},\beta_i$ are unknown parameters, and $z_{t,k}$ represents te $k$-th component of $z_t$. To generate this misspecified demand model, we let $c_{l,i,k}\in [-L/\sqrt{3(d+2)},L/\sqrt{3(d+2)}]$ with $l\in\{1,2,3\},k\in [d]$, $c_{i,0}\in [-L/\sqrt{d+2},L/\sqrt{d+2}]$, and $\beta_{l,i}\in [-L/\sqrt{3(d+2)},0)$ with 
$l\in \{1,2,3\}$, 
be all drawn uniformly. All the other input parameters for the problem instance are the same as in the case of logistic demand with 10 clusters. 

To test the robustness of the misspecified CSMP, it is compared with CSMP which correctly specifies the demand model. We call the benchmark the CSMP-Oracle. The numerical results are summarized in  Figure \ref{fig:logit_misspecification}. As seen, when compared with the oracle, the misspecified CSMP has slightly worse performance as expected. But the overall difference in percentage revenue loss is only $3.48\%$, showing that our algorithm CSMP is rather robust with such a model misspecification.

% We then run our algorithm CSMP with all benchmarks assuming the underlying demand is an ordinary logistic demand function, and results (of cumulative revenue) are summarized in Figure \ref{fig:logit_misspecification}. It can be seen that the results are quite similar to the case without model misspecification. While CSMP-KMeans with $K=10$ still has the best performance as it specifies the correct number of clusters, CSMP comes second beating all the other benchmarks, showing its robustness under demand misspecification. 

\begin{figure}[h!]
\begin{subfigure}[t]{0.48\textwidth}
\centering
\includegraphics[width=1.1\textwidth]{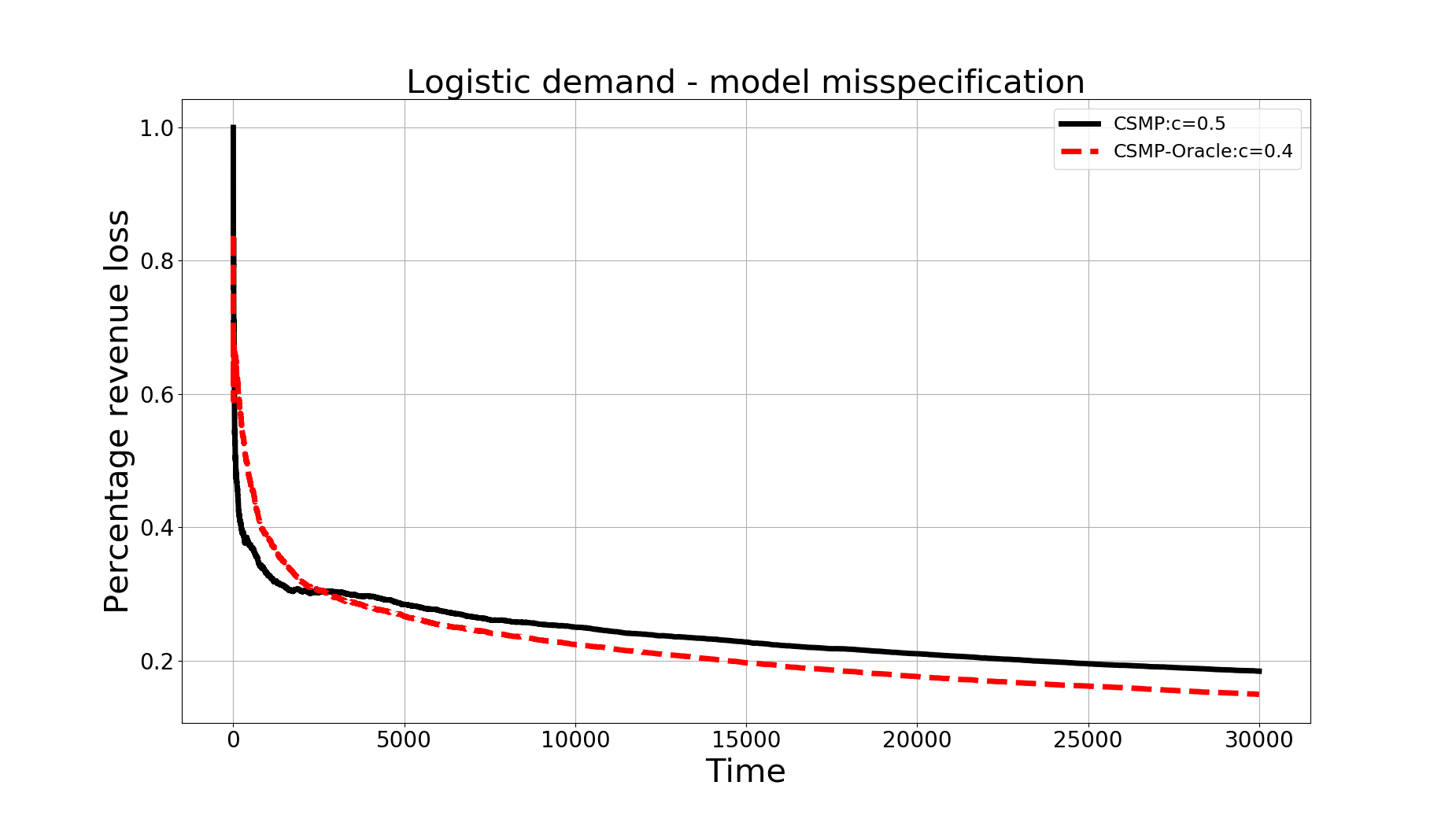}
\caption{Plot of percentage revenue loss}
\end{subfigure}
~
\begin{subfigure}[t]{0.48\textwidth}
\centering
\includegraphics[width=1.1\textwidth]{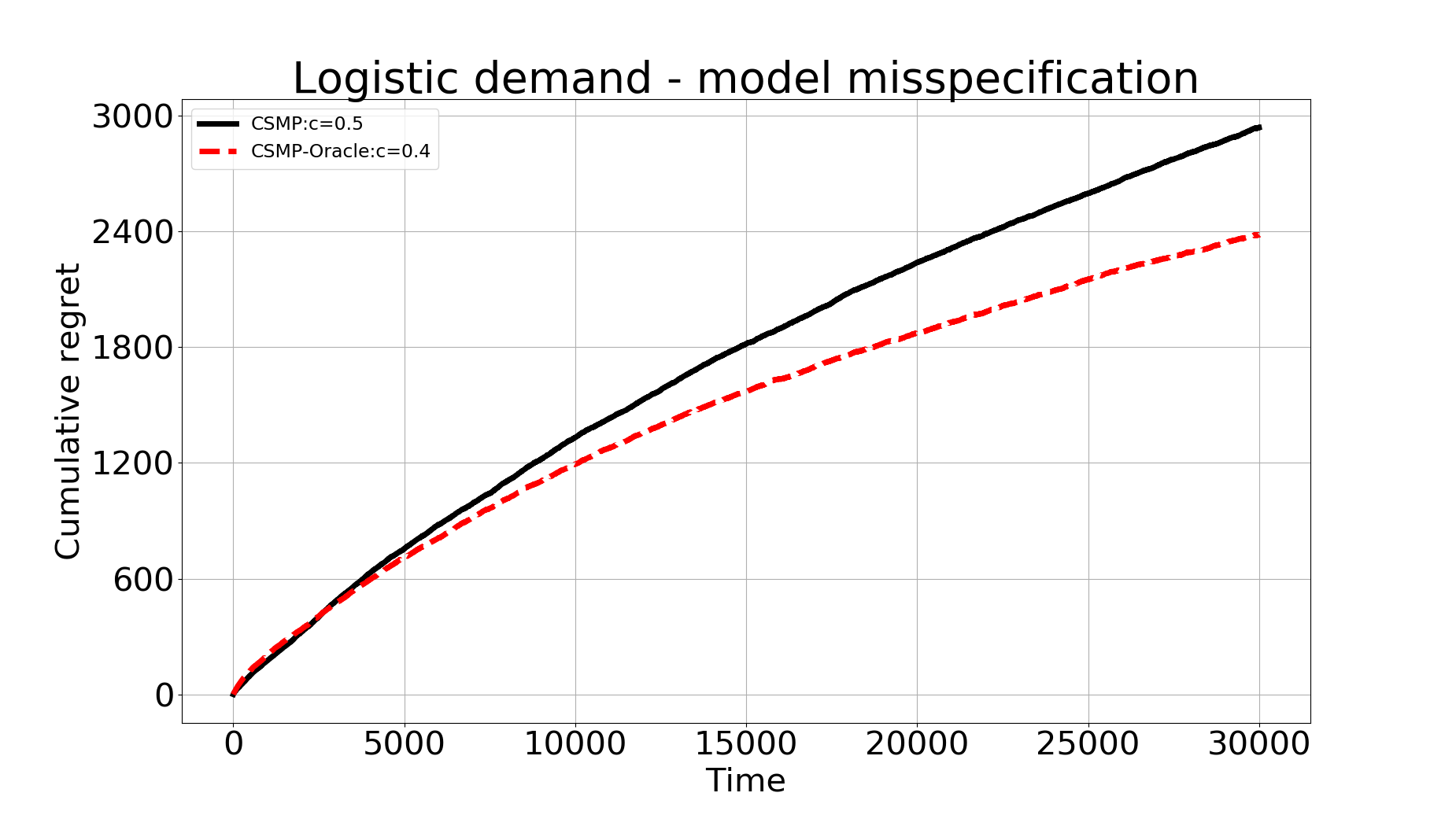}
\caption{Plot of cumulative regret}
\end{subfigure}
\caption{Performance of CSMP with (misspecified) logistic demand versus the oracle.}
\label{fig:logit_misspecification}
\end{figure}

}

{
\subsection{Simulation using real data from Alibaba}\label{subsec:real_simulation}
This section presents the results of our algorithms (for illustration, we use CSMP with logistic demand) and other benchmarks using a real dataset provided by Alibaba. To better simulate the real demand process, we fit the demand data to create a  sophisticated ground truth model (hence our algorithm CSMP may have a model misspecification). 
Before presenting the results, we introduce the dataset and pre-processing of the data.

\textbf{The dataset.} To motivate our study of pricing for low-sale products, we extract sales data %products 
from 05/29/2018 to 07/28/2018. During this period, nearly 75,000 products were offered by Tmall Supermarket. There are more than 21.6\% (i.e., 16,000) products with average numbers of daily unique visits less than 10. Among all these low-sale products, Alibaba provided us with a test dataset comprising $100$ products that have at least one sale during the 61-day period, and at least two prices charged with each price offered to more than $10\%$ of all customers. Because these selected products have relatively sufficient variation of prices and different observations of customers' purchases, demand parameters can be estimated quite accurately using the sales data in the dataset. {As the focus of this paper is on single-product pricing where the demand of each product depends on its own price, this numerical study using the real dataset is also conducted under this setting. In Section \ref{app_sec:price_dependency} in the online supplement, we conduct a data analysis to show that this assumption is indeed reasonable for this dataset.}

For the features of products, we are provided by Alibaba with 5 features (hence $d=5$), that are described below: 
\begin{itemize}
    \item Average gross merchandise volume (GMV, i.e., product revenue) in past 30 days.
    \item Average demand in past 30 days.
    \item Average number of unique buyers (UB, i.e., unique IP which makes the purchase) in past 30 days.
    \item Average number of unique visitors (UV) in past 30 days.
    \item Average number of independent product views (IPV, i.e., total number of views on the product, including repetitive views from the same user) in past 30 days.
\end{itemize}

These features are selected by Alibaba's feature engineering team\footnote{We requested to include some other features, 
such as number/score of customer ratings and competitor's price on similar product, but were unable to obtain such data due to technical reasons.} 
 (via a recursive feature elimination approach from a raw set of features).
%and due to some technical reasons during our collaboration, {\color{red}  we cannot obtain the features we want to have (such as number/score of customer ratings)}. 
Note that these features are not exogeneous, since features in the future can be affected by current pricing decision. Such endogenous features are often used in the demand forecasting literature. For instance, a time series model uses past demand to predict future demand (see e.g., \citealt{brown1959statistical}); an artificial neural network (ANN) model uses historical demand data of composite products as features for demand prediction \citep{chang2005evolving}. In the pricing literature, some endogenous features have also been used. For example, in  \cite{ban2017personalized,bastani2019meta}, their model features include  auto loan data, e.g., competitors' rate, that are affected by the rate offered by the decision maker (the auto loan company). Incorporating the impact of pricing decisions on features leads to challenging 
dynamic programming problem with partial information. 
%According to the experts in Alibaba, even ignoring the feature endogeneity,
% finding the optimal prices with consideration of features is already too difficult to achieve. 
Hence, features are considered as given and we only optimize for current period (i.e., ignoring the long-run effect of the current pricing
 decision).
 
{To understand the data and features better, we provide a summary statistics on the data and feature. First, we calculated the number of prices charged during the time horizon, average demand per day, average UV per day, and average IPV per day, for each one of the selected 100 products. Table \ref{tab:summary} summarizes the mean and standard deviation of each of the data.
\begin{table}[!t]
\centering
\begin{tabular}{l|llll}
\hline
\hline
                   & Number of prices & Demand    & UV           & IPV           \\ 
\hline
Mean                & 7.13  & 0.71 & 6.88& 10.13    \\
Standard deviation & 5.16                             & 0.72                        & 2.04                       & 3.38    \\
\hline
\end{tabular}
\vspace{.1in}
\caption{Summary statistics of average data of the 100 products.}
\label{tab:summary}
\end{table}
Then, to understand the variation of features, we calculated the standard deviation of each of the five features of every product. In Table \ref{tab:summary_feature} we summarize the mean and standard deviation of feature variations of all products. 
\begin{table}[!t]
\centering
\begin{tabular}{l|lllll}
\hline
\hline
                   & Feature GMV & Feature demand    & Feature UB           & Feature UV & Feature IPV           \\ 
\hline
Mean                & 12.93  &	0.34 &	0.49 &	5.04 &	7.73   \\
Standard deviation & 17.97 &	0.60 &	0.89 &	9.21 &	13.95   \\
\hline
\end{tabular}
\vspace{.1in}
\caption{Summary statistics of feature variation of the 100 products.}
\label{tab:summary_feature}
\end{table}
} 
 
To run simulation using the real dataset, we first create a ground truth model for the demand. We consider two ground truth models in this simulation study. The first one is the commonly used logistic demand function (hence no model misspecification for our algorithm CSMP), and the second is a random forest model (as used in simulation study of \citealt{nambiar2016dynamic}, hence there is model misspecification for CSMP). We use the demand data of each product to fit these two demand models, and then apply them to simulate the demand process. To test the accuracy of these two demand models, we calculated the average ROC-AUC score (among the selected products) of the logistic and random forest model respectively. The results show that the ROC-AUC score of the random forest model is 11.7\% higher than that of the logistic model (thus the random forest model fits the reality better). 

We want to generate customer's arrival at each time $t$, i.e., the product $i_t$ a customer chooses to search. Since the dataset contains the daily number of unique visitors for each product $i$, the arrival process $i_t$ is simulated by randomly permuting the unique visitors of each product on each day. For instance, if on day $1$, product $1$ and product $2$ have $2$ and $3$ unique visitors respectively; then $i_t$ for $t=1,\ldots,5$ can be $1,2,2,1,2$, which is a random permutation of the unique visitors for product $1$ and $2$.

\textbf{Numerical results for the algorithms. } 
We first provide the specifications of the parameters in the CSMP algorithm in Algorithm \ref{alg:CSMP}.
\begin{itemize}
\item The confidence bound $B_{i,t}$ is  $\sqrt{c(d+2)\log (1+t)/\lambda_{\min}(V_{i,t})}$, where $c=0.01$ for logistic demand and $c=0.05$ for random forest demand (selected by a few trials of different values). 

\item The price lower bound of each product is $50\%$ lower than its lowest price during the 61-day period, and the price upper bound is $50\%$ higher than its highest price during this period of time. %The parameter set $\Theta$ is defined such that each parameter takes value within $[-5,5]$.

\item The basic price perturbation parameter $\Delta_0$ of each product is set as the length of price range divided by $4$, i.e., $\Delta_0=(\overline{p}-\underline{p})/4$.
\end{itemize}

{For benchmark algorithms, they are the same as those in the previous subsection, with CSMP-KMeans have $K\in\{5,10,20,30\}$. In addition, we test another benchmark  proposed  in \citet{keskin2016chasing}. More specifically, this benchmark assumes a simple linear demand model as $\mathbb{E}[d_{i,t}]=\alpha_{i,t}+\beta_{i,t}'p_{i,t}$ with changing parameters $\alpha_{i,t},\beta_{i,t}$ but without demand covariates. Since this single-product pricing algorithm can be considered as a modified version of semi-myopic pricing, we call it semi-myopic pricing (SMP) with changing parameters (CP), or SMP-CP for short. In particular, among the algorithms proposed in \citet{keskin2016chasing}, we choose the ``Moving Window Policy'' as it has the best empirical performance and choose the input parameter $\kappa=0.5$ as in the numerical experiments in Section 6.3 in \citet{keskin2016chasing}.
We plot the results of cumulative revenue at different dates in Figure \ref{fig:real_data}. }

It can be seen that all the methods using clustering have better performance,  and their performances are comparable.  It is interesting to note that for clustering using $K$-means method, their performances with different value of $K$ are actually quite close. Finally, it is observed that the advantage of using clustering with random forest model (i.e., misspecified model) is more than that with logistic model.

\begin{figure}[!h]
\begin{subfigure}[t]{0.48\textwidth}
\centering
\includegraphics[width=1\textwidth]{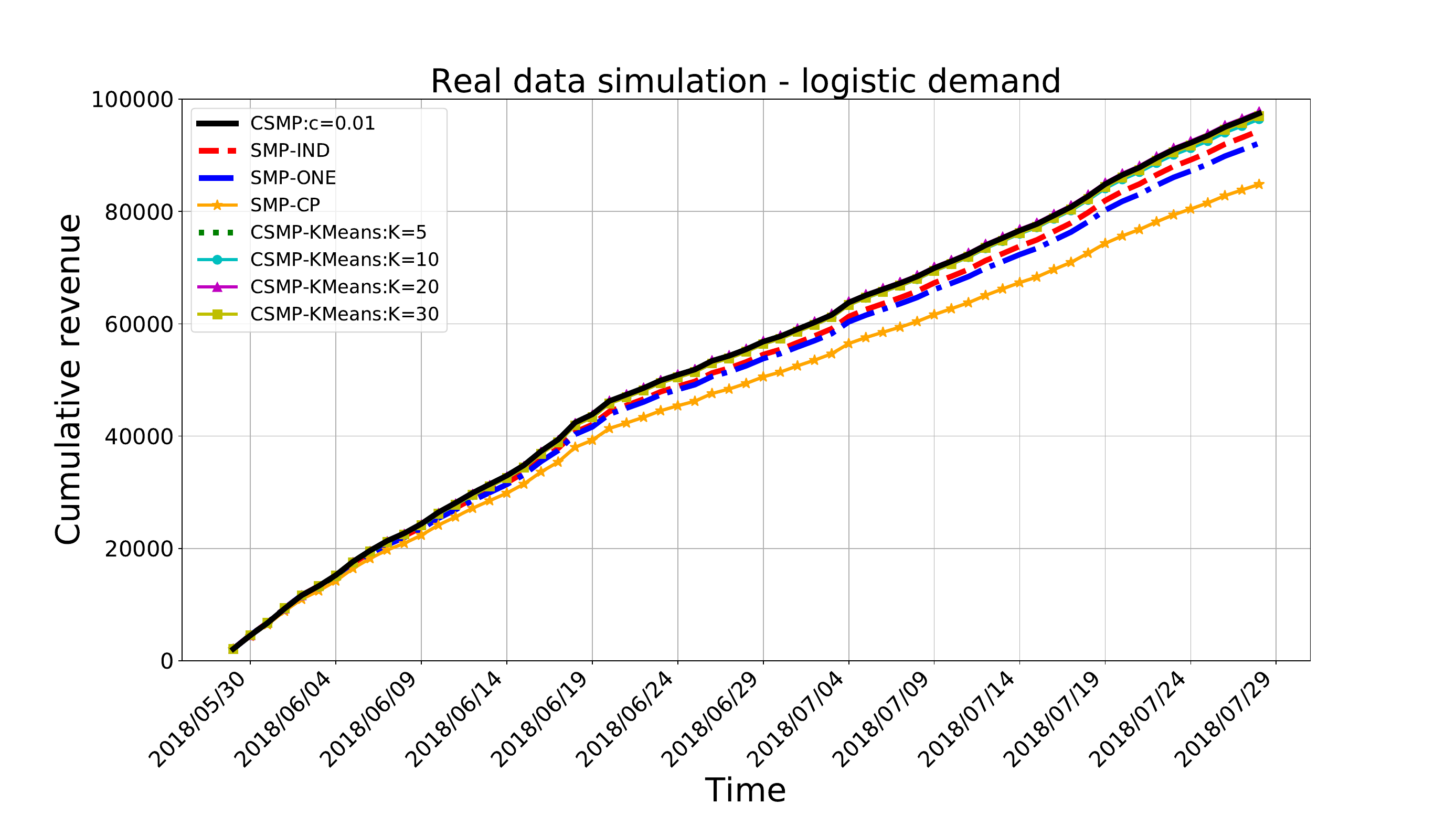}
\caption{Logistic demand model (without model misspecification)}
\end{subfigure}
~
\begin{subfigure}[t]{0.48\textwidth}
\centering
\includegraphics[width=1\textwidth]{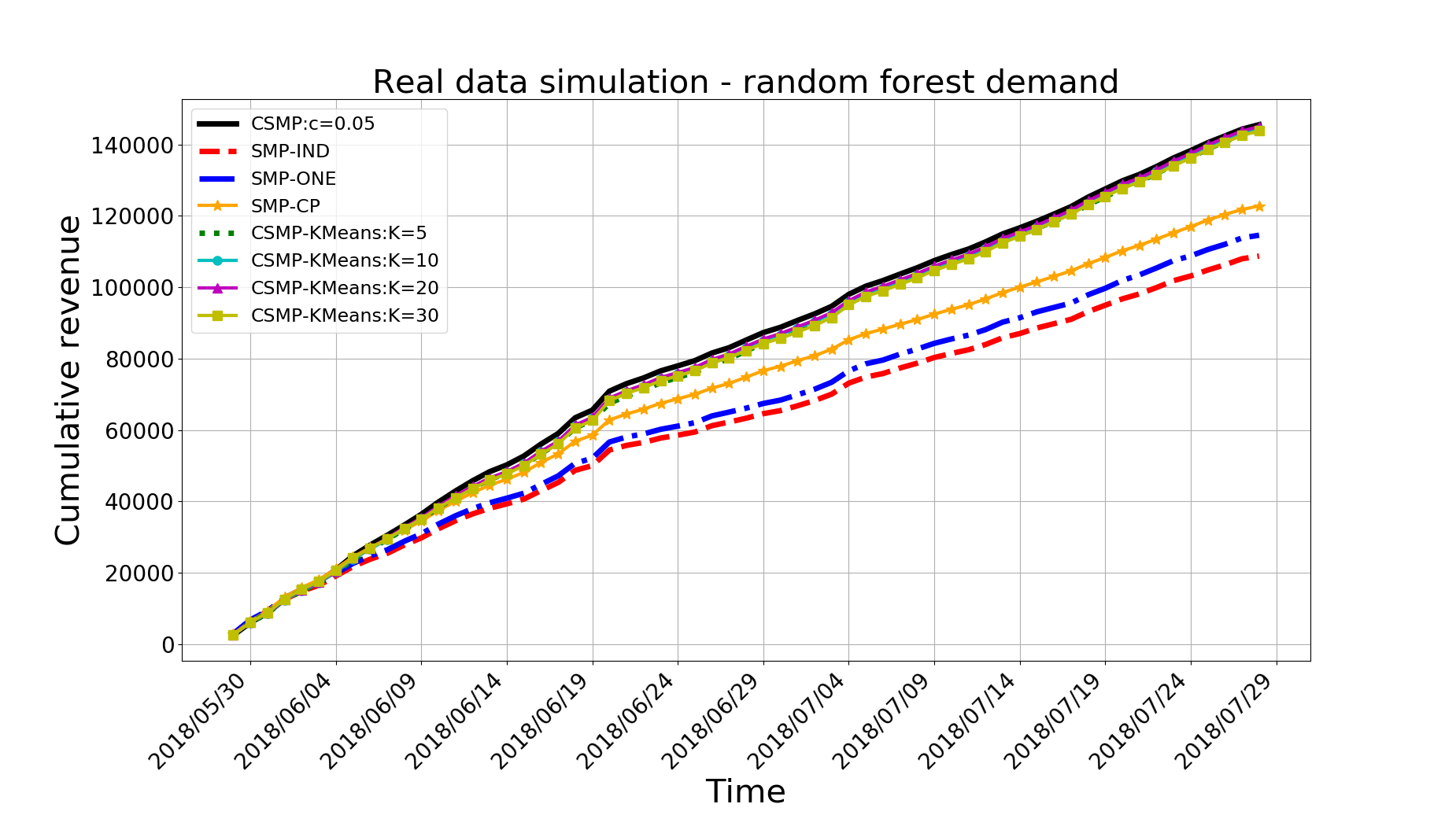}
\caption{Random forest demand model (with model misspecification)}
\end{subfigure}
\centering
\caption{Plot of cumulative revenue over different dates for two demand models}
\label{fig:real_data}
\end{figure}

% \begin{table}[]
% \centering
% \begin{tabular}{lllll}
%                   & CSMP   & SMP-IND & SMP-ONE & SMP-CP \\
% Average Cumulative Revenue    & 145523 & 108717  & 114553  &     122760   \\
% Standard Deviation & 5830   & 725     & 1527    &    514   
% \end{tabular}
% \end{table}

% \begin{table}[]
% \begin{tabular}{lllll}
%                   & CSMP-KMeans: K=5 & CSMP-KMeans: K=10 & CSMP-KMeans: K=20 & CSMP-KMeans: K=30 \\
% Average Cumulative Revenue    & 144063           & 144337            & 145010            & 143769            \\
% Standard Deviation & 3268             & 2926              & 3050              & 3647             
% \end{tabular}
% \end{table}

}

{
\subsection{Summary of numerical experiments}\label{subsec:data_insight}
In this section we first 
present the simulation results using synthetic data under various scenarios to test the effectiveness and 
robustness of our algorithms, then we present the simulation 
results with real data from Alibaba using a more sophisticated ground truth demand model (for a more realistic simulation and robustness test under model misspecification). 
% Finally we report the results from a field experiment conducted at Alibaba. 
The main findings from the numerical study are 
summarized as follows.  
\begin{itemize}
\item In all the numerical results, pricing with clustering (either using our method in CSMP or classical $K$-means clustering with appropriate choice of $K$) outperforms the benchmarks of applying single-product pricing algorithm on each product or naively putting all products into a single cluster.

\item Dynamic pricing with $K$-means clustering method sometimes works as effectively as (and at times even better than) our algorithm CSMP. But its performance depends on the choice of the number of clusters $K$, which is unknown to the decision maker.

\item The CSMP algorithm is quite robust under different scenarios: slightly different demand parameters within the same cluster, 
near static or slowly changing features, and misspecified
ground truth demand model.
%; and {\color{red} ???? and has no clustering structure. }

% \item 
% The CSMP algorithm (with logistic demand function) showed satisfactory performance in the field experiment at Tmall Supermarket. Compared with products in the control group that used the business-as-usual pricing policy of Alibaba, the CSMP algorithm significantly boosted the revenue of the testing products, demonstrating the effectiveness of the algorithm.
% \item Using offline policy evaluation, we are able to fairly evaluate different pricing policy with different assumptions. We find that CSMP with logistic demand has the best performance, and its overall average revenue is more than $101.89\%$ higher than the original pricing policy. This result is valid because randomly pricing each product leads to an average revenue which is even lower than the original pricing policy. By looking at the revenue of each product, CSMP increases the revenue of most of the products, and for those with a revenue decrease, such a decrease is not significant compared with the increase.
\end{itemize}
}

\section{Conclusion}\label{sec:conclusion}
With the rapid development of e-commerce, data-driven dynamic pricing is becoming increasingly important due to  the dynamic market environment and easy access to online sales data. While there is abundant literature on dynamic pricing of normal products, the pricing of products with low sales received little attention. The data from Alibaba Group shows that the number of such low-sale products is large, and that even though the demand for each low-sale product is small, the total revenue 
for all the low-sale products is quite significant. 
%Therefore, if we cannot make the right price decision, it will not only affect the overall revenue but also bring negative impacts to other operational decisions such as inventory control. 
In this paper, we present data clustering and dynamic pricing algorithms to address this challenging problem. We believe that this paper is the first to integrate online clustering learning in dynamic 
pricing of low-sale products.

A learning algorithm is proposed in this paper which learns the demand and decides product clustering simultaneously on the fly.
We have established the regret bound for this under mild technical conditions. 
Moreover, we test our algorithms on a real dataset from Alibaba Group by simulating the demand function. Numerical results show that our algorithm outperforms the benchmarks, where one either considers all products separately, or treats all products as a single cluster. 
% A field experiment 
% was conducted at Alibaba by implementing the CSMP algorithm on a set of products, and  the results show that our algorithm can significantly boost revenue.

There are several possible future research directions.
The first one is an in-depth study of the method for product clustering (e.g., \citealt{gentile2014online,nguyen2014dynamic}). 
% For instance, in bandit clustering literature, \citet{gentile2014online} use a graph-based method to cluster different arms, and \citet{nguyen2014dynamic} apply a $K$-means clustering method to identify different groups of arms. It will be interesting to understand the various product clustering methods and analyze their advantages and disadvantages under different scenarios. 
Second, 
to highlight the benefit of  clustering techniques for low-sale products, in this paper 
we study a dynamic pricing problem with sufficient inventory. 
One extension is to apply the clustering method for the revenue management problem with inventory constraint. % In addition, it will be interesting to investigate whether the method of data clustering can be extended to more general family of products. For instance, the business of online flash sales typically sells some products within an extremely limited amount of time, which makes the number of data samples small for each product. Therefore, it may be possible that we can apply the method of data clustering to improve the pricing algorithms for flash sales.
% Third, in this paper we consider the generalized linear demand. There are other general demand functions, such as the nonparametric models in \citet{araman2009dynamic,wang2014close,chen2015real,besbes2015surprising,nambiar2016dynamic,ferreira2018online,chen2018nonparametric}, and it is 
% an interesting research direction to explore other, and broader, classes of demand functions. To that end, an important step will be
% to define an appropriate
% metric for clustering the products, which is a challenge especially for nonparametric models. %Therefore, it is important to find an appropriate metric for product clustering in order to solve this problem.
Third, we believe that it will be interesting to include substitutability/complementarity of products and even assortment decisions. 

\section*{Acknowledgement}
We thank the Department Editor, Senior Editor, and anonymous referees for their detailed and constructive comments that have helped to considerably improve the exposition of this paper. %We also would like to thank Alibaba Group for their generous support throughout the project. 

% \vspace{.25in}
% \noindent {\it Acknowledgment}: The authors are grateful to the 
% Department Editor Prof. J. George Shanthikumar, the 
% Associate Editor,
% and three referees for their constructive comments and suggestions, that have helped us 
% to significantly improve both the content and exposition of this paper.
%The research of Xiuli Chao is supported in part by ... 

\bibliographystyle{informs2014}
\bibliography{reference}

%% Here starts the e-companion (EC)
%%%%%%%%%%%%%%%%%%%%%%%%%%%%%%%%%%%%%%%%%%%%%%%%%%%%%%%%%%
\ECSwitch

%\ECDisclaimer
%%%%%%%%%%%%%%%%%%%%%%%%%%%%%%%%%%%%%%%%%%%%%%%%%%%%%%%%%%

%%% Main head for the e-companion
\ECHead{E-Companion}

In this Appendix, we present all the mising proofs in the mainbody of the paper. We also
prove the result discussed in Remark \ref{cluster} of Section \ref{sec:policy_and_results}
for a more general definition of clusters. 
%same cluster may not be the same. 

\section{Proof of Theorem \ref{thm:main}}\label{ap_sec:proof_main}
First of all, we define $\tilde{q}_j:=\sum_{i\in \mathcal{N}_j}q_i$ as the probability that a customer views a product from cluster $j$.
Then, define the events
\[
\begin{split}
\mathcal{E}_{N,t}:=&\{\hat{\mathcal{N}}_t=\mathcal{N}_{i_t} \},\\
\mathcal{E}_{B_j,t}:=&\{||\tilde{\theta}_{j,t}-\theta_{j}||_2\leq \tilde B_{j,t} \},\\
\mathcal{E}_{V,t}:=&\left\{\lambda_{\min}\left(\sum_{s\in\tilde{\mathcal{T}}_{j_t,t}}u_su_s'\right)\geq \frac{\lambda_1\Delta_0^2 \sqrt{\tilde{q}_{j_t} t }}{8} \right\},\\
\end{split}
\]
where $\lambda_1=\min(1,\lambda_0)/(1+\overline{p}^2)$ and $\tilde{\theta}_{j,t}$ is the estimated parameters using data from $\tilde{\mathcal T}_{j,t}$, and 
\[
\tilde B_{j,t}=:\frac{\sqrt{c(d+2)\log(1+t))}}{\sqrt{\lambda_{\min}(\tilde{V}_{j,t})}}
\]
for some constant $c\geq 20/l_1^2$
and $\tilde{V}_{j,t}=I+\sum_{s\in\tilde{\mathcal{T}}_{j,t}} u_su_s'$.  These events hold at least with the following probabilities
\[
\begin{split}
\mathbb{P}(\mathcal{E}_{N,t})\geq & 1-\frac{2n}{t^2}\qquad\text{for $t>\bar t$},\\
\mathbb{P}(\mathcal{E}_{B_j,t})\geq & 1-\frac{1}{t}\qquad\text{for any $j\in[m],t\in\mathcal{T}$},\\
\mathbb{P}(\mathcal{E}_{V,t})\geq & 1-\frac{7n}{t}\qquad\text{for $t>2\bar t$},\\
% \mathbb{P}(E)\geq & 1-\Delta\\
\end{split}
\]
where $\bar t$ is defined in (\ref{eq:t_2_min}). The first inequality is from
our analysis after Lemma \ref{lemma:hat_N_eq_N}; the second inequality is from Corollary \ref{lemma:cluster_est_error}; the third inequality is from Lemma \ref{lemma:cluster_emp_fisher_info}. We further define $\mathcal{E}_{B,t}=\bigcup_{j\in[m]}\mathcal{E}_{B_j,t}$, then it holds with probability at least $1-m/t$ for any $t\in\mathcal{T}$. Now we define the event $\mathcal{E}_t$ as the union of $\mathcal{E}_{N,t}$, $\mathcal{E}_{B,t}$, and $\mathcal{E}_{V,t}$. This event holds with probability at least $1-10n/t$ obviously according to the probability of each event.

We split the regret by considering $t\leq 2\bar t$ and $t>2\bar t$, i.e.,
\[
\begin{split}
\sum_{t=1}^{T}\mathbb{E}[r_t(p_t^*)-r_t(p_t)]=  \sum_{t\leq 2\bar t}\mathbb{E}[r_t(p_t^*)-r_t(p_t)]+\sum_{t> 2\bar t}\mathbb{E}[r_t(p_t^*)-r_t(p_t)].
\end{split}
\]
Obviously, the regret of the first summation can be bounded above by $2\overline p \bar t$. We focus on the second summation. For arbitrary $t> 2\bar t$,
\[
\begin{split}
\mathbb{E}[r_t(p_t^*)-r_t(p_t)]
=&\mathbb{E}[(r_t(p_t^*)-r_t(p_t))\pmb{1}(\mathcal{E}_t)]+\mathbb{E}[(r_t(p_t^*)-r_t(p_t))\pmb{1}(\bar{\mathcal{E}}_t)]\\
\leq & \mathbb{E}[(p_t^*\mu(\alpha_{i_t}'x_t+\beta_{i_t}p_t^*)-p_t\mu(\alpha_{i_t}'x_t+\beta_{i_t}p_t))\pmb{1}(\mathcal{E}_t)]+\frac{10\overline pn}{t} \\
= &\mathbb{E}[(|2\beta_{i_t}\dot{\mu}(\alpha_{i_t}'x_t+\beta_{i_t}\bar p_t)+\beta_{i_t}^2\bar{p}_t\ddot{\mu}(\alpha_{i_t}'x_t+\beta_{i_t}\bar p_t)|(p_t^*-p_t)^2)\pmb{1}(\mathcal{E}_t)]+\frac{10\overline pn}{t}  \\
\leq &\mathbb{E}[(\tilde{L}_2(p_t^*-\tilde p_t-\Delta_t)^2)\pmb{1}(\mathcal{E}_t)]+\frac{10\overline pn}{t}\\
\leq &2\tilde{L}_2L_0^2\mathbb{E}[||\tilde{\theta}_{\hat{\mathcal{N}}_t,t-1}-\theta_{i_t}||_2^2\pmb{1}(\mathcal{E}_t)]+4\tilde{L}_2\mathbb{E}[\Delta_t^2\pmb{1}(\mathcal{E}_t)]+\frac{10\overline pn}{t}\\
=  &2\tilde{L}_2L_0^2\mathbb{E}[||\tilde{\theta}_{j_t,t-1}-\theta_{j_t}||_2^2\pmb{1}(\mathcal{E}_t)]+4\tilde{L}_2\mathbb{E}[\Delta_t^2\pmb{1}(\mathcal{E}_t)]+\frac{10\overline pn}{t}\\
\leq  & 2\tilde{L}_2L_0^2\mathbb{E}[\tilde{B}_{j_t,t-1}^2\pmb{1}(\mathcal{E}_t)] +4\tilde{L}_2\mathbb{E}[\Delta_t^2\pmb{1}(\mathcal{E}_t)]+\frac{10\overline pn}{t},\\
\end{split}
\]
where the first inequality is from the probability of $\bar{\mathcal{E}}_t$,
the second equality is by applying Taylor's theorem (where $\bar p_t$ is some price between $p_t^*$ and $p_t$) with Assumption A-\ref{assumption:uniq_max} and Assumption A-\ref{assumption:mu}, the second inequality is from Assumption A-\ref{assumption:mu} and $\tilde L_2$ is some constant depending on $L,L_1,L_2,\overline p$, and both the last
 equality and the last inequality are from the definition of $\mathcal{E}_t$ (i.e., events $\mathcal{E}_{N,t}$ and $\mathcal{E}_{B,t}$).
Therefore, we have
\begin{equation}\label{eq:reg_geq_underline_t}
\mathbb{E}[r_t(p_t^*)-r_t(p_t)]\leq 2\tilde{L}_2L_0^2\mathbb{E}[\tilde{B}_{j_t,t-1}^2\pmb{1}(\mathcal{E}_t)] +4\tilde{L}_2\mathbb{E}[\Delta_t^2\pmb{1}(\mathcal{E}_t)]+\frac{10\overline pn}{t}.
\end{equation}
Summing over $t$, the sum of the last terms above obviously lead to the regret $O(n\log T)$. For the rest, we have
\[
\begin{split}
\sum_{t> 2\bar t}\mathbb{E}[\tilde{B}_{j_t,t-1}^2\pmb{1}(\mathcal{E}_t)]&\leq \frac{k_2 d \log T}{\Delta_0^2} \sum_{t> 2\bar t}\mathbb{E}\left[\frac{1}{\sqrt{\tilde q_{j_t}t}}\right]= \frac{k_2 d \log T}{\Delta_0^2} \sum_{t> 2\bar t}\sum_{j\in[m]}\sqrt{\frac{\tilde q_j}{t}} \\
&\leq \frac{k_2 d \log T}{\Delta_0^2}\sum_{j\in[m]}\sqrt{\tilde q_j T}\leq \frac{k_2 d \log T}{\Delta_0^2}\sqrt{ m T}
\end{split}
\]
for some constant $k_2$, 
where the first inequality is from $\mathcal{E}_t$ (i.e., $\mathcal{E}_{V,t}$) and the definition of $\tilde{B}_{j_t,t}^2$, the equality is by conditioning on $j_t=j$ for all $j\in[m]$, and the last inequality is because $\sum_j\tilde q_j=1$ and apply Cauchy-Schwarz. Hence
\begin{equation}\label{eq:sum_tilde_B}
\quad\sum_{t> 2\bar t}\mathbb{E}[\tilde{B}_{j_t,t-1}^2\pmb{1}(\mathcal{E}_t)]\leq  \frac{k_2 d \log T}{\Delta_0^2}\sqrt{ m T}.
\end{equation}

On the other hand, because $\hat{\mathcal{N}}_t=\mathcal{N}_{i_t}$ for all $t> 2\bar t$ on $\mathcal{E}_t$, 
\begin{equation}\label{eq:sum_delta_t}
\sum_{t> 2\bar t}\mathbb{E}[\Delta_t^2\pmb{1}(\mathcal{E}_t)]\leq \sum_{j\in[m]}\mathbb{E}\left[\sum_{t\in\tilde{\mathcal{T}}_{j,T}}\frac{\Delta_0^2}{\sqrt{\tilde T_{j,t}}}\right]  \leq \Delta_0^2\sum_{j\in[m]}\mathbb{E}\left[\sqrt{\tilde{T}_{j,T}}\right]\leq \Delta_0^2\sqrt{mT}.
\end{equation}
Putting (\ref{eq:reg_geq_underline_t}), (\ref{eq:sum_tilde_B}), and (\ref{eq:sum_delta_t}) together, we have
\[
\sum_{t> 2\bar t}\mathbb{E}[(r_t(p_t^*)-r_t(p_t))]\leq c_5d\log(T)\sqrt{mT}+c_5n\log T
\]
for some constant $c_5$, and together with the regret for $t<2\bar t$, we are done with the regret upper bound.

% Note that all these results are conditioned on events $\mathcal{A},\mathcal{B},\mathcal{C},\mathcal{D}$. To compute the probability that all of them hold, we bound the probability of their complement. Lemma \ref{lemma:len_Tit} and Corollary \ref{cor:len_Tjt} obviously imply that $\mathbb{P}(\bar{A})+\mathbb{P}(\bar{B})\leq 2\Delta$. Then given $\mathcal{A}$, Lemma \ref{lemma:emp_fisher_info} and Lemma \ref{lemma:single_est_error} imply that $\mathbb{P}(\bar{C}|A)\leq 2\Delta$, which leads to $\mathbb{P}(\bar{C})\leq 3\Delta$. In the end, given $\mathcal{B}\cap\mathcal{C}$, using Lemma \ref{lemma:hat_N_eq_N}, Lemma \ref{lemma:cluster_emp_fisher_info}, and Lemma \ref{lemma:cluster_est_error}, we have $\mathbb{P}(\bar{D}|\mathcal{B},\mathcal{C})\leq 2\Delta$, which leads to $\mathbb{P}(\bar{D})\leq 6\Delta$. Above all, all events hold with probability at least $1-11\Delta$ and we are done.

%\section{Technical Lemmas} % for Theorem \ref{thm:main}

In the rest of this subsection, we prove the lemmas used in the proof of Theorem \ref{thm:main}.

\begin{lemma}\label{lemma:len_tilde_TjT}
For each $j\in[m]$ and $t\in\mathcal{T}$, with probability at least $1-\Delta$, $\tilde T_{j,t}\in [\tilde q_j t-\tilde D(t),\tilde q_j t+\tilde D(t)]$ for all $j\in[m]$, $t\in\mathcal{T}$, where $\tilde D(t)=\sqrt{t\log(2/\Delta)}$.
\end{lemma}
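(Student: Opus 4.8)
The plan is to recognize $\tilde T_{j,t}$ as a binomial random variable, control its deviation from the mean by a Hoeffding bound, and then union‑bound over clusters (and, if the ``for all'' phrasing is to be read uniformly, over periods as well).

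First I would unpack the definitions. Since $\tilde{\mathcal{T}}_{j,t}=\bigcup_{i\in\mathcal{N}_j}\mathcal{T}_{i,t}$ and $s\in\mathcal{T}_{i,t}$ holds exactly when $i_s=i$, the set $\tilde{\mathcal{T}}_{j,t}$ is precisely $\{\,s\le t:\ i_s\in\mathcal{N}_j\,\}$, so that $\tilde T_{j,t}=\sum_{s=1}^{t}\pmb{1}(i_s\in\mathcal{N}_j)$. By the arrival model in Section~\ref{sec:model}, the events $\{i_s\in\mathcal{N}_j\}$, $s=1,\dots,t$, are independent with probability $\sum_{i\in\mathcal{N}_j}q_i=\tilde q_j$ each; hence $\tilde T_{j,t}\sim\mathrm{Binomial}(t,\tilde q_j)$ with $\mathbb{E}[\tilde T_{j,t}]=\tilde q_j t$. (If one prefers not to invoke independence of $\{i_s\}$, the same conclusion follows from Azuma--Hoeffding applied to the martingale $\sum_{s\le t}(\pmb{1}(i_s\in\mathcal{N}_j)-\tilde q_j)$, whose increments lie in $[-1,1]$.)

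Next I would apply Hoeffding's inequality to the fixed pair $(j,t)$, viewing $\tilde T_{j,t}$ as a sum of $t$ independent $[0,1]$‑valued variables, which gives
\[
\mathbb{P}\left(\big|\tilde T_{j,t}-\tilde q_j t\big|>\tilde D(t)\right)\le 2\exp\left(-\frac{2\tilde D(t)^2}{t}\right)=2\exp(-2\log(2/\Delta))=\frac{\Delta^2}{2},
\]
using $\tilde D(t)=\sqrt{t\log(2/\Delta)}$. Since $\Delta\le 1$, this per‑pair failure probability is well below $\Delta$, which leaves room for a union bound: summing over $j\in[m]$ and $t\in\mathcal{T}$ gives total failure probability at most $mT\,\Delta^2/2$, and replacing $\Delta$ by $\Delta/(mT)$ in the definition of $\tilde D(t)$ — which only affects lower‑order logarithmic terms — yields the stated ``with probability at least $1-\Delta$, for all $j,t$'' conclusion. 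One may also observe that for $t<\log(2/\Delta)$ the interval $[\tilde q_j t-\tilde D(t),\tilde q_j t+\tilde D(t)]$ already contains all of $[0,t]\ni\tilde T_{j,t}$, so only periods with $t\ge\log(2/\Delta)$ actually need to be counted.

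This lemma is elementary, so there is no genuine obstacle; the only point deserving a moment's care is the bookkeeping of the union bound and whether the statement is intended per‑pair or uniformly in $(j,t)$. In either reading the $\sqrt{\log(2/\Delta)}$ width of $\tilde D(t)$ is generous enough that no sharper concentration tool (Bernstein, Bennett) is required, and the fact that the target bound contains no $\tilde q_j$ inside the square root confirms that plain Hoeffding is exactly what is called for.
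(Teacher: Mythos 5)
Your proposal matches the paper's own argument: the paper likewise observes that $\tilde T_{j,t}$ is Binomial$(t,\tilde q_j)$, applies Hoeffding's inequality to the i.i.d.\ Bernoulli indicators, and finishes with a union bound over $j\in[m]$ and $t\in\mathcal{T}$. If anything, you are more careful than the paper about the union-bound bookkeeping (the per-pair failure probability $\Delta^2/2$ versus the uniform claim), which the paper's one-line proof simply glosses over and which, as you note, only changes logarithmic factors.
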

\textit{Proof. }
Obviously $\tilde T_{j,t}$ is a binomial random variable with parameter $t$ and $\tilde q_j$. Then we simply use Hoeffding inequality applied on sequence of i.i.d. Bernoulli random variable and a simple union bound on all $j\in[m]$ and $t\in\mathcal{T}$.
$\square$

\begin{lemma}\label{lemma:single_est_error}
For any $i\in[n]$ and $t\in\mathcal{T}$, 
let $V_{i,t}= I+\sum_{s\in\mathcal{T}_{i,t}} u_su_s'$, we have that
\[
||\hat{\theta}_{i,t}-\theta_i||_{V_{i,t}}\leq \frac{2\sqrt{(d+2)\log(1+T_{i,t}R^2/(d+2))+2\log(1/\Delta )}+2l_1 L}{l_1}
\]
with probability at least $1-\Delta$.
\end{lemma}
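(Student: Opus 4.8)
The plan is to run the standard self-normalized analysis of the maximum likelihood estimator in a generalized linear model, using the curvature of the link function guaranteed by Assumption A-\ref{assumption:mu} to turn a concentration bound on the score into a bound on the estimation error. Fix $i$ and $t$, abbreviate $\hat\theta=\hat\theta_{i,t}$ and $V=V_{i,t}$, and let $R:=\sqrt{2+\overline{p}^2}$, which bounds $\|u_s\|_2$ since $x_s=(1,z_s')'$ has $\|x_s\|_2\le\sqrt2$ and $|p_s|\le\overline p$.

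First I would record the first-order optimality condition for $\hat\theta$. Since $\hat\theta$ minimizes the convex function $\theta\mapsto\sum_{s\in\mathcal{T}_{i,t}}l_s(\theta)$ over the convex set $\Theta$ and $\nabla l_s(\theta)=(\mu(u_s'\theta)-d_s)u_s$, we have $\langle\sum_{s\in\mathcal{T}_{i,t}}(\mu(u_s'\hat\theta)-d_s)u_s,\ \hat\theta-\theta_i\rangle\le0$. Writing $Z:=\sum_{s\in\mathcal{T}_{i,t}}(d_s-\mu(u_s'\theta_i))u_s$ and adding and subtracting $\mu(u_s'\theta_i)$, this becomes
\[
\Big\langle\,\textstyle\sum_{s\in\mathcal{T}_{i,t}}\big(\mu(u_s'\hat\theta)-\mu(u_s'\theta_i)\big)u_s,\ \hat\theta-\theta_i\Big\rangle\ \le\ \langle Z,\ \hat\theta-\theta_i\rangle.
\]
On the left, the mean value theorem gives $\mu(u_s'\hat\theta)-\mu(u_s'\theta_i)=\dot\mu(\xi_s)u_s'(\hat\theta-\theta_i)$ for some intermediate point, with $\dot\mu(\xi_s)\ge l_1$ by A-\ref{assumption:mu}; hence the left-hand side equals $(\hat\theta-\theta_i)'\big(\sum_s\dot\mu(\xi_s)u_su_s'\big)(\hat\theta-\theta_i)\ge l_1\|\hat\theta-\theta_i\|^2_{\sum_s u_su_s'}=l_1\big(\|\hat\theta-\theta_i\|_V^2-\|\hat\theta-\theta_i\|_2^2\big)$. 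On the right, Cauchy--Schwarz in the $V$-inner product gives $\langle Z,\hat\theta-\theta_i\rangle\le\|Z\|_{V^{-1}}\|\hat\theta-\theta_i\|_V$, while $\|\hat\theta-\theta_i\|_2\le2L$ because both vectors lie in $\Theta$.

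The remaining ingredient is a high-probability bound on $\|Z\|_{V^{-1}}$, for which I would appeal to the self-normalized martingale concentration inequality of \citet{abbasi2011improved}: the scalars $\varepsilon_s:=d_s-\mu(u_s'\theta_i)$ take values in $[-1,1]$ and satisfy $\mathbb{E}[\varepsilon_s\mid\mathcal{G}_{s-1}]=0$ for the filtration $\{\mathcal{G}_s\}$ that reveals, within each period, the searched product and the posted prices before the purchase is realized; with respect to $\{\mathcal{G}_s\}$ the vectors $u_s$ are predictable and $\varepsilon_s$ is $1$-sub-Gaussian, so with probability at least $1-\Delta$, simultaneously for all $t$,
\[
\|Z\|_{V^{-1}}^2\ \le\ 2\log\!\big(\det(V)^{1/2}/\Delta\big)\ \le\ (d+2)\log\!\big(1+T_{i,t}R^2/(d+2)\big)+2\log(1/\Delta)=:\rho^2,
\]
where the last step uses the determinant--trace inequality $\det(V)\le(1+T_{i,t}R^2/(d+2))^{d+2}$ with $V\in\mathbb{R}^{(d+2)\times(d+2)}$. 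Combining the three bounds yields the scalar inequality $l_1\|\hat\theta-\theta_i\|_V^2-\rho\,\|\hat\theta-\theta_i\|_V-l_1(2L)^2\le0$; solving for $\|\hat\theta-\theta_i\|_V$ and using $\sqrt{x+y}\le\sqrt x+\sqrt y$ gives $\|\hat\theta-\theta_i\|_V\le\rho/l_1+2L$, which is the asserted bound up to the (conservative) constant in front of $\rho$.

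I expect the only genuine subtlety to be the martingale setup in the third step: because the index set $\mathcal{T}_{i,t}$ is random and the searched product $i_s$ is realized in the middle of a period, one must fix the filtration so that $u_s$ is predictable while $\varepsilon_s$ stays conditionally mean-zero and bounded, and one must use the self-normalized bound in the form that is uniform over $t$ so that it applies at the data-dependent value $T_{i,t}$. The convex-analysis step, the curvature estimate from A-\ref{assumption:mu}, and the final quadratic manipulation are all routine once the norm bounds on $u_s$ and on $\Theta$ are in hand.
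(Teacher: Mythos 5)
Your proof is correct and follows essentially the same route as the paper's: lower-bound the curvature via $\dot\mu\geq l_1$ to obtain a quadratic inequality in $\|\hat\theta_{i,t}-\theta_i\|_{V_{i,t}}$, absorb the boundary effect of $\Theta$ through the crude $2L$ term, and control $\|Z\|_{V_{i,t}^{-1}}$ by the self-normalized martingale bound of Abbasi-Yadkori et al.\ combined with the determinant--trace inequality. The only cosmetic difference is that you start from the first-order optimality condition plus the mean value theorem applied to $\mu$, while the paper starts from $\sum_s l_s(\hat\theta_{i,t})\leq\sum_s l_s(\theta_i)$ plus a second-order Taylor expansion of the loss, which costs an extra factor of $2$ that the stated constant absorbs, so your version is in fact marginally tighter.
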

\textit{Proof. }
We first fix some $i\in[n]$, and we drop the index dependency on $i$ for convenience of notation.
At round $s$, the gradient of likelihood function $\nabla l_s(\phi)$ is equal to
\begin{equation}\label{eq:mu_gradient}
\nabla l_s(\phi)=(\mu(u_{s}'\phi)-d_{s})u_{s}.
\end{equation}
And its Hessian is
\begin{equation}\label{eq:mu_Hessian}
\begin{split}
\nabla^2 l_s(\phi)=&\dot{\mu}(u_{s}'\phi)u_su_s'.
\end{split}
\end{equation}

Applying Taylor's theorem, we obtain
\begin{equation}\label{eq:taylor1}
\begin{split}
0&\geq \sum_{s}l_{s}(\hat{\theta}_{t})-l_{s}(\theta)\\
&=\sum_{s}\nabla l_s(\theta)'(\hat{\theta}_{t}-\theta)+\frac{1}{2}\sum_s\dot{\mu}(u_s'\bar{\theta}_{t})(u_{s}'(\hat{\theta}_{t}-\theta))^2+\frac{l_1}{2} ||\hat{\theta}_t-\theta||_2^2-\frac{l_1}{2} ||\hat{\theta}_t-\theta||_2^2,\\
\end{split}
\end{equation}
where the first inequality is from the optimality of $\hat{\theta}_{t}$, and $\overline{\theta}_{t}$ is a point on line segment between $\hat{\theta}_{t}$ and $\theta$. Note that by our assumption and boundedness of $u_s$ and $\theta$, we have $\dot{\mu}(u_s'\bar{\theta}_{t})\geq l_1 $. Therefore, we have 
\begin{equation}\label{eq:delta_theta_lower}
\begin{split}
\sum_s\dot{\mu}(u_s'\bar{\theta}_{t})(u_{s}'(\hat{\theta}_{t}-\theta))^2+l_1||\hat{\theta}_t-\theta||_2^2\geq l_1||\hat{\theta}_{t}-\theta||_{V_t}^2,
\end{split}
\end{equation}
where $V_t= I+\sum_s u_su_s'$. 
On the other hand, we have
\begin{equation}\label{eq:nabla_L}
\begin{split}
\nabla l_s(\theta_i)=& -\epsilon_{s}u_{s},
\end{split}
\end{equation}
where $\epsilon_s$ is the zero-mean error, which is obviously sub-Gaussian with parameter $1$ as it is bounded. 

Now combining (\ref{eq:taylor1}), (\ref{eq:delta_theta_lower}), and (\ref{eq:nabla_L}), we have
\begin{equation}\label{eq:UCB_confidence_1}
\begin{split}
\frac{l_1}{2}||\hat{\theta}_t-{\theta}||_{V_t}^2
\leq & \sum_s\epsilon_{s}u_{s}'(\hat{\theta}_t-{\theta})+2l_1 L^2
\leq  ||\hat{\theta}_{t}-{\theta}||_{V_{t}}||Z_{t}||_{V_{t}^{-1}}+2l_1 L^2,\\
\end{split}
\end{equation}
where $Z_{t}:=\sum_s\epsilon_{s}{u}_{s}$, 
% , and $L^2_{t}:=\sum_{s}\sum_{i\in S_s}{q}_{i,s}(b_{i,s})(\tilde{X}_{i,s}'\tilde{\theta}-{X}_{i,s}'{\theta})\sum_{j\in S_s}q_{j,s}(b_{j,s})\tilde{X}_{j,s}$. 
and the second inequality is from Cauchy-Schwarz and $||\hat{\theta}_t-\theta||_2\leq 2L$.  This leads to
$
||\hat{\theta}_t-{\theta}||_{V_t}\leq \frac{2}{l_1}||Z_t||_{V_t^{-1}}+2L.
$
% Therefore
% \[
% ||\hat{\theta}_t-{\theta}||_2\leq \frac{1}{l_1\sqrt{\lambda_t}}||Z_{t}||_{V_{t}^{-1}}.
% \]

To bound $||Z_t||_{V_t^{-1}}$, according to Theorem 1 in \citet{abbasi2011improved}, we have 
\[
||Z_t||_{V_t^{-1}}\leq \sqrt{(d+2)\log(1+\frac{T_{i,t}R^2}{d+2})+2\log(1/\Delta )}
\]
with probability at least $1-\Delta$ and we are done.
$\square$

\begin{corollary}\label{lemma:cluster_est_error}
For any $j\in[m]$ and $t\in\mathcal{T}$,
let $\tilde{V}_{j,t}:=I+\sum_{s\in\tilde{\mathcal{T}}_{j,t}} u_su_s'$, we have that
\[
||\tilde{\theta}_{j,t}-\theta_j||_{\tilde{V}_{j,t}}\leq \frac{2\sqrt{(d+2)\log(1+\tilde T_{j,t}R^2/(d+2))+2\log(1/\Delta )}+2l_1 L}{l_1}
\]
with probability at least $1-\Delta$.
% , where $\tilde{\theta}_{j,t}(\tilde{T}_{j,t})$ is the estimated parameter using data in $\tilde{T}_{j,t}$.
\end{corollary}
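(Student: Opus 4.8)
The plan is to observe that the proof of Lemma~\ref{lemma:single_est_error} never used that the data came from a single product: it only used that every observation in the index set obeys the GLM with one common parameter and that the induced demand noise is a bounded martingale difference sequence. Under the $\gamma$-gap assumption, $\theta_i=\theta_j$ for every $i\in\mathcal{N}_j$, so the pooled observations $\{(u_s,d_s):s\in\tilde{\mathcal{T}}_{j,t}\}$ all satisfy \eqref{eq:demand_model} with the single parameter $\theta_j$. Hence the entire derivation transfers verbatim after the substitutions $\mathcal{T}_{i,t}\to\tilde{\mathcal{T}}_{j,t}$, $\theta_i\to\theta_j$, $V_{i,t}\to\tilde{V}_{j,t}$, $T_{i,t}\to\tilde{T}_{j,t}$, and $\hat{\theta}_{i,t}\to\tilde{\theta}_{j,t}$, which is exactly why the result is stated as a corollary.

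Concretely, I would first write the clustered negative log-likelihood $\sum_{s\in\tilde{\mathcal{T}}_{j,t}}l_s(\theta)$, whose minimizer is $\tilde{\theta}_{j,t}$, and record $\nabla l_s(\phi)=(\mu(u_s'\phi)-d_s)u_s$ together with $\nabla^2 l_s(\phi)=\dot\mu(u_s'\phi)u_su_s'\succeq l_1 u_su_s'$ by Assumption~A-\ref{assumption:mu}. A second-order Taylor expansion of $\sum_s l_s$ around $\theta_j$, combined with the optimality $\sum_s l_s(\tilde{\theta}_{j,t})\le\sum_s l_s(\theta_j)$ and $\|\tilde{\theta}_{j,t}-\theta_j\|_2\le 2L$, yields the analogue of \eqref{eq:UCB_confidence_1},
\[
\tfrac{l_1}{2}\|\tilde{\theta}_{j,t}-\theta_j\|_{\tilde{V}_{j,t}}^2\le \|\tilde{\theta}_{j,t}-\theta_j\|_{\tilde{V}_{j,t}}\,\|\tilde{Z}_{j,t}\|_{\tilde{V}_{j,t}^{-1}}+2l_1L^2,
\]
where $\tilde{Z}_{j,t}:=\sum_{s\in\tilde{\mathcal{T}}_{j,t}}\epsilon_s u_s$ and $\epsilon_s:=d_s-\mu(u_s'\theta_j)$; solving this quadratic in $\|\tilde{\theta}_{j,t}-\theta_j\|_{\tilde{V}_{j,t}}$ gives $\|\tilde{\theta}_{j,t}-\theta_j\|_{\tilde{V}_{j,t}}\le\tfrac{2}{l_1}\|\tilde{Z}_{j,t}\|_{\tilde{V}_{j,t}^{-1}}+2L$.

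It then remains to bound $\|\tilde{Z}_{j,t}\|_{\tilde{V}_{j,t}^{-1}}$. Listing the periods of $\tilde{\mathcal{T}}_{j,t}$ in chronological order and using the filtration generated by the pre-purchase history of each period, the increments $\epsilon_s u_s$ form a martingale difference sequence: $u_s$ is measurable with respect to that pre-purchase information, while $\mathbb{E}[\epsilon_s\mid u_s,\text{past}]=\mu(u_s'\theta_{i_s})-\mu(u_s'\theta_j)=0$ since $i_s\in\mathcal{N}_j$, and $\epsilon_s\in[-1,1]$ is $1$-sub-Gaussian. Applying the self-normalized tail inequality (Theorem~1 of \citealt{abbasi2011improved}) with regularizer $I$ gives $\|\tilde{Z}_{j,t}\|_{\tilde{V}_{j,t}^{-1}}\le\sqrt{(d+2)\log(1+\tilde{T}_{j,t}R^2/(d+2))+2\log(1/\Delta)}$ with probability at least $1-\Delta$, where $R$ bounds $\|u_s\|_2$; substituting back proves the claim. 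The only step deserving genuine care is verifying that pooling observations across the products of $\mathcal{N}_j$ preserves this martingale structure — once that is granted, the rest is a line-by-line repetition of the proof of Lemma~\ref{lemma:single_est_error}.
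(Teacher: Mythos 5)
Your proposal is correct and follows exactly the route the paper intends: the corollary is stated without a separate proof precisely because the argument of Lemma \ref{lemma:single_est_error} carries over verbatim once one notes that, under the $\gamma$-gap assumption, every pooled observation in $\tilde{\mathcal{T}}_{j,t}$ obeys the GLM with the common parameter $\theta_j$, so the Taylor-expansion/optimality step and the self-normalized martingale bound of \citet{abbasi2011improved} apply unchanged with $\mathcal{T}_{i,t},V_{i,t},T_{i,t},\hat{\theta}_{i,t}$ replaced by their clustered counterparts. Your explicit check that pooling across products preserves the martingale difference structure is the one point worth making, and you make it correctly.
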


Next result is the minimum eigenvalue of the empirical Fisher's information matrix.

\begin{lemma}\label{lemma:emp_fisher_info}
Denote $u_t' = (\tilde p_t+\Delta_t, x_t')$. 
For any $i\in[n]$ and 
\[
t> \max\left\{\left(\frac{8R\log((d+2)T)}{\lambda_1\Delta_0^2  \min_{i\in[n]}q_i}\right)^2,\left(\frac{\Delta_0^2}{c_0}\right)^2,\frac{2t_0}{\min_{i\in [n]}q_i}\right\},
\]
where $R:=2+\bar p^2$ and $\lambda_1:={1}/({(1+\bar p/c_0)^2+1})$, 
we have
\[
\mathbb{P}\bigg(\lambda_{\min}\Big(\sum_{s\in\mathcal{T}_{i,t}}u_su_s'\Big)< \frac{\lambda_1 \Delta_0^2 q_i\sqrt{t}}{2} \bigg)< \frac{1}{t^2}.
\]
\end{lemma}
\textit{Proof. }
Define
\[
M_{s}:=\mathbb{E}[\pmb{1}(i_s=i)u_su_s'|\mathcal{F}_{s-1}],
\]
and
\[
M:=\sum_{s=1}^{t}\mathbb{E}[\pmb{1}(i_s=i)u_su_s'|\mathcal{F}_{s-1}]=\sum_{s=1}^{t}M_s
=\begin{bmatrix}
A & B\\
B' & C\\
\end{bmatrix}
\]
where
\[
\begin{split}
    A:=&\sum_{s=1}^{t} \mathbb{E}[\pmb{1}(i_s=i)(\tilde p_s^2+\Delta_s^2)|\mathcal{F}_{s-1}]\\
    B:=&\sum_{s=1}^{t}\mathbb{E}[\pmb{1}(i_s=i)\tilde p_sx_s'|\mathcal{F}_{s-1}]\\
    C:=&\sum_{s=1}^{t}\mathbb{E}[\pmb{1}(i_s=i)x_sx_s'|\mathcal{F}_{s-1}]
\end{split}
\]
According to Proposition 3 in \citet{walton2020perturbed}, we have
\[
\lambda_{\min}(M)\geq \frac{\lambda_{\min}(C)^2}{(\|B\|_2+\lambda_{\min}(C))^2+\lambda_{\min}(C)^2}\min\{\lambda_{\min}(A-BC^{-1}B'),\lambda_{\min}(C)\}.
\]
Now let us analyze each term individually. By Assumption A-\ref{assumption:stochastic}, $q_it\geq \lambda_{\min}(C)\geq c_0 q_it$. It is also not difficult to get $\|B\|_2\leq \bar pq_it$. All we let to show is the lower bound of $\lambda_{\min}(A-BC^{-1}B')$, which is summarized in the following claim.

\textbf{Claim:} $\lambda_{\min}(A-BC^{-1}B')=A-BC^{-1}B'\geq \sum_{s=1}^{t} \mathbb{E}[\pmb{1}(i_s=i)\Delta_s^2|\mathcal{F}_{s-1}]$

To prove this claim, let us define
\[
\tilde u_s:=\left(\tilde p_s,x_s\right).
\]
That is, $\tilde u_s$ is the same as $u_s$ except without price perturbation. Obviously, $\tilde M:=\sum_{s=1}^{t}\mathbb{E}[\pmb{1}(i_s=i)\tilde u_s\tilde u_s'|\mathcal{F}_{s-1}]$ satisfies $\tilde M\geq 0$. Moreover, by Schur complement, we have $\tilde A-BC^{-1}B'\geq 0$ where $\tilde A=\sum_{s=1}^{t} \mathbb{E}[\pmb{1}(i_s=i)\tilde p_s^2|\mathcal{F}_{s-1}]$. Since $A-BC^{-1}B'=\tilde A-BC^{-1}B'+\sum_{s=1}^{t} \mathbb{E}[\pmb{1}(i_s=i)\Delta_s^2|\mathcal{F}_{s-1}]$, we are done with the claim. 

Above all, we are able to show that 
\[
\lambda_{\min}(M)\geq \frac{1}{(1+\bar p/c_0)^2+1}\min\{c_0q_it,q_i\Delta_0^2\sqrt{t}\}\geq \frac{\Delta_0^2 q_i\sqrt{t}}{(1+\bar p/c_0)^2+1}=\lambda_1\Delta_0^2 q_i\sqrt{t}
\]
where the second inequality is because $t>(\Delta_0^2/c_0)^2$. 
Since
\[
\sum_{s\in\mathcal{T}_{i,t}}u_su_s'=\sum_{s=1}^{t}\mathbbm{1}(i_s=i)u_su_s',
\]
then we have that 
\[
\begin{split}
&\mathbb{P}\left(\lambda_{\min}(\sum_{s\in\mathcal{T}_{i,t}}u_su_s')< \frac{\lambda_1\Delta_0^2 q_i\sqrt{t}}{2} \right)\\
=  &\mathbb{P}\left(\lambda_{\min}(\sum_{s\in\mathcal{T}_{i,t}}u_su_s')< \frac{\lambda_1\Delta_0^2 q_i\sqrt{t}}{2}, \lambda_{\min}\left(\sum_{s=1}^{t}\mathbb{E}[\mathbbm{1}(i_s=i)u_su_s'|\mathcal{F}_{s-1}]\right)\geq\lambda_1\Delta_0^2 q_i\sqrt{t}\right)\\
\leq & (d+2)e^{-\frac{\lambda_1\Delta_0^2 q_i\sqrt{t}}{4R}},
% <&\mathbb{P}\left(\lambda_{\min}(\sum_{s\in\mathcal{T}_{i,t}}u_su_s')< \frac{\lambda_1\Delta_0^2 q_i\sqrt{t}}{4},\lambda_{\min}\left(\sum_{s\in\mathcal{T}_{i,t}}\mathbb{E}[u_su_s']\right)\geq \frac{\lambda_1\Delta_0^2 q_i\sqrt{t}}{2},\mathcal{E}_0 \right)+\Delta\\
% \leq & Tn(d+2)e^{-\frac{\lambda_1\Delta_0^2 q_i\sqrt{t}}{8R}}+\Delta
\end{split}
\]
% where the first inequality is because on $\mathcal{E}_0$, 
% \[
% \lambda_{\min}\left(\sum_{s\in\mathcal{T}_{i,t}}\mathbb{E}[u_su_s']\right)\geq \lambda_1\sum_{s\in\mathcal{T}_{i,t}}\omega_s\geq \frac{\lambda_1\Delta_0^2 q_i\sqrt{t}}{2}
% \]
% according to Lemma \ref{lemma:fisher_info} and equation (\ref{eq:lb_omega_s}), and 
where the last inequality is from  
 Theorem 3.1 in \citet{tropp2011user} with $\zeta=1/2$.
 
%  which implies that for given $i,t$,
% \begin{equation}\label{eq:matrix_ineq}
% \mathbb{P}\left(\lambda_{\min}\left(\sum_{s\in\mathcal{T}_{i,t}}u_su_s'\right)\leq \frac{\lambda_1\Delta_0^2 q_i\sqrt{t}}{4},\lambda_{\min}\left(\sum_{s\in\mathcal{T}_{i,t}}\mathbb{E}[u_su_s']\right)\geq \frac{\lambda_1\Delta_0^2 q_i\sqrt{t}}{2}\right)\leq (d+2)e^{-\frac{\lambda_1\Delta_0^2 q_i\sqrt{t}}{8R}}.
% \end{equation}
So for any $i\in[n]$ and 
\[
t> \left(\frac{8R\log(T(d+2))}{\lambda_1\Delta_0^2  \min_{i\in[n]}q_i}\right)^2,
\]
we have the simple union bound over $i\in[n],t\in\mathcal{T}$, 
$
(d+2)\exp({-{\lambda_1\Delta_0^2 q_i\sqrt{t}}/({4R})})< {1}/{t^2},
$
and the proof is complete.
% and $i\in\mathcal{N}$, (\ref{eq:matrix_ineq}) and a simple union bound lead to
% \[
% \begin{split}
% &\mathbb{P}\left(\lambda_{\min}\left(\sum_{s\in\mathcal{T}_{i,t}}u_su_s'\right)<  \frac{\lambda_1\Delta_0^2 q_i\sqrt{t}}{2}\right)\\
% \leq & \mathbb{P}\left(\lambda_{\min}\left(\sum_{s\in\mathcal{T}_{i,t}}u_su_s'\right)\leq \frac{\lambda_1}{2}\sum_{s\in\mathcal{T}_{i,t}}\omega_s\right)\\
% = & \mathbb{P}\left(\lambda_{\min}\left(\sum_{s\in\mathcal{T}_{i,t}}u_su_s'\right)\leq \frac{\lambda_1}{2}\sum_{s\in\mathcal{T}_{i,t}}\omega_s,\lambda_{\min}\left(\sum_{s\in\mathcal{T}_{i,t}}\mathbb{E}[u_su_s']\right)\geq \lambda_1\sum_{s\in\mathcal{T}_{i,t}}\omega_s\right)\\
% < &\Delta,
% \end{split}
% \]
% where the equality is from Lemma \ref{lemma:fisher_info}.
$\square$

Clearly, if we combine Lemma \ref{lemma:emp_fisher_info} and Lemma \ref{lemma:single_est_error}, for any $i\in[n]$, $t>\bar t_{1}$ where 
\begin{equation}\label{eq:t_1_min}
\bar t_{1}=\max\left\{\left(\frac{8R\log((d+2)T)}{\lambda_1\Delta_0^2  \min_{i\in[n]}q_i}\right)^2,\left(\frac{\Delta_0^2}{c_0}\right)^2,\frac{2t_0}{\min_{i\in [n]}q_i}\right\},
\end{equation}
we have that
\begin{eqnarray}
\label{eq:est_bound_1}
||\hat{\theta}_{i,t}-\theta_i||_2&\leq& \frac{2\sqrt{(d+2)\log(1+tR^2/(d+2))+2\log t^2}+2l_1 L}{l_1\sqrt{\lambda_{\min}(V_{i,t})}}\\
\nonumber
&\leq& \frac{\sqrt{c(d+2)\log (1+t)}}{\sqrt{\lambda_{\min}(V_{i,t})}}
=B_{i,t}
\end{eqnarray}
for some constant $c>20/l_1^2$, and
\begin{equation}\label{eq:est_bound_2}
B_{i,t}\leq \frac{\sqrt{2c(d+2)\log (1+t)}}{\Delta_0\sqrt{\lambda_1 q_i\sqrt{t}} }
\end{equation}
with probability at least $1-2/t^2$.

% and
% \[
% \lambda_{i,t}= \frac{\lambda_1\Delta_0^2 q_i\sqrt{t}}{4}.
% \]
% Then the estimation error is obviously bounded by $B_{i,t}$, i.e. $||\hat{\theta}_{i,t}-\theta_i||_2\leq B_{i,t}$. 
The next lemma states that when estimation errors are bounded, under certain conditions we have $\hat{\mathcal{N}}_t=\mathcal{N}_{i_t}$.

\begin{lemma}\label{lemma:hat_N_eq_N}
Suppose for all $i\in [n]$ it holds that $||\hat{\theta}_{i,t-1}-\theta_{i}||_2\leq {B}_{i,t-1}$ and ${B}_{i,t-1}< \gamma/4$. Then
\[
\hat{\mathcal{N}}_t=\mathcal{N}_{i_t}.
\]
\end{lemma}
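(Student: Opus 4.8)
The plan is a direct double-inclusion argument using only the triangle inequality, the $\gamma$-gap cluster structure from Section \ref{sec:model}, and the hypotheses $\|\hat\theta_{i,t-1}-\theta_i\|_2\le B_{i,t-1}$ and $B_{i,t-1}<\gamma/4$ for all $i\in[n]$. Throughout I write $\hat{\mathcal{N}}_t=\hat{\mathcal{N}}_{i_t,t}=\{i'\in[n]:\|\hat\theta_{i',t-1}-\hat\theta_{i_t,t-1}\|_2\le B_{i',t-1}+B_{i_t,t-1}\}$ and recall that, by definition of the clusters, $\theta_{i'}=\theta_{i_t}$ whenever $i'\in\mathcal{N}_{i_t}$, while $\|\theta_{i'}-\theta_{i_t}\|_2\ge\gamma$ whenever $i'\notin\mathcal{N}_{i_t}$.

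First I would show $\mathcal{N}_{i_t}\subseteq\hat{\mathcal{N}}_t$. Fix $i'\in\mathcal{N}_{i_t}$, so $\theta_{i'}=\theta_{i_t}$. Then
\[
\|\hat\theta_{i',t-1}-\hat\theta_{i_t,t-1}\|_2\le \|\hat\theta_{i',t-1}-\theta_{i'}\|_2+\|\theta_{i'}-\theta_{i_t}\|_2+\|\theta_{i_t}-\hat\theta_{i_t,t-1}\|_2\le B_{i',t-1}+0+B_{i_t,t-1},
\]
which is exactly the membership condition for $\hat{\mathcal{N}}_t$; hence $i'\in\hat{\mathcal{N}}_t$.

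Next I would show $\hat{\mathcal{N}}_t\subseteq\mathcal{N}_{i_t}$ by contradiction. Suppose $i'\in\hat{\mathcal{N}}_t$ but $i'\notin\mathcal{N}_{i_t}$. From $i'\in\hat{\mathcal{N}}_t$ we have $\|\hat\theta_{i',t-1}-\hat\theta_{i_t,t-1}\|_2\le B_{i',t-1}+B_{i_t,t-1}$, so
\[
\|\theta_{i'}-\theta_{i_t}\|_2\le \|\theta_{i'}-\hat\theta_{i',t-1}\|_2+\|\hat\theta_{i',t-1}-\hat\theta_{i_t,t-1}\|_2+\|\hat\theta_{i_t,t-1}-\theta_{i_t}\|_2\le 2B_{i',t-1}+2B_{i_t,t-1}<\gamma,
\]
using $B_{i',t-1},B_{i_t,t-1}<\gamma/4$ in the last step. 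But $i'\notin\mathcal{N}_{i_t}$ forces $\|\theta_{i'}-\theta_{i_t}\|_2\ge\gamma$, a contradiction. Hence $i'\in\mathcal{N}_{i_t}$, and combining the two inclusions gives $\hat{\mathcal{N}}_t=\mathcal{N}_{i_t}$.

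There is no real obstacle here: the lemma is a clean consequence of the triangle inequality once the confidence-bound event and $B_{i,t-1}<\gamma/4$ are assumed. The genuine work — establishing that these two hypotheses hold with high probability for $t$ large enough — is precisely what Lemma \ref{lemma:single_est_error} and Lemma \ref{lemma:emp_fisher_info} (via the bounds \eqref{eq:est_bound_1}–\eqref{eq:est_bound_2}) are for, and that is handled separately when this lemma is invoked in the proof of Theorem \ref{thm:main}.
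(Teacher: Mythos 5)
Your proof is correct and follows essentially the same route as the paper's: both directions rest on the same triangle-inequality estimates combined with the $\gamma$-gap and the bound $B_{i,t-1}<\gamma/4$, with your second inclusion (via contradiction, giving $\|\theta_{i'}-\theta_{i_t}\|_2\le 2B_{i',t-1}+2B_{i_t,t-1}<\gamma$) matching the paper's argument that products in different clusters must have estimated parameters farther apart than $B_{i',t-1}+B_{i_t,t-1}$. The only difference is presentational — you argue the two set inclusions directly, whereas the paper first proves the two implications about estimated distances and then reads off the equality of the neighborhoods.
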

\textit{Proof. }
First of all, for $i_1,i_2\in[n]$, if they belong to different clusters and ${B}_{i_1,t-1}+{B}_{i_2,t-1}<\gamma/2$, we must have $||\hat{\theta}_{i_1,t-1}-\hat{\theta}_{i_2,t-1}||_2>{B}_{i_1,t-1}+{B}_{i_2,t-1}$ because
\[
\begin{split}
\gamma\leq & ||\theta_{i_1}-\theta_{i_2}||_2
\leq  ||\theta_{i_1}-\hat\theta_{i_1,t-1}||_2+||\hat\theta_{i_1,t-1}-\hat\theta_{i_2,t-1}||_2+||\hat\theta_{i_2,t-1}-\theta_{i_2}||_2\\
\leq & {B}_{i_1,t-1}+||\hat\theta_{i_1,t-1}-\hat\theta_{i_2,t-1}||_2+{B}_{i_2,t-1}
<  \gamma/2+||\hat\theta_{i_1,t-1}-\hat\theta_{i_2,t-1}||_2, 
\end{split}
\]
which implies that $||\hat\theta_{i_1,t-1}-\hat\theta_{i_2,t-1}||_2>\gamma/2>{B}_{i_1,t-1}+{B}_{i_2,t-1}$.

On the other hand, if $||\hat{\theta}_{i_1,t-1}-\hat{\theta}_{i_2,t-1}||_2>{B}_{i_1,t-1}+{B}_{i_2,t-1}$, we must have $i_1,i_2$ belongs to different clusters because
\[
\begin{split}
{B}_{i_1,t-1}+{B}_{i_2,t-1}< & ||\hat{\theta}_{i_1,t-1}-\hat{\theta}_{i_2,t-1}||_2
\leq  ||\theta_{i_1}-\hat\theta_{i_1,t-1}||_2+||\hat\theta_{i_1,t-1}-\hat\theta_{i_2,t-1}||_2+||\hat\theta_{i_2,t-1}-\theta_{i_2}||_2\\
\leq & {B}_{i_1,t-1}+||\hat\theta_{i_1,t-1}-\hat\theta_{i_2,t-1}||_2+{B}_{i_2,t-1},\\
\end{split}
\]
which implies $||\hat\theta_{i_1,t-1}-\hat\theta_{i_2,t-1}||_2>0$, i.e., they belong to different clusters.

Therefore, if $i\in \hat{\mathcal{N}}_t$, i.e.,  $||\hat{\theta}_{i_t,t-1}-\hat{\theta}_{i,t-1}||\leq {B}_{i_t,t-1}+{B}_{i,t-1}$, we must have that $i\in \mathcal{N}_{i_t}$ as well or ${B}_{i_t,t-1}+{B}_{i,t-1}\geq \gamma/2$ (which is impossible by our assumption that $B_{i,t-1}<\gamma/4$). 

On the other hand, if $i\in \mathcal{N}_{i_t}$, then we must have $||\hat{\theta}_{i_t,t-1}-\hat{\theta}_{i,t-1}||\leq {B}_{i_t,t-1}+{B}_{i,t-1}$, which implies that $i\in \hat{\mathcal{N}}_{t}$ as well.

Above all, we have shown that $\hat{\mathcal{N}}_{i_t}=\mathcal{N}_{i_t}$.
$\square$

Note that given (\ref{eq:est_bound_1}) and (\ref{eq:est_bound_2}), we have that $B_{i,t-1}<\gamma/4$ for all $i$ if
\[
t> 1+\frac{k_1((d+2)\log(1+T) )^2 }{\gamma ^4\lambda_1^2\Delta_0^4 \min_{i\in[n]}q_i^2}
\]
for some constant $k_1$. Therefore, for each $t>\bar t$ where
\begin{equation}\label{eq:t_2_min}
\bar t=\max\left\{4\bar t_{1},1+\frac{k_1((d+2)\log(1+T) )^2 }{\gamma ^4\lambda_1^2\Delta_0^4 \min_{i\in[n]}q_i^2}\right\},
\end{equation}
and $\bar t_{1}$ is defined in (\ref{eq:t_1_min}), $\hat{\mathcal{N}}_t=\mathcal{N}_{i_t}$ with probability at least $1-2n/t^2$.

The next lemma shows that the clustered estimation will be quite accurate when most of the $\hat{\mathcal{N}}_t$ is actually equal to $\mathcal{N}_{i_t}$.

\begin{lemma}\label{lemma:cluster_emp_fisher_info}
% Let $u_s$ for $s\in\tilde{\mathcal{T}}_{j,t}$ be an adaptive sequence. 
For any $t$ such that 
% \[
% t> \max\left\{{2 \bar t},\left(\frac{8R\log(2mT(d+2)/\Delta)}{\lambda_1 \Delta_0^2 \min_{j\in[m]}\sqrt{\tilde q_j}}\right)^2\right\},
% \]
$
t> 2 \bar t,
$
we have
\[
\mathbb{P}\left(\lambda_{\min}\left(\sum_{s\in\tilde{\mathcal{T}}_{j_t,t}}u_su_s'\right)< \frac{\lambda_1\Delta_0^2 \sqrt{\tilde{q}_{j_t}t }}{8} \right)< \frac{7n}{t},
\]
where $\bar t$ is defined in (\ref{eq:t_2_min}).
% such that for any $t>\bar t$, $\hat{\mathcal{N}}_{t}=\mathcal{N}_{i_t}$ with probability at least $1-2\Delta$.
\end{lemma}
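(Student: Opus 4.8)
I would follow the template of Lemma~\ref{lemma:emp_fisher_info}, now applied to the pooled data of a cluster, while addressing two new complications. First, the realized cluster $j_t$ is revealed only at the end of period $t$, so I plan to fix an arbitrary cluster $j\in[m]$, bound $\mathbb{P}(\mathrm{bad}_j)$ where $\mathrm{bad}_j:=\{\lambda_{\min}(\sum_{s\in\tilde{\mathcal T}_{j,t}}u_su_s')<\lambda_1\Delta_0^2\sqrt{\tilde q_j t}/8\}$, and then recombine using that $\{j_t=j\}_{j\in[m]}$ partitions the sample space, so the event in the lemma is exactly $\bigcup_j(\{j_t=j\}\cap\mathrm{bad}_j)$. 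Second, the conditional variance of the perturbation that feeds the Fisher information, $\omega_{i,s}:=\mathbb{E}[\Delta_{i,s}^2\mid\mathcal F_{s-1}]=\Delta_0^2\,\tilde T_{\hat{\mathcal N}_{i,s},s-1}^{-1/2}$, is only as large as needed once the estimated neighborhoods are correct, so this control must be imported from Lemma~\ref{lemma:hat_N_eq_N} and bounds (\ref{eq:est_bound_1})--(\ref{eq:est_bound_2}).

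\textbf{Step 1 (predictable quadratic variation).} Writing $\sum_{s\in\tilde{\mathcal T}_{j,t}}u_su_s'=\sum_{s=1}^{t}\mathbbm 1(i_s\in\mathcal N_j)u_su_s'$, I would invoke Lemma~\ref{lemma:fisher_info} (which uses Assumption~A-\ref{assumption:stochastic}) to get $\mathbb{E}[\mathbbm 1(i_s\in\mathcal N_j)u_su_s'\mid\mathcal F_{s-1}]\succeq\lambda_1\big(\sum_{i\in\mathcal N_j}q_i\omega_{i,s}\big)I$, noting that the coefficient is $\mathcal F_{s-1}$-measurable. On the $\mathcal F_{s-1}$-measurable event $\mathcal D_{j,s}:=\{\hat{\mathcal N}_{i,s}=\mathcal N_i\ \text{for all}\ i\}\cap\{\tilde T_{j,s-1}\le 2\tilde q_j(s-1)\}$, every $i\in\mathcal N_j$ shares $\tilde T_{\hat{\mathcal N}_{i,s},s-1}=\tilde T_{j,s-1}$, so that coefficient is at least $\tilde q_j\Delta_0^2(2\tilde q_j s)^{-1/2}=\Delta_0^2\sqrt{\tilde q_j}/\sqrt{2s}$. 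Since $t>2\bar t$ gives $\lfloor t/2\rfloor\ge\bar t$, I would restrict the sum to $s\in(\lfloor t/2\rfloor,t]$, where $s-1\ge\bar t\ge\bar t_1$; then the neighborhood part of $\mathcal D_{j,s}$ holds with probability $\ge 1-2n/s^2$ (Lemma~\ref{lemma:hat_N_eq_N} with (\ref{eq:est_bound_1})--(\ref{eq:est_bound_2}), union over $[n]$), and the $\tilde T$ part holds for all $j,s$ at once off an event of probability $\le n/t^2$ (Lemma~\ref{lemma:len_tilde_TjT} with tolerance parameter $\Delta=n/t^2$, whose width $\tilde D(s)=\sqrt{s\log(2/\Delta)}$ is negligible against $\tilde q_j s$ once $s>\bar t_1$). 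Using $\sum_{s=\lfloor t/2\rfloor+1}^{t}s^{-1/2}\ge\int_{t/2}^{t}x^{-1/2}\,dx=2(1-2^{-1/2})\sqrt t\ge\sqrt t/2$, on the intersection of these events I get $\lambda_{\min}\big(\sum_{s=\lfloor t/2\rfloor+1}^{t}\mathbb{E}[\mathbbm 1(i_s\in\mathcal N_j)u_su_s'\mid\mathcal F_{s-1}]\big)\ge\mu_j:=\lambda_1\Delta_0^2\sqrt{\tilde q_j t}/4$.

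\textbf{Step 2 (matrix concentration and assembly).} With the predictable variation pinned down, I would apply the matrix Chernoff bound of \citet{tropp2011user} (exactly as in Lemma~\ref{lemma:emp_fisher_info}, with $\zeta=1/2$; summands have top eigenvalue $\le R=2+\bar p^2$): conditionally on the variation's minimum eigenvalue being $\ge\mu_j$, the partial sum over $s\in(\lfloor t/2\rfloor,t]$ — and hence, by monotonicity of $\lambda_{\min}$ under adding PSD matrices, the full sum over $\tilde{\mathcal T}_{j,t}$ — falls below $\mu_j/2=\lambda_1\Delta_0^2\sqrt{\tilde q_j t}/8$ with probability at most $(d+2)e^{-\Omega(\mu_j/R)}$. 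Because $\tilde q_j\ge\min_i q_i$ and $t>\bar t\ge 4\bar t_1$, one checks $\mu_j/R=\Omega(\log(T(d+2)))$, so this tail is $\le 1/t^2$. To finish, I would work on the single event $\mathcal G$ defined as the intersection over $s\in(\lfloor t/2\rfloor,t]$ of the neighborhood-correctness events together with the global $\tilde T$-event; its complement has probability at most $\sum_{s>\lfloor t/2\rfloor}2n/s^2+n/t^2\le 5n/t$. On $\mathcal G$ the bound $\mu_j$ holds for \emph{every} cluster simultaneously, so $\bigcup_j(\{j_t=j\}\cap\mathrm{bad}_j)\cap\mathcal G\subseteq\bigcup_j(\{j_t=j\}\cap\{\text{matrix-Chernoff failure for }j\})$, which yields $\mathbb{P}(\text{event in lemma})\le\mathbb{P}(\mathcal G^c)+\sum_{j\in[m]}(d+2)e^{-\Omega(\mu_j/R)}\le 5n/t+m/t^2\le 7n/t$.

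\textbf{Expected obstacle.} The main difficulty is in the probability accounting, not the concentration step: a naive ``fix $j$, bound, union over the $m$ clusters'' double-counts the neighborhood-misidentification event and produces an unusable $O(mn/t)$. The remedy, built into the plan above, is to isolate the cluster-independent good event $\mathcal G$, observe that on $\mathcal G$ the predictable-variation lower bound holds for all clusters at once, and let only the exponentially small cluster-specific matrix-Chernoff tails be summed over $j$. A secondary subtlety — handled by phrasing the variance lower bound through the $\mathcal F_{s-1}$-measurable event $\mathcal D_{j,s}$ rather than conditioning on a trajectory-wide event — is ensuring the sequence to which the matrix inequality is applied is genuinely predictable.
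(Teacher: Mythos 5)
Your proposal is correct and follows essentially the same route as the paper's own proof: restrict attention to periods $s\in(t/2,t]$, invoke the neighborhood-correctness events from Lemma \ref{lemma:hat_N_eq_N} together with the binomial bound of Lemma \ref{lemma:len_tilde_TjT} to lower-bound the predictable Fisher information via Lemma \ref{lemma:fisher_info}, and then apply Theorem 3.1 of \citet{tropp2011user} exactly as in Lemma \ref{lemma:emp_fisher_info}. The only substantive difference is the final probability accounting: the paper conditions on $\{j_t=j\}$ and averages the per-cluster bounds with weights $\tilde q_j$ (so per-cluster failure probabilities of order $7n/t$ are averaged rather than summed), whereas you isolate a cluster-independent good event of probability $1-O(n/t)$ and union over clusters only the $O(1/t^2)$ matrix-Chernoff tails; both devices validly avoid the $O(mn/t)$ blow-up you flag, and your phrasing through the $\mathcal{F}_{s-1}$-measurable events $\mathcal{D}_{j,s}$ is, if anything, slightly more careful about predictability than the paper's.
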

\textit{Proof. }
The proof is analogous to Lemma \ref{lemma:emp_fisher_info}. Let $\mathcal{E}_{N,t}$ be the event such that $\hat{\mathcal{N}}_t=\mathcal{N}_{i_t}$, and $\tilde{\mathcal{E}}_{j,t}$ be the event such that $\tilde T_{j,t}\leq 3\tilde q_j t/2$. From our previous analysis, we know that given $t>\bar t$, $\mathcal{E}_{N,t}$ holds with probability at least $1-2n/t^2$. Also, according to Lemma \ref{lemma:len_tilde_TjT}, event $\tilde{\mathcal{E}}_{j,t}$ holds with probability at least $1-1/t^2$ given $t\geq 8\log(2T)/\min_{j\in[m]}\tilde q_j^2$ (which is satisfied by taking $t>\bar t$).

On event $\tilde{\mathcal{E}}_{j,t}$ and $\mathcal{E}_{N,s}$ for all $s\in [t/2,t]$ (which holds with probability at least $1-6n/t$), we have
\[
\begin{split}
    \lambda_{\min}\Big(\sum_{s=1}^{t}\mathbb{E}[\mathbbm{1}(j_s=j)u_su_s'|\mathcal{F}_{s-1}]\Big)\geq& \lambda_{\min}\Big(\sum_{s=t/2}^{t}\mathbb{E}[\mathbbm{1}(j_s=j)u_su_s'|\mathcal{F}_{s-1}]\Big)\geq\sum_{s=t/2}^{t}\lambda_1\Delta_0^2\tilde{q}_j(\tilde{T}_{j,s})^{-1/2}\\
    \geq&\lambda_1\Delta_0^2\frac{\sqrt{\tilde q_jt}}{4}.
\end{split}
\]
by definition of $\tilde{q}_j$ following a similar procedure as in Lemma \ref{lemma:emp_fisher_info}. 

Therefore, we have for any $t>2\bar t$,
\[
\begin{split}
&\mathbb{P}\left(\lambda_{\min}\left(\sum_{s\in\tilde{\mathcal{T}}_{j_t,t}}u_su_s'\right)< \frac{\lambda_1\Delta_0^2  \sqrt{\tilde{q}_{j_t} t }}{8}  \right)\\
=&\sum_{j\in[m]}\mathbb{P}\left(\lambda_{\min}\left(\sum_{s\in\tilde{\mathcal{T}}_{j_t,t}}u_su_s'\right)< \frac{\lambda_1\Delta_0^2  \sqrt{\tilde{q}_{j_t} t }}{8}  \Bigg|j_t=j\right)\mathbb{P}(j_t=j)\\
=&\sum_{j\in[m]}\mathbb{P}\left(\lambda_{\min}\left(\sum_{s\in\tilde{\mathcal{T}}_{j,t}}u_su_s'\right)< \frac{\lambda_1\Delta_0^2  \sqrt{\tilde{q}_{j} t }}{8}  \right)\tilde q_j.\\
\end{split}
\]
For each $j\in[m]$, we have
\[
\begin{split}
    &\mathbb{P}\left(\lambda_{\min}\left(\sum_{s\in\tilde{\mathcal{T}}_{j,t}}u_su_s'\right)< \frac{\lambda_1\Delta_0^2  \sqrt{\tilde{q}_{j} t }}{8}  \right)\\
    \leq & \mathbb{P}\left(\lambda_{\min}\left(\sum_{s\in\tilde{\mathcal{T}}_{j,t}}u_su_s'\right)< \frac{\lambda_1\Delta_0^2  \sqrt{\tilde{q}_{j} t }}{8},\bigcup_{s\in [t/2,t]}(\mathcal{E}_{N,t}\cup\tilde{\mathcal{E}}_{j,t}) \right)+\frac{6n}{t}\\
=&\mathbb{P}\left(\lambda_{\min}\left(\sum_{s\in\tilde{\mathcal{T}}_{j,t}}u_su_s'\right)< \frac{\lambda_1 \Delta_0^2 \sqrt{\tilde{q}_j t }}{8},\lambda_{\min}\left(\sum_{s\in\tilde{\mathcal{T}}_{j,t}}\mathbb{E}[u_su_s'|\mathcal{F}_{s-1}]\right)\geq  \frac{\lambda_1 \Delta_0^2 \sqrt{\tilde{q}_j t } }{4},\bigcup_{s\in [t/2,t]}(\mathcal{E}_{N,t}\cup\tilde{\mathcal{E}}_{j,t}) \right)\\
&+\frac{6n}{
t}
\leq \frac{7n}{t},
\end{split}
\]
where the first inequality is from the probability of the complement of $\bigcup_{s\in [t/2,t]}(\mathcal{E}_{N,t}\cup\tilde{\mathcal{E}}_{j,t})$, and the last inequality is by Theorem 3.1 in \citet{tropp2011user}, and we take
\[
t> \left(\frac{8R\log(2(d+2)T)}{\lambda_1\Delta_0^2  \min_{j\in[m]}\sqrt{\tilde q_j}}\right)^2.
\]
Since $\bar t>\left({8R\log(2(d+2)T)}/({\lambda_1\Delta_0^2  \min_{j\in[m]}\sqrt{\tilde q_j}})\right)^2$ by definition, we complete the proof.
$\square$

{
\section{Different $\theta_i$ for the Same Cluster}\label{ap_sec:relax_cluster}
%As mentioned in Remark 7 in Section \ref{sec:policy_and_results}, 
In this section we present the technical lemmas in proving the regret of the modified CSMP when parameters $\theta_i$ within the same cluster are different. Note that we assume $||\theta_{i_1}-\theta_{i_2}||_2\leq \gamma_0$ for any $i_1,i_2$ in any cluster $\mathcal{N}_j$.

The first result is a corollary of Lemma \ref{lemma:hat_N_eq_N}. 

\begin{corollary}\label{cor:hat_N_eq_N}
Suppose for all $i\in [n]$ it holds that $||\hat{\theta}_{i,t-1}-\theta_{i}||_2\leq {B}_{i,t-1}$ and ${B}_{i,t-1}\in(\gamma_0/2, \gamma/6)$. Then (with $\gamma>3\gamma_0$) we have that
$\hat{\mathcal{N}}_t=\mathcal{N}_{i_t}.
$
Moreover, if we only have ${B}_{i,t-1}<\gamma/6$, we have $\hat{\mathcal{N}}_t\subset \mathcal{N}_{i_t}$.
\end{corollary}
\textit{Proof. }
For the first part of the corollary, the proof is almost identical to Lemma \ref{lemma:hat_N_eq_N}. First of all, for $i_1,i_2\in[n]$, if they belong to different clusters and ${B}_{i_1,t-1}+{B}_{i_2,t-1}<\gamma/3$, we must have $||\hat{\theta}_{i_1,t-1}-\hat{\theta}_{i_2,t-1}||_2>2{B}_{i_1,t-1}+2{B}_{i_2,t-1}$ because
\[
\begin{split}
\gamma\leq & ||\theta_{i_1}-\theta_{i_2}||_2
\leq  ||\theta_{i_1}-\hat\theta_{i_1,t-1}||_2+||\hat\theta_{i_1,t-1}-\hat\theta_{i_2,t-1}||_2+||\hat\theta_{i_2,t-1}-\theta_{i_2}||_2\\
\leq & {B}_{i_1,t-1}+||\hat\theta_{i_1,t-1}-\hat\theta_{i_2,t-1}||_2+{B}_{i_2,t-1}
<  \gamma/3+||\hat\theta_{i_1,t-1}-\hat\theta_{i_2,t-1}||_2,
\end{split}
\]
which implies that $||\hat\theta_{i_1,t-1}-\hat\theta_{i_2,t-1}||_2>2\gamma/3>2{B}_{i_1,t-1}+2{B}_{i_2,t-1}$.

On the other hand, if $||\hat{\theta}_{i_1,t-1}-\hat{\theta}_{i_2,t-1}||_2>2{B}_{i_1,t-1}+2{B}_{i_2,t-1}$, we must have $i_1,i_2$ belongs to different clusters because
\[
\begin{split}
2{B}_{i_1,t-1}+2{B}_{i_2,t-1}< & ||\hat{\theta}_{i_1,t-1}-\hat{\theta}_{i_2,t-1}||_2
\leq  ||\theta_{i_1}-\hat\theta_{i_1,t-1}||_2+||\theta_{i_1,t-1}-\theta_{i_2,t-1}||_2+||\hat\theta_{i_2,t-1}-\theta_{i_2}||_2\\
\leq & {B}_{i_1,t-1}+||\theta_{i_1,t-1}-\theta_{i_2,t-1}||_2+{B}_{i_2,t-1}\,\
\end{split}
\]
which implies $||\theta_{i_1,t-1}-\theta_{i_2,t-1}||_2>{B}_{i_1,t-1}+{B}_{i_2,t-1}\geq \gamma_0$ (where the second inequality is because $B_{i,t-1}\geq \gamma_0/2$), i.e., they belong to different clusters.

Therefore, if $i\in \hat{\mathcal{N}}_t$, i.e., $||\hat{\theta}_{i_t,t-1}-\hat{\theta}_{i,t-1}||\leq 2{B}_{i_t,t-1}+2{B}_{i,t-1}$, we must have that $i\in \mathcal{N}_{i_t}$ as well or ${B}_{i_t,t-1}+{B}_{i,t-1}\geq \gamma/3$ (which is impossible by our assumption that $B_{i,t-1}<\gamma/6$). 

On the other hand, if $i\in \mathcal{N}_{i_t}$, then we must have $||\hat{\theta}_{i_t,t-1}-\hat{\theta}_{i,t-1}||\leq 2{B}_{i_t,t-1}+2{B}_{i,t-1}$, which implies that $i\in \hat{\mathcal{N}}_{t}$ as well.
Summarizing, we have shown that $\hat{\mathcal{N}}_{i_t}=\mathcal{N}_{i_t}$.

For the second part, suppose this is not true. That is, there is some $i\in\hat{\mathcal{N}}_t$ with $i\not\in\mathcal{N}_{i_t}$, which implies $\|\theta_{i_t}-\theta_i\|\geq \gamma$ and $||\hat\theta_{i_1,t-1}-\hat\theta_{i_2,t-1}||_2\leq 2{B}_{i_t,t-1}+2{B}_{i,t-1}$. Note that
\[
\begin{split}
&\gamma\leq||\theta_{i_t}-\theta_{i}||_2
\leq  ||\theta_{i_t}-\hat\theta_{i_t,t-1}||_2+||\hat\theta_{i_t,t-1}-\hat\theta_{i,t-1}||_2+||\hat\theta_{i,t-1}-\theta_{i}||_2\\
\leq & {B}_{i_t,t-1}+||\hat\theta_{i_t,t-1}-\hat\theta_{i,t-1}||_2+{B}_{i,t-1}
<  \gamma/3+||\hat\theta_{i_1,t-1}-\hat\theta_{i_2,t-1}||_2;
\end{split}
\]
Thus we have $||\hat\theta_{i_1,t-1}-\hat\theta_{i_2,t-1}||_2>2\gamma/3$ and we have ${B}_{i_t,t-1}+{B}_{i,t-1}>\gamma/3$, contradicting with ${B}_{i,t-1}< \gamma/6$ for all $i$.
$\square$

Suppose in some time period $t$, product $i_t$ is in some neighborhood $\hat{\mathcal{N}}_t$ which satisfies $||\theta_{i_1}-\theta_{i_2}||_2\leq \tilde\gamma_0$ with some constant $\tilde\gamma_0$ for any $i_1,i_2\in\hat{\mathcal{N}}_t$. Let $\tilde\theta_{i_t,t}$ denote the estimated parameter by clustering all data in neighborhood $\hat{\mathcal{N}}_t$. The next lemma  measures the confidence bound of $\tilde\theta_{i_t,t}$ compared with any true parameter $\tilde\theta_i\in\hat{\mathcal{N}}_t$. 

\begin{lemma}\label{lemma:dummy_error}
When $T_{i,t}\geq q_it/2$ for all $i\in [N]$, we have for any $i\in\hat{\mathcal{N}}_t$,
\[
||\tilde\theta_{i_t,t}-{\theta_i}||_2\leq \frac{2\sqrt{(d+2)\log\left(1+\frac{tR^2}{d+2}\right)+4\log t}}{l_1\sqrt{\lambda_{\min}(V_{\hat{\mathcal{N}_t}})}}+\frac{L_1R^2\tilde\gamma_0\tilde q_{\hat{\mathcal{N}}_t}t}{l_1{\lambda_{\min}(V_{\hat{\mathcal{N}_t}})}}+\frac{2L}{\sqrt{\lambda_{\min}(V_{\hat{\mathcal{N}_t}})}}
\]
with probability at least $1-O(1/t^2)$, where $V_{\hat{\mathcal{N}_t}}=I+\sum_{t\in\mathcal{T}_{\hat{\mathcal{N}}_t}}u_su_s'$ and $\tilde q_{\hat{\mathcal{N}}_t}=\sum_{i\in\hat{\mathcal{N}}_t}q_i$.
\end{lemma}
\textit{Proof. }
The proof is quite similar to Lemma \ref{lemma:single_est_error}. 
We drop the index $i_t$ for convenience.
Note that for an arbitrary parameter $\phi\in\Theta$, since $\tilde\theta_{t}$ is the MLE, we have
\begin{equation}\label{eq:SC_MLE}
    \begin{split}
0&\geq \sum_{s}l_{s}(\tilde\theta_{t})-\sum_s l_{s}(\phi)=\sum_{s}\nabla l_s(\phi)'(\tilde\theta_{t}-\phi)+\frac{1}{2}\sum_s\dot{\mu}(u_s'\bar{\phi}_{t})(u_{s}'(\tilde\theta_{t}-\phi))^2\\
&+\frac{l_1}{2} ||\tilde\theta_{t}-\phi||_2^2-\frac{l_1}{2} ||\tilde\theta_{t}-\phi||_2^2\geq  \sum_{s}\nabla l_s(\phi)'(\tilde\theta_{t}-\phi)+\frac{l_1}{2}||\tilde\theta_{t}-{\phi}||_{V_{\hat{\mathcal{N}_t}}}^2-2l_1L^2,
\end{split}
\end{equation}
where the first inequality is from the optimality of $\tilde\theta_{t}$, and $\overline{\phi}_{t}$ is a point on line segment between $\tilde\theta_{t}$ and $\phi$.

Now we consider $\nabla l_s(\phi)$. By Taylor's theorem,
$
\nabla l_s(\phi)=\nabla l_s(\theta_s)+\nabla^2 l_s(\check\theta_s)'(\phi-\theta_s),
$
where $\theta_s$ is the true parameter at time $s$, and $\check{\theta}_s$ is a point between $\phi$ and $\theta_s$. As a result,
\begin{equation}\label{eq:SC_nabla}
    \nabla l_s(\phi)=-\epsilon_s u_s+\dot{\mu}(u_s'\check\theta_s)u_su_s'(\phi-\theta_s).
\end{equation}
Since $\phi\in\Theta$ is an arbitrary vector, we can let $\phi=\theta_i$ for any $i\in\mathcal{N}_j$. Combining (\ref{eq:SC_MLE}) and (\ref{eq:SC_nabla}), we have that with probability at least $1-1/t^2$.
\[
\begin{split}
    \frac{l_1}{2}||\tilde\theta_{t}-{\theta_i}||_{V_{\hat{\mathcal{N}_t}}}^2\leq & \sum_s\epsilon_su_s'(\tilde\theta_{t}-\theta_i)-\sum_s\dot{\mu}(u_s'\check\theta_s)(\theta_i-\theta_s)'u_su_s'(\tilde\theta_{t}-\phi)+2l_1L^2\\
    \leq & ||\sum_s\epsilon_su_s||_{V_{\hat{\mathcal{N}_t}}^{-1}}||\tilde\theta_{t}-{\theta_i}||_{V_{\hat{\mathcal{N}_t}}}+\sum_s ||\dot{\mu}(u_s'\check\theta_s)u_su_s'(\theta_i-\theta_s)||_{V_{\hat{\mathcal{N}_t}}^{-1}}||\tilde\theta_{t}-{\theta_i}||_{V_{\hat{\mathcal{N}_t}}}+2l_1L^2\\
    \leq & \sqrt{(d+2)\log\left(1+\frac{tR^2}{d+2}\right)+4\log t}||\tilde\theta_{t}-{\theta_i}||_{V_{\hat{\mathcal{N}_t}}}\\
    &+ \frac{\sum_s||\dot{\mu}(u_s'\check\theta_s)u_su_s'(\theta_i-\theta_s)||_2||\tilde\theta_{t}-{\theta_i}||_{V_{\hat{\mathcal{N}_t}}}}{\sqrt{\lambda_{\min}(V_{\hat{\mathcal{N}_t}})}}+2l_1L^2\\
    \leq &  \sqrt{(d+2)\log\left(1+\frac{tR^2}{d+2}\right)+4\log t}||\tilde\theta_{t}-{\theta_i}||_{V_{\hat{\mathcal{N}_t}}}+\frac{L_1R^2\tilde\gamma_0\tilde q_{\hat{\mathcal{N}}_t}t||\tilde\theta_{t}-{\theta_i}||_{V_{\hat{\mathcal{N}_t}}}}{2\sqrt{\lambda_{\min}(V_{\hat{\mathcal{N}_t}})}}+2l_1L^2,
\end{split}
\]
where the second inequality is from Theorem 1 in \citet{abbasi2011improved} and the last inequality is because ${T}_{i,t}\geq  q_{i}t/2$. By some simple algebra, above inequality implies that
\[
||\tilde\theta_{t}-{\theta_i}||_{V_{\hat{\mathcal{N}_t}}}\leq \frac{2\sqrt{(d+2)\log\left(1+\frac{tR^2}{d+2}\right)+4\log t}}{l_1}+\frac{L_1R^2\tilde\gamma_0\tilde q_{\hat{\mathcal{N}}_t}t}{l_1\sqrt{\lambda_{\min}(V_{\hat{\mathcal{N}_t}})}}+2L.
\]
This inequality further implies that
\begin{equation*}\label{eq:SC_final1}
    ||\tilde\theta_{t}-{\theta_i}||_2\leq \frac{2\sqrt{(d+2)\log\left(1+\frac{tR^2}{d+2}\right)+4\log t}}{l_1\sqrt{\lambda_{\min}(V_{\hat{\mathcal{N}_t}})}}+\frac{L_1R^2\tilde\gamma_0\tilde q_{\hat{\mathcal{N}}_t}t}{l_1{\lambda_{\min}(V_{\hat{\mathcal{N}_t}})}}+\frac{2L}{\sqrt{\lambda_{\min}(V_{\hat{\mathcal{N}_t}})}}
\end{equation*}
and we are done.
$\square$

The previous lemma shows that the estimation error is basically dependent on the value of $\lambda_{\min}(V_{\hat{\mathcal{N}_t}})$, which is described by the following lemma.
\begin{lemma}\label{lemma:min_eig_hatNt}
The value of $\lambda_{\min}(V_{\hat{\mathcal{N}_t}})$ satisfies the following.
\begin{enumerate}[(a)]
    \item If $t\geq \Omega(\bar t)$ and $t\leq k_3/(\max_i q_i^2 \gamma_0^4)$ for some constant $k_3$ (so that $k_3/(\max_i q_i^2 \gamma_0^4)\geq\Omega(\bar t)$ without loss of generality), $\lambda_{\min}(V_{\hat{\mathcal{N}_t}})\geq {\lambda_1\Delta_0^2 \sqrt{\tilde{q}_{j_t}t }}/{8}$ with probability at least $1-O(n/t)$.
    
    \item If $t\geq k_3/(\max_i q_i^2 \gamma_0^4)$, $\lambda_{\min}(V_{\hat{\mathcal{N}_t}})\geq {\lambda_1\Delta_0^2\sqrt{\tilde{q}_{\hat{\mathcal{N}}_t}/\tilde{q}_{j_t}} \sqrt{\tilde{q}_{\hat{\mathcal{N}}_t}t }}/{8}$ with probability at least $1-O(n/t)$.
\end{enumerate}
\end{lemma}
\textit{Proof.}
For part (a), it follows from the same procedure as Lemma \ref{lemma:cluster_emp_fisher_info}. The reason we want $t\leq O(1/(\max_i q_i^2 \gamma_0^4))$ is to guarantee $B_{i,s}> \gamma_0/2$ for $s\leq t$ so that we will have $\hat{\mathcal{N}}_s=\mathcal{N}_{i_s}$ by Corollary \ref{cor:hat_N_eq_N}. 

For part (b), with probability at least $1-O(n/t)$, we have $\hat{\mathcal{N}}_s\subset\mathcal{N}_{i_s}$ for any $s\geq k_3/(\max_i q_i^2 \gamma_0^4)$ according to Corollary \ref{cor:hat_N_eq_N}. Thus $\tilde T_{\hat{N}_s,s}\in [T_{i_s,s},\tilde T_{j_s,s}]$ and following the proof of Lemma \ref{lemma:cluster_emp_fisher_info}, with probability at least $1-O(n/t)$, $\lambda_{\min}(V_{\hat{\mathcal{N}_t}})\geq {\lambda_1\Delta_0^2\sqrt{\tilde{q}_{\hat{\mathcal{N}}_t}/\tilde{q}_{j_t}} \sqrt{\tilde{q}_{\hat{\mathcal{N}}_t}t }}/{8}$.
$\square$

The implication of the previous lemmas is the following. When $t\geq\Omega(\bar t)$ and $t\leq k_3/\max_i q_i^2 \gamma_0^4$, we have most of the time $\hat{\mathcal{N}}_t=\mathcal{N}_{i_t}$, and thus everything basically resembles the main setting of this paper. However, as $t$ keeps growing, we start to have only $\hat{\mathcal{N}}_t\subset\mathcal{N}_{i_t}$ according to Corollary \ref{cor:hat_N_eq_N}. That is, the $n$ products are no longer clustered into the $m$ clusters (with high probability) as we want. Therefore, the regret after $t\geq k_3/\max_i q_i^2 \gamma_0^4$ has to be analyzed more carefully. 
Now we provide the proof (sketch) of the theorem of regret of modified algorithm.

\begin{theorem}\label{thm:mod_CSMP}
The expected 
regret of the modified algorithm of CSMP is
\[ R(T)=
O\left(\frac{d^2 \log^2(dT)}{\min_{i\in[n]}q_i^2 }+d\log{T}\sqrt{\tilde m(T)T}+\Gamma(T)\right)
\]
where $\tilde m(T)$ and $\Gamma(T)$ are functions of $T$. In particular, when $T\leq k_3/\max_i q_i^2 \gamma_0^4$, we have $\tilde m(T)=m,\Gamma(T)=\min\{\gamma_0^{2}\sum_j\tilde q_j^2 T^2,T\}$; as $T\rightarrow\infty$, $\tilde m\rightarrow n$, $\Gamma(T)\rightarrow \bar\Gamma$ where $\bar\Gamma$ is a constant depending on the minimum gap between $\theta_{i_1},\theta_{i_2}$ within any of the same neighborhood.
\end{theorem}
\textit{Sketch of the Proof. }
Note that in this proof, we will calculate everything on all nice events (e.g., $||\hat\theta_{i,t}-\theta_i||_2\leq B_{i,t-1}$ for all $i\in\mathcal{N}$) as in the proof of Theorem \ref{thm:main} which hold with high probability, as the regret on their complement can be controlled to at most $O(n)$. Now let the results in Lemma \ref{lemma:dummy_error} hold. If $T\leq k_3/\max_i q_i^2 \gamma_0^4$, Corollary \ref{cor:hat_N_eq_N} shows that $\hat{\mathcal{N}}_t=\mathcal{N}_{i_t}$ for all $t\geq \Omega(\bar t)$. Thus,
combining part (a) of Lemma \ref{lemma:min_eig_hatNt} and Lemma \ref{lemma:dummy_error}, 
\begin{equation}\label{eq:ext_ineq}
    ||\theta_{i_t}-\tilde{\theta}_{j_t,t}||_{2}\leq O(\sqrt{d\log T}/(\tilde q_{j_t}t)^{1/4}+\min\{\gamma_0\sqrt{\tilde q_{j_t}t},1\}),
\end{equation}
where the part $\min\{\gamma_0\sqrt{\tilde q_{j_t}t},1\}$ is because any estimated parameter is bounded. 
Thus, to bound the regret when $T\leq k_3/\max_i q_i^2 \gamma_0^4$, the proof is almost identical to Theorem \ref{thm:main} so we neglect most part of the proof. We want to bound
$
r_t(p^*_t)-r_t(p_t)=O(r_t(p^*_t)-r_t(p_t')+\Delta_t^2).
$
Note that $\Delta_t^2=O\left(\tilde{T}_{\Hat{\mathcal{N}}_t,t}^{-1/2}\right)$, thus for the part of regret $\sum_t O\left(\tilde{T}_{\Hat{\mathcal{N}}_t,t}^{-1/2}\right)$, it is bounded as in Theorem \ref{thm:main}. 

To bound $r_t(p^*_t)-r_t(p_t')$, note that we have
$
r_t(p^*_t)-r_t(p_t')\leq 
O\left(
{||\theta_{i_t}-\tilde{\theta}_{j_t,t}||_{2}^2}
\right).
$
Thus combining (\ref{eq:ext_ineq}) and sum over $t$, we have this part of the regret is at most $O(d\log T\sqrt{mT}+\min\{\gamma_0^2 \sum_j\tilde q_j^2 T^2,T\})$. 

If $T>k_3/\max_i q_i^2 \gamma_0^4$, for any $t\geq k_3/\max_i q_i^2 \gamma_0^4$, since Corollary \ref{cor:hat_N_eq_N} shows that $\hat{\mathcal{N}}_t\subset\mathcal{N}_{i_t}$. Thus at any time $t$, we can take any subset of all estimated neighborhood of all $i$ (i.e., $\{\hat{\mathcal{N}}_{i,t}:i\in[n]\}$) whose union is equal to $\mathcal{N}$. Without loss of generality, let $\tilde m_t$ denote the number of such neighborhoods (denoted by $\{\hat{\mathcal{N}}_{[k],t}:k\in[\tilde m_t]\}$) and $\tilde\gamma_{0,t}$ denote the maximum distance between any two parameters within any $\hat{\mathcal{N}}_{[k],t}$ (for instance, when all $\hat{\mathcal{N}}_{t}=\mathcal{N}_{i_t}$,  we have $\tilde m_t=m,\tilde\gamma_{0,t}=\gamma_0$). Obviously, we have $\tilde m_t\in[m,n]$ and $\tilde\gamma_{0,t}\leq\gamma_0$. Then we basically follow the same procedure as earlier, and the regret in each time $t\geq k_3/\max_i q_i^2 \gamma_0^4$ is at most 
\begin{equation}\label{eq:ext_per_period_reg}
\begin{split}
    &O\left(\mathbb{E}\left[d\log T\sum_{k=1}^{\tilde m_t}\sqrt{\tilde{q}_{j_{[k]}}/\tilde q_{\hat{\mathcal{N}}_{[k],t}}} \sqrt{\tilde q_{\hat{\mathcal{N}}_{[k],t}}/t}+\min\left\{\tilde\gamma_{0,t}^2{\sum_{k=1}^{\tilde m_t}\tilde q_{\hat{\mathcal{N}}_{[k],t}}^2t},1\right\}+\sum_{k=1}^{\tilde m_t}\sqrt{\tilde q_{\hat{\mathcal{N}}_{[k],t}}/t}\right]\right)\\
    =&O\left(\mathbb{E}\left[d\log T\sum_{k=1}^{\tilde m_t}\sqrt{\tilde{q}_{j_{[k]}}/t}+\min\left\{\tilde\gamma_{0,t}^2{\sum_{k=1}^{\tilde m_t}\tilde q_{\hat{\mathcal{N}}_{[k],t}}^2t},1\right\}+\sqrt{\tilde m_t/t}\right]\right)\\
\end{split}
\end{equation}
where $j_{[k]}$ denote the index of the true neighborhood that $\hat{\mathcal{N}}_{[k],t}\subset \mathcal{N}_{j_{[k]}}$, and the expectation is taken over the realization of all neighborhoods $\{\hat{\mathcal{N}}_{i,t}:i\in[n]\}$. Thus, we can choose the $\tilde m_t$ neighborhood so that $\sum_{k=1}^{\tilde m_t}\sqrt{\tilde{q}_{j_{[k]}}}$ is minimized, and denote $\tilde m$ as a number so that $\sqrt{\tilde m}\geq \mathbb{E}\Big[\sum_{k=1}^{\tilde m_t}\sqrt{\tilde{q}_{j_{[k]}}}\Big]$ and $\tilde m\geq\mathbb{E}[\tilde m_t]$ for all $t$. Thus (\ref{eq:ext_per_period_reg}) is bounded above as 
\[
O\left(d\log T\sqrt{\tilde m/t}+\mathbb{E}\left[\min\left\{\tilde\gamma_{0,t}^2{\sum_{k=1}^{\tilde m_t}\tilde q_{\hat{\mathcal{N}}_{[k],t}}^2t},1\right\}\right]+\sqrt{\tilde m/t}\right),
\]
and we are done with the expression of regret by summing over all $t$. Note that as $T\rightarrow\infty$, it is obvious that each $\hat{\mathcal{N}}_{i,t}$ becomes $\{i\}$ itself when $B_{i,t-1}$ is sufficiently small compared with the minimum gap between any two parameters within the same neighborhood. Thus $\tilde\gamma_{0,t}$ becomes 0 (implying $\Gamma(T)$ is bounded) and $\tilde m\rightarrow n$ as $\tilde m_t\rightarrow n$. 
$\square$}

{
\section{Price Dependency of the Real Dataset}\label{app_sec:price_dependency}
In this section, we conduct a data analysis of the real dataset in Section \ref{subsec:real_simulation}. The purpose is to show that the demand of each product mainly depends on its own price. % instead of other prices. 
For illustration of the effect of price, we exclude the features and adopt a simple logistic regression model for each product, which is defined as, if we make the dependency on index $i$ implicit, 
\[
\mathbb{E}[d_t(p)]=\frac{1}{1+e^{-\alpha-\beta p}}
\]
where $p$ is the price of interest, which can be the product's own price or any other product's price.

\begin{figure}[t]
    \centering
    \includegraphics[scale=0.7]{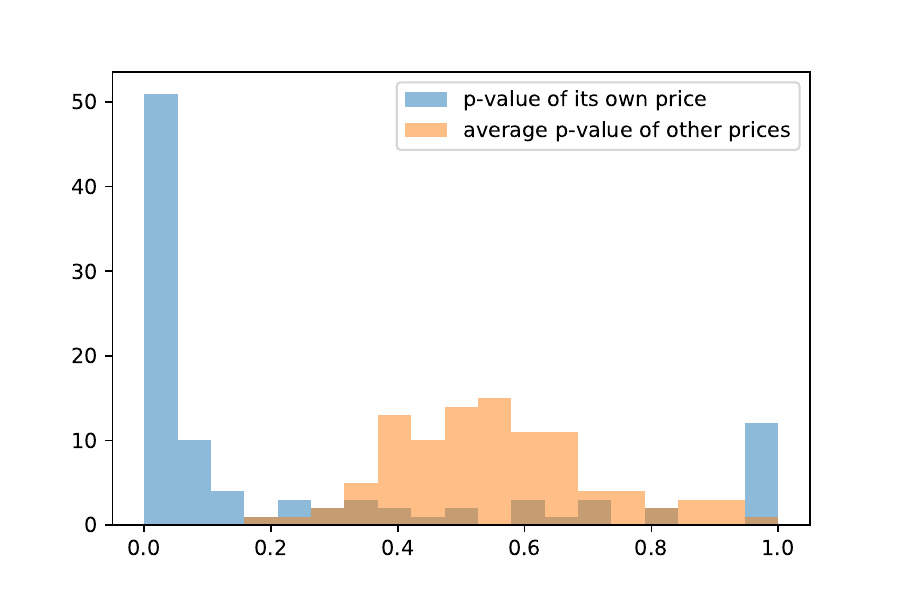}
    \caption{P-value of own price versus other prices.}
    \label{fig:p_value}
\end{figure}
 To test the significance of each price, we evaluate the p-value of the hypothesis test with 
\[
H_0:\beta=0\quad \text{VS} \quad H_a:\beta\neq 0.
\]
First, for each product $i$, we calculate the p-value of its own price and the average p-value of other prices, and results are summarized in Figure \ref{fig:p_value}, which is a histogram of the two p-values of all products. From this histogram, we can clearly see that most products have significantly lower p-value of its own price than other prices, showing that the demand is mainly dependent on its own price.

Of course, this experiment mainly shows that overall other products' prices do not have significant impact on each product's demand, but we still do not know how specifically price of product $i$ affects demand of product $i'$. Next, we will investigate one by one of each product's price on other products. For instance, fix product $i$, we calculate the p-value of price $p_{i,t}$ on the demand of any other product $i'\neq i$, and then count how many products $i'\neq i$ that price of product $i$ has significant (i.e., p-value$<$0.05) impact on. 

Table \ref{tab:significant} summarize this result. On average, the price of each product only significantly affects the demand of 9.44 other products, compared with the fact that number of products, whose demand is significantly affected by its own price, is equal to 51. Note that it is not surprising some products' demands are not significantly affected by their prices because of the data scarcity due to low sales and popularity. For the purpose of simulation in Section \ref{subsec:real_simulation}, we will still fit the data of these 100 low-sale products as it is for illustrative purposes. 
\begin{table}[h]
\centering
\begin{tabular}{l|llll}
\hline
\hline
                              & Mean & Standard Deviation & Maximum & Minimum \\
\hline
Number of significant p-value &  9.44    &     4.49               &    22     &    2   \\
\hline
\end{tabular}
\caption{Number of significant p-value on demand of other products.}
\label{tab:significant}
\end{table}

}

%%%%%%%%%%%%%%%%%
\end{document}